\renewcommand{\paragraph}[1]{\bigskip\noindent\emph{#1}}
\renewcommand{\leq}{\leqslant}
\renewcommand{\geq}{\geqslant}
\newcommand{\Kinf}{\mathcal{K}_{\inf}}
\newcommand{\pand}{\;\; \text{\small and} \;\;}
\newcommand{\KL}{\mathrm{KL}}
\newcommand{\kl}{\mathrm{kl}}
\newcommand{\Ber}{\mathrm{Ber}}
\renewcommand{\P}{\mathbb{P}}
\newcommand{\E}{\mathbb{E}}
\newcommand{\Ed}{\mathop{\mathrm{E}}}
\newcommand{\pset}{\mathcal{P}[0,1]}
\newcommand{\N}{\mathbb{N}}
\newcommand{\1}[1]{\mathds{1}_{ \left\{ #1  \right\} \, } }
\newcommand{\ind}[1]{\mathds{1}_{#1}}
\newcommand{\bigun}[1]{\mathds{1}_{ \bigl\{ #1  \bigr\} \, } }
\DeclareMathOperator*{\argmax}{arg\,max}
\newcommand{\unu}{\underline{\nu}}
\renewcommand{\O}{\mathcal{O}}
\renewcommand{\o}{\mathrm{o}}
\renewcommand{\hat}{\widehat}
\let\log\relax
\DeclareMathOperator{\log}{ln}
\newcommand{\Bp}{\text{B}^{+}}
\newcommand{\e}{\mathrm{e}}
\renewcommand{\epsilon}{\varepsilon}
\renewcommand{\d}{\,\mathrm{d}}
\newcommand{\cE}{\mathcal{E}}
\newcommand{\cN}{\mathcal{N}}
\newcommand{\cF}{\mathcal{F}}
\newcommand{\eqdef}{\stackrel{\mbox{\scriptsize \rm def}}{=}}
\newcommand{\defeq}{\eqdef}
\newcommand{\wt}{\widetilde}
\newcommand{\dmes}{\mathrm{d}}
\newcommand{\moss}{\mathrm{\textsc{m}}}
\newcommand{\adamoss}{\mathrm{\textsc{m-a}}}
\newcommand{\KLind}{\mathrm{\textsc{kl}}}
\newcommand{\adaKLind}{\mathrm{\textsc{kl-a}}}
\newcommand{\anytime}{\mathrm{\textsc{a}}}
\newcommand{\bothmoss}{\mathrm{\textsc{gm}}}
\newcommand{\ac}{\mbox{\scriptsize \rm ac}}
\newcommand{\authorslong}{Aur{\'e}lien Garivier, H{\'e}di Hadiji, Pierre M{\'e}nard, and Gilles Stoltz}
\newcommand{\authorsshort}{Garivier, Hadiji, M{\'e}nard, and Stoltz}
\begin{document}

\title{KL-UCB-Switch: Optimal Regret Bounds for Stochastic Bandits from Both a Distribution-Dependent and a Distribution-Free Viewpoints}

\author{\name Aur{\'e}lien Garivier \email aurelien.garivier@ens-lyon.fr \\
\addr Univ. Lyon, ENS de Lyon, UMPA UMR 5669, LIP UMR 5668, Lyon, France
\AND
H{\'e}di Hadiji \email hedi.hadiji@gmail.com \\
\addr Laboratoire de math{\'e}matiques d'Orsay, Universit{\'e} Paris-Saclay, CNRS, Orsay, France
\AND
Pierre M{\'e}nard \email pierre.menard@inria.fr \\
\addr Inria Lille Nord Europe, Lille, France
\AND
\name Gilles Stoltz \email gilles.stoltz@universite-paris-saclay.fr \\
\addr Laboratoire de math{\'e}matiques d'Orsay, Universit{\'e} Paris-Saclay, CNRS, Orsay, France
}

\editor{Shie Mannor}
\maketitle

\begin{abstract}%
We consider $K$--armed stochastic bandits and consider cumulative regret
bounds up to time~$T$. We are interested in strategies
achieving \emph{simultaneously} a distribution-free regret bound
of optimal order $\sqrt{KT}$ and a distribution-dependent regret
that is asymptotically optimal, that is, matching the $\kappa\ln T$
lower bound by \citet{LaRo85} and \citet{burnetas1996optimal},
where $\kappa$ is the optimal problem-dependent constant.
This constant $\kappa$ depends on the model $\mathcal{D}$ considered (the family of possible distributions
over the arms). \citet{menard_minimax_2017} provided strategies
achieving such a bi-optimality in the parametric case
of models given by one-dimensional exponential families,
while \citet{lattimore_regret_2016,lattimore2018} did so for the family of (sub)Gaussian
distributions with variance less than~$1$.
We extend this result to the non-parametric case of all distributions
over $[0,1]$. We do so by combining
the MOSS strategy by~\cite{audibert_minimax_2009}, which
enjoys a distribution-free regret bound of optimal order $\sqrt{KT}$,
and the KL-UCB strategy by~\citet{cappe_kullbackleibler_2013},
for which we provide in passing the first analysis of an optimal distribution-dependent
$\kappa\ln T$ regret bound in the model of all distributions over $[0,1]$.
We were able to obtain this non-parametric bi-optimality result
while working hard to streamline the proofs (of previously known regret bounds
and thus of the new analyses carried out);
a second merit of the present contribution is therefore to provide
a review of proofs of classical regret bounds for
index-based strategies for $K$--armed stochastic bandits.
\end{abstract}

\begin{keywords}
$K$--armed stochastic bandits, regret bounds, distribution-dependent bounds, distribution-free bounds,
index policies
\end{keywords}

\section{Introduction, Brief Literature Review, and Main Achievements}
\label{sec:intro}

Great progress has been made, over the last decades, in the understanding of the stochastic $K$--armed bandit problem. In this simplistic and yet paradigmatic sequential decision model, an agent samples at each step $t\in\N^*$ one out of $K$ independent sources of randomness, and receives the corresponding outcome as a reward.
The most investigated challenge is to minimize the regret, which is defined as the difference between the cumulated rewards obtained by the agent and by an {oracle} knowing in hindsight the distribution with largest expectation.

After Thompson's seminal paper (\citealp{thompson:biom33}) and Gittins' Bayesian approach in the 1960s, Lai and his co-authors wrote in the 1980s a series of articles laying the foundations of a frequentist analysis of bandit strategies.
\citet{LaRo85} provided a general asymptotic lower bound, for parametric bandit models: for any reasonable strategy, the regret after $T$ steps grows at least as $\kappa\log(T)$, where $\kappa$ is an informational complexity measure of the problem, see~\eqref{eq:distrdep-LB-reminder1}.
In the 1990s, \citet{Agrawal95} and \citet{burnetas1996optimal} analyzed the UCB algorithm,
a simple procedure that picks at step $t$ the arm with the highest upper confidence bound constructed on the past observations. The same authors also extended the lower bound by Lai and Robbins to {non-parametric} models.

In the early 2000s, the much noticed contributions of \citet{auer_finite-time_2002} and~\citet{AuCBFrSc02} promoted three important ideas.
First, a bandit strategy should not address only specific statistical models, but general and non-parametric families of probability distributions, e.g., bounded distributions. (Unless stated otherwise, results discussed below hold for the model of all distributions over a known bounded
interval, e.g., $[0,1]$.)
Second, the regret analysis should not only be asymptotic, but should provide finite-time bounds (with closed-form expressions).
Third, a good bandit strategy should be competitive with respect to two concurrent notions of optimality: distribution-dependent optimality (it should reach the asymptotic lower bound of Lai and Robbins and have a regret not much larger than $\kappa\log T$) and distribution-free optimality (the maximal regret over all considered probability distributions should be of the optimal order $\sqrt{KT}$).

We now summarize and put into perspective how the ideas listed above were implemented over the years.
A note in passing is that the present contributions actually date back to \citet{v2018}.

\subsection{Literature Review}

\paragraph{Optimal finite-time distribution-free regret upper bounds.}
Classical UCB strategies enjoy finite-time distribution-free regret upper bounds of order $\sqrt{KT \ln T}$
(folklore knowledge) while strategies based on exponential weights have such bounds of order $\sqrt{KT \ln K}$,
actually holding in the more challenging setting of adversarial rewards (\citealp{AuCBFrSc02}).
A modification of UCB named MOSS was proposed by~\citet{audibert_minimax_2009} and enjoys an optimal
finite-time distribution-free regret upper bound of order $\sqrt{KT}$.

\paragraph{Optimal finite-time distribution-dependent regret upper bounds.}
The path towards such optimal bounds was longer;
optimality refers to matching the lower bound~\eqref{eq:distrdep-LB-reminder1}.

The pioneering work of Lai (and Robbins---see \citealp{LaRo85}
and \citealp{Lai87}) revolved around the derivation of asymptotic expansions of Gittins' Bayesian-optimal strategy.
These expansions for one-parameter exponential families of reward distributions suggested the introduction of upper-confidence bounds policies involving Kullback-Leibler divergence in~\citet{Lai87}. An optimal but (very) asymptotic distribution-dependent regret bound is proved therein, and the MOSS-flavor of the confidence intervals used there could already have led to $\sqrt{KT\log K}$ minimax bounds.
These strategies and asymptotic results were later extended by \citet{burnetas1996optimal} to more general families of distributions.

\citet{auer_finite-time_2002} then took a different angle and
exhibited an elegant, elementary, finite-time and non-parametric analysis of the UCB algorithm, at the price of a sub-optimal
distribution-dependent factor in the regret upper bounds (depending on the expectation gaps between distributions).
In simple settings (for example, for binary rewards or more generally, for one-dimensional exponential families),
finite-time \emph{and} optimal distribution-dependent regret upper bounds were proved by \citet{MaiMunSto2011KL} and
\citet{garivier_kl-ucb_2011}, based on specific versions of the KL-UCB algorithm recalled in
Section~\ref{sec:descralgo}. Later on, \citet{bayesUCB} with the BayesUCB algorithm or \citet{TS13} with Thompson sampling obtained similar results.

The results of most interest for the present article (i.e., finite-time, optimal and non-parametric distribution-dependent regret bounds)
were initiated by Honda and Takemura with an algorithm called IMED (see~\citealp{honda_non-asymptotic_2015}
and references to earlier works of the authors therein)
and followed by \citet{cappe_kullbackleibler_2013} for the KL-UCB algorithm.
The analysis for IMED was provided for all (semi-)bounded distributions, while
the analysis for KL-UCB was restricted to some classes of distributions
(e.g., bounded distributions with finite supports). However, the regret bounds
for IMED are still somewhat asymptotic and not fully in closed form.

In this respect, a \emph{contribution in passing}
of the present article is to finally provide finite-time, optimal and non-parametric distribution-dependent regret bounds
for the KL-UCB algorithm.

\paragraph{Enjoying simultaneously distribution-dependent and distribution-free regret bounds.}
As indicated above, it is a folklore knowledge that classical UCB strategies (e.g., the UCB1 strategy by~\citealp{auer_finite-time_2002})
enjoy finite-time distribution-free regret upper bounds of order $\sqrt{KT \ln T}$;
these bounds are actually consequences of distribution-dependent regret bounds of the form:
for all sub-optimal arms $a$, for all $T \geq 1$,
\begin{equation}
\label{eq:distrdepbd-UCB}
\tag{$\star$}
\E[N_a(T)] \leq c\,\frac{\ln T }{\Delta_a^2} + r_T\,,
\end{equation}
where, e.g., $c = 8$ and $r_T = 2$ for UCB1. This is obtained via
setting a threshold $\epsilon \in (0,1)$ and upper-bounding the regret as
\[
R_T = \sum_{a=1}^K \Delta_a \, \E\bigl[ N_{a}(T) \bigr]
\leq \epsilon T + \sum_{a : \Delta_a > \epsilon} \left( c\,\frac{\ln T }{\Delta_a} + \Delta_a \, r_T \right)
\leq \epsilon T + c\, K \frac{\ln T}{\epsilon} + K\, r_T\,.
\]
For $T$ large enough, $\epsilon = \sqrt{K (\ln T)/T}$ provides the claimed $\sqrt{KT \ln T}$
bound.

One may wonder whether any strategy with distribution-dependent regret bounds of the
form~\eqref{eq:distrdepbd-UCB}, or of a sharper form like the one
achieved by KL-UCB and IMED, automatically enjoys a distribution-free regret bound of
order $\sqrt{KT}$ up to logarithmic factors.
This is actually not the case in general: the argument above for UCB1 only works because the remainder term $r_T$
is uniform. When this remainder term does depend on the underlying bandit problem, which is typically the
case for sharper distribution-dependent regret bounds involving the optimal
constants stated in~\eqref{eq:distrdep-LB-reminder1}, then no distribution-free guarantee follows from
distribution-dependent regret bounds (see~\citealp{lattimore2018} for more discussions).

The question now is: given that a strategy can simultaneously enjoy distribution-dependent and distribution-free
regret bounds, can it simultaneously enjoy optimal such bounds?

\paragraph{Bi-optimal regret bounds.}
\citet{lattimore_regret_2016,lattimore2018} and \citet{menard_minimax_2017}
proved that, in simple parametric settings, a strategy can indeed enjoy, at the same time,
finite-time regret bounds that are optimal both from a distribution-dependent and a distribution-free viewpoints; they studied, respectively,
(sub)Gaussian distributions with variance less than~$1$ and one-dimensional exponential families.

The \emph{main contribution} of this article is to extend this result to the non-parametric case of all distributions over $[0,1]$,
for an algorithm called KL-UCB-Switch. The latter is an index policy based on KL-UCB and MOSS: it uses the tighter KL-UCB upper confidence bounds
whenever an arm has not been pulled often enough and switches otherwise to the looser MOSS upper confidence bounds.

This extension was possible without too many technicalities since we first streamlined and generalized earlier analyses of KL-UCB and MOSS;
a second \emph{contribution in passing} of the present article is therefore to provide
a review of proofs of classical regret bounds for index-based strategies for $K$--armed stochastic bandits.
Furthermore, our simplified analysis allowed us to derive similar bi-optimality results for the anytime version of this new KL-UCB-Switch algorithm, with little if any additional effort.

\paragraph{Another type of simultaneous regret bounds: ``best-of-both-worlds'' regret guarantees.}
A strengthening of the notion of distribution-free regret bounds is offered by (oblivious) adversarial regret bounds,
which hold for individual sequences of rewards (not necessarily generated by some stochastic process but picked beforehand).
A series of articles initiated by~\citet{BuSl12} and culminating so far in~\citet{zimmert2021tsallis}
exhibits strategies that enjoy simultaneously finite-time non-parametric distribution-dependent regret bounds of
order $\ln T$ and optimal finite-time (oblivious) adversarial regret bounds of order $\sqrt{KT}$.
Such a simultaneous regret guarantee is called a ``best-of-both-worlds'' guarantee.
However, so far, the distribution-dependent constant in front of the $\ln T$ in ``best-of-both-worlds'' guarantees
is suboptimal and corresponds, up to some numerical
constant, to the one of UCB, that is, to a sum of inverse gaps in expected means. This constant can be much larger
than the optimal constant suggested by the lower bound~\eqref{eq:distrdep-LB-reminder1} recalled below
and which requires some care to be achieved. Put differently, for the time being,
the individual-sequence guarantee (which is much stronger than the distribution-free regret guarantee)
comes at the cost of a poorer distribution-dependent guarantee.
Our stochastic bi-optimality results are thus incomparable with the
``best-of-both-worlds'' regret guarantees obtained so far, though both series of results have their own merits.
It is somehow a matter of taste whether better distribution-dependent constants are preferable
to individual-sequence guarantees. The latter are often praised for
providing robustness and being able to deal with data that is not
given by the realization of independent and identically distributed random draws.

This balance between two types of guarantees may be illustrated on simulations,
see, e.g., the ones performed by~\citet{LBsimu}. He considered,
on top of KL-UCB-Switch and of the algorithms discussed later in Section~\ref{sec:exp},
the best algorithm so far for ``best-of-both-worlds'' guarantees:
Tsallis-INF, which was introduced by \citet{audibert_minimax_2009} and further analyzed
by~\citet{seldin19} and~\citet{zimmert2021tsallis}.
In particular, as expected, this algorithm performs significantly worse than KL-UCB-Switch on stochastic problems.

\subsection{Organization of the Article}

Section~\ref{sec:results} presents the main contributions of this article: a description of the KL-UCB-Switch algorithm,
statements of its optimality both from a distribution-free viewpoint (Theorem~\ref{th:distribfree}) and
from a distribution-dependent viewpoint in the class of all distributions over $[0,1]$ (Theorem~\ref{th:distribdependentloglog}),
and corresponding results (Theorems~\ref{th:distfreeanytime} and~\ref{th:asymptoticanytime}) for an anytime version of the KL-UCB-Switch algorithm.
We actually go one step further by providing, as~\citet{honda_non-asymptotic_2015} already achieved for IMED, a negative
second-order term of the optimal order $-\ln \ln T$ in the distribution-dependent bound
for the version of KL-UCB-Switch relying on the knowledge of the horizon~$T$ (Theorem~\ref{th:distribdependentloglog}).

Section~\ref{sec:exp} presents some (brief) numerical experiments comparing the performance of an empirically tuned version
of the KL-UCB-Switch algorithm to competitors like IMED or KL-UCB. The focus is not only set on the growth of the regret with time,
but also on its dependency with respect to the number $K$ of arms.

Section~\ref{sec:rederived} contains the statements and the proofs of several results that were already known before,
but for which we sometimes propose a simpler derivation. All technical results needed in this article are stated and
proved from scratch (e.g., on the $\Kinf$ quantity that is central to the analysis of IMED and KL-UCB, and on the analysis
of the performance of MOSS), though sometimes in appendix, which makes our paper fully self-contained.

These results are used as building blocks in Section~\ref{sec:proofs:distfree} and~\ref{sec:proofs:distdep},
where the main theorems of this article are proved: Section~\ref{sec:proofs:distfree} is devoted to
distribution-free bounds (Theorems~\ref{th:distribfree} and~\ref{th:distfreeanytime}),
while Section~\ref{sec:proofs:distdep} focuses on the anytime distribution-dependent bound (Theorem~\ref{th:asymptoticanytime}).

Section~\ref{sec:8} provides some reflections
on the distribution-dependent and distribution-free analyses of our new strategy KL-UCB-Switch. In particular,
it explains why a switch between the two types of indices used is conceptually intuitive and handy from a technical viewpoint.

An appendix provides the proofs of the classical material presented in Section~\ref{sec:rederived},
whenever these proofs did not fit in a few lines. This includes an anytime analysis of the MOSS strategy (Appendix~\ref{sec:MOSS})
and proofs of the regularity and deviation results on the $\Kinf$ quantity mentioned above
(Appendix~\ref{sec:Kinf-proofs}, with the use of a variational formula for $\Kinf$ re-proved
in Appendix~\ref{sec:formulevar}). All these results might be of independent interest.
The appendix also features the proof of the sophisticated distribution-dependent regret bound
of Theorem~\ref{th:distribdependentloglog}, with an optimal second order term of order $- \ln \ln T$
in the case of a known $T$ (Appendix~\ref{sec:proof-lnln}).

\section{Description of the Setting and Statement of the Main Results}\label{sec:results}

We consider the simplest case of a bounded stochastic bandit problem
with finitely many arms indexed by $a \in \{1,\ldots,K\}$ and with rewards in~$[0,1]$.
We denote by $\pset$ the set of probability distributions over $[0,1]$:
each arm $a$ is associated with an unknown probability distribution $\nu_a \in \pset$.
We call $\unu = (\nu_1, \dots, \nu_K)$ a bandit problem over~$[0,1]$.
At each round $t \geq 1$, the player pulls the arm $A_t$ and gets a real-valued reward $Y_t$ drawn
independently at random
according to the distribution $\nu_{A_t}$.
The sequence of these rewards is the only piece of information
available to the player.

A typical measure of the performance of a strategy is given by its \emph{regret}.
To recall its definition, we denote by $\Ed(\nu_a) = \mu_a$ the expected reward of arm $a$
and by $\Delta_a$ its gap to an optimal arm:
\[
\mu^\star = \max_{a=1,\ldots,K} \mu_a \qquad \mbox{and} \qquad
\Delta_a = \mu^\star - \mu_a\,.
\]
Arms $a$ such that $\Delta_a > 0$ are called sub-optimal arms.
The expected regret of a strategy equals
\begin{multline*}
R_T = T\mu^\star - \E\!\left[ \sum_{t=1}^T Y_t \right]
= T\mu^\star - \E\!\left[ \sum_{t=1}^T \mu_{A_t} \right]
= \sum_{a=1}^K \Delta_a \, \E\bigl[ N_{a}(T) \bigr] \\
\mbox{where} \ \,\,
N_{a}(T) = \sum_{t=1}^T \1{A_t = a} \! .
\end{multline*}
The first equality above follows from the tower rule.
To control the expected regret, it is thus sufficient
to control the $\E\bigl[ N_{a}(T) \bigr]$ quantities for
sub-optimal arms $a$.

\paragraph{Reminder of the existing lower bounds.}
The distribution-free lower bound of \citet{AuCBFrSc02}
states that for all strategies,
for all $T \geq 1$ and all $K \geq 2$,
\begin{equation}
\label{eq:distrfree-LB-reminder}
\sup_{\unu} R_T \geq \frac{1}{20} \min\Bigl\{ \sqrt{KT}, \, T \Bigr\}\,,
\end{equation}
where the supremum is taken over all bandit problems $\unu$ over $[0,1]$. Hence, a strategy is called optimal from a distribution-free viewpoint
if there exists a numerical constant $C$ such that for all $K \geq 2$, for all bandit problems $\unu$ over $[0,1]$,
for all $T \geq 1$, the regret is bounded by $R_T \leq C \sqrt{KT}$.

The key notion in distribution-dependent lower bounds is the Kullback-Leibler divergence $\KL$ between two probability distributions.
We recall its definition: for two probability distributions $\nu,\,\nu'$ over $[0,1]$, we write $\nu \ll \nu'$ whenever $\nu$ is absolutely continuous with respect to $\nu'$,
and denote by $\d\nu/\d\nu'$ the density (the Radon-Nikodym derivative) of $\nu$ with respect to $\nu'$. Then,
\[
\KL(\nu,\nu') = \left\{\begin{array}{ll}
	\displaystyle \bigintsss_{[0,1]} \ln\!\left(\frac{\d \nu}{\d \nu'}\right)\! \d \nu & \textrm{if $\nu \ll \nu'$}; \smallskip \\
	+ \infty & \textrm{otherwise}.
\end{array} \right.
\]
Now, the key information-theoretic quantity for stochastic bandit problems
is given by an infimum of Kullback-Leibler divergences:
for $\nu_a \in \pset$ and $x \in [0,1]$,
\[
\Kinf(\nu_a,x) = \inf \Bigl\{ \KL(\nu_a,\nu'_a) : \ \ \nu'_a \in \pset \ \ \mbox{and} \ \
\Ed(\nu'_a) > x \Bigr\}\,,
\]
where $\Ed(\nu'_a)$ denotes the expectation of the distribution $\nu'_a$
and where by convention, the infimum of the empty set equals $+\infty$.
Because of this convention, we may equivalently define $\Kinf$ as
\begin{equation}
\label{eq:defKinfll}
\Kinf(\nu_a,x) = \inf \Bigl\{ \KL(\nu_a,\nu'_a) : \ \ \nu'_a \in \pset \ \ \mbox{with} \ \ \nu_a \ll \nu'_a \ \ \mbox{and} \ \
\Ed(\nu'_a) > x \Bigr\}\,.
\end{equation}
As essentially proved by~\citet{LaRo85} and~\citet{burnetas1996optimal}---see also~\citet{garivier_explore_2016}---,
for any ``reasonable'' strategy,
for any bandit problem $\unu$ over $[0,1]$, for any sub-optimal arm $a$,
\begin{equation}
\label{eq:distrdep-LB-reminder1}
\liminf_{T \to \infty} \,\, \frac{\E \bigl[ N_{a}(T) \bigr]}{\ln T}
\geq \frac{1}{\Kinf(\nu_a,\mu^\star)}\,.
\end{equation}
A strategy is called optimal from a distribution-dependent viewpoint if the reverse inequality holds with a $\limsup$ instead of a $\liminf$,
for any bandit problem $\unu$ over $[0,1]$ and for any sub-optimal arm $a$.

By a ``reasonable'' strategy above, we mean a strategy that (according to the terminology introduced by
\citealp{burnetas1996optimal}) is uniformly fast convergent on $\pset$,
that is, such that for all bandit problems $\unu$ over $[0,1]$, for all
sub-optimal arms $a$,
\[
\forall \, \alpha>0, \qquad \E\bigl[N_{a}(T)\bigr] = \o(T^\alpha)\,.
\]
Such strategies exist, such as, for instance, the UCB strategy mentioned above.
For uniformly super-fast convergent strategies, that is, strategies for which
there actually exists a constant $C$ such for all bandit problems $\unu$ over $[0,1]$, for all
sub-optimal arms $a$,
\[
\frac{\E \bigl[ N_{a}(T) \bigr]}{\ln T} \leq \frac{C}{\Delta_a^2}
\]
(again, UCB is such a strategy),
the lower bound above can be strengthened into:
for any bandit problem $\unu$ over $[0,1]$, for any sub-optimal arm $a$,
\begin{equation}
\label{eq:distrdep-LB-reminder2}
\E \bigl[ N_{a}(T) \bigr]
\geq \frac{\ln T}{\Kinf(\nu_a,\mu^\star)}
- \Omega(\ln \ln T)\,,
\end{equation}
see~\citet[Section~4]{garivier_explore_2016}.
This order of magnitude $- \ln \ln T$ for the second-order
term in the regret bound is optimal, as follows
from the upper bound exhibited by \citet[Theorem~5]{honda_non-asymptotic_2015}.

\subsection{The KL-UCB-Switch Algorithm}
\label{sec:descralgo}

\begin{algorithm}[H]
	\begin{algorithmic}
		\STATE \textbf{Inputs:} \texttt{Index functions $U_a$}
		\STATE \textbf{Initialization:} \texttt{Play each arm $a =1, \dots, K$ once and compute the $U_a(K)$}
		\FOR{$t = K,\ldots,T-1$}
		\STATE \texttt{Pull an arm $\displaystyle{A_{t+1} \in \argmax_{a = 1, \dots, K} U_a(t)}$ }\\
                \texttt{Get a reward $Y_{t+1}$
drawn independently at random according to $\nu_{A_{t+1}}$}
		\ENDFOR
	\end{algorithmic}
	\caption{Generic index policy}
\end{algorithm}

For any index policy as described above, we have $N_a(t) \geq 1$ for all arms $a$ and $t \geq K$
and may thus define, respectively, the empirical distribution of the
rewards associated with arm $a$ up to round $t$ included
and their empirical mean:
\[
\hat{\nu}_a(t) = \frac{1}{N_a(t)} \sum_{s=1}^t \delta_{Y_s} \, \1{A_s = a}
\qquad \mbox{and} \qquad
\hat{\mu}_a(t) = \Ed \bigl[ \hat{\nu}_a(t) \bigr] = \frac{1}{N_a(t)} \sum_{s=1}^t Y_s \, \1{A_s = a}\,,
\]
where $\delta_y$ denotes the Dirac point-mass distribution at $y \in [0,1]$.

The MOSS algorithm (see \citealp{audibert_minimax_2009}) uses the index functions
\begin{equation}
\label{eq:MOSS_index}
U^{\moss}_a(t) \defeq  \hat{\mu}_a(t) + \sqrt{ \frac{1}{2N_a(t)} \ln_{+} \! \bigg( \frac{T}{KN_a(t)} \bigg)}\,,
\end{equation}
where $\ln_+$ denotes the non-negative part of the natural logarithm, $\ln_+ = \max\{\ln,0\}$.

We also consider a slight variation of the KL-UCB algorithm (see \citealp{cappe_kullbackleibler_2013}),
which we call KL-UC$\Bp$ and which relies on the index functions
\begin{equation}
\label{eq:KL-UCB_index}
U^{\KLind}_a(t) \defeq \sup \Biggl\{ \mu \in [0,1] \; \bigg\vert \; \Kinf\big(\hat{\nu}_a(t), \mu\big) \leq \frac{1}{N_a(t)} \ln_+ \! \bigg(\frac{T}{K N_a(t)} \bigg) \Biggr\}\,.
\end{equation}
We introduce a new algorithm KL-UCB-Switch. The novelty here is that this algorithm switches from the KL-UCB-type
index to the MOSS index once it has pulled an arm more than $f(T, K)$ times. The purpose is to capture the good properties of both algorithms. In the sequel we will take $f(T,K)= \lfloor (T/K)^{1/5} \rfloor$ for the sake of concreteness and of readability of the bounds, but
Section~\ref{sec:fTK} explains the (lack of) impact of this choice of $f(T,K)$
on the regret bounds and details which values lead to optimal bounds.

More precisely, we define the index functions
\begin{equation}
\label{eq:KL-UCB-Switch_index}
U_a(t) = \left\{
\begin{aligned}
\nonumber
 &U^{\KLind}_a(t)&\text{if } N_a(t) \leq  f(T, K), \\
 &U^{\moss}_a(t) &\text{if } N_a(t) > f(T, K).
\end{aligned}\right.
\end{equation}
The reasons for the choice of a threshold $f(T,K)= \lfloor (T/K)^{1/5} \rfloor$
will become clear in the proof of Theorem~\ref{th:distribfree}. Note that asymptotically KL-UCB-Switch should behave like KL-UCB--type algorithm, as for large $T$ we expect the number of pulls of a sub-optimal arm to be of order $N_a(t)\sim \log(T)$ and optimal arms to have been played linearly many times, entailing $U_a^{\moss}(t) \approx U^{\KLind}_a(t) \approx \hat{\mu}_a(t)$.

Since we are considering distributions over $[0,1]$,
the data-processing inequality for Kullback-Leibler divergences ensures
(see, e.g., \citealp[Lemma~1]{garivier_explore_2016}) that
for all $\nu\in\pset$ and all $\mu \in \bigl( \Ed(\nu), 1\bigr)$,
\[
\Kinf(\nu,\mu) \geq \inf_{\nu' : \Ed(\nu') > \mu} \KL \Bigl( \Ber \bigl( \Ed(\nu) \bigr), \, \Ber \bigl( \Ed(\nu') \bigr) \Bigr)
= \KL \Bigl( \Ber \bigl( \Ed(\nu) \bigr), \, \Ber(\mu) \Bigr)\,,
\]
where $\Ber(p)$ denotes the Bernoulli distribution with parameter~$p$.
Therefore, by Pinsker's inequality for Bernoulli distributions,
\begin{equation}
\label{eq:Pinsker-binf-Kinf}
\Kinf(\nu,\mu) \geq 2 \bigl( \Ed(\nu) - \mu \bigr)^2\,,
\qquad \mbox{thus} \qquad
U^{\KLind}_a(t) \leq U^{\moss}_a(t)
\end{equation}
for all arms $a$ and all rounds $t \geq K$.  In particular, this actually shows that KL-UCB-Switch interpolates between KL-UCB and MOSS,
\begin{equation}
\label{eq:Pinsker-U}
U_a^{\KLind}(t) \leq U_a(t) \leq U^{\moss}_a(t)\,.
\end{equation}

\subsection{Optimal Distribution-Dependent and Distribution-Free Regret Bounds \\ ~~~~~(Known Horizon $T$)}
\label{sec:main}

We first consider a fixed and beforehand-known value of~$T$.
The proofs of the two theorems below are provided in Section~\ref{sec:proofs:distfree}
and Appendix~\ref{sec:proof-lnln}, respectively.

\begin{theorem}[Distribution-free bound]\label{th:distribfree}
Given $T \geq 1$,
the regret of the KL-UCB-Switch algorithm, tuned with the knowledge of $T$ and the switch function $f(T, K) = \lfloor (T/K)^{1/5} \rfloor$, is uniformly bounded over all bandit problems $\unu$ over $[0,1]$ by
\begin{equation}
\nonumber
R_T \leq (K-1) + 23 \sqrt{KT}\,.
\end{equation}
\end{theorem}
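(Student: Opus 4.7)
The plan is to adapt the classical MOSS analysis, taking care of the switch between the two indices. The $K$ initialization rounds contribute at most $K-1$ to the regret, since each suboptimal arm is pulled once and $\Delta_a \leq 1$. For $t \geq K$, the greedy rule ensures $U_{A_{t+1}}(t) \geq U_\star(t)$, giving the standard decomposition
\[
\mu^\star - \mu_{A_{t+1}} \leq \bigl(\mu^\star - U_\star(t)\bigr)_+ + \bigl(U_{A_{t+1}}(t) - \mu_{A_{t+1}}\bigr)_+.
\]
Summing over $t$ and taking expectations, the proof reduces to bounding each of the two resulting terms by $\O(\sqrt{KT})$.

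The second (``overestimation'') term is handled directly: from \eqref{eq:Pinsker-U}, $U_a(t) \leq U^{\moss}_a(t)$, so
\[
\sum_{t=K}^{T-1} \bigl(U_{A_{t+1}}(t) - \mu_{A_{t+1}}\bigr)_+
\leq \sum_{a=1}^K \sum_{t:\,A_{t+1}=a} \bigl(U^{\moss}_a(t) - \mu_a\bigr)_+,
\]
and the right-hand side is exactly the quantity bounded by $\O(\sqrt{KT})$ in the distribution-free MOSS analysis recalled in Section~\ref{sec:rederived}. For the first (``exploration'') term $\sum_t \E\bigl[\bigl(\mu^\star - U_\star(t)\bigr)_+\bigr]$, we split according to the regime of the optimal arm at time~$t$: on the event $\{N_\star(t) > f(T,K)\}$ we have $U_\star(t) = U^{\moss}_\star(t)$ and the MOSS anti-concentration gives the right order; on the complementary event $\{N_\star(t) \leq f(T,K)\}$, $U_\star(t) = U^{\KLind}_\star(t)$ and we invoke the Cram\'er--Chernoff-type deviation inequality for $\Kinf$ proved in Appendix~\ref{sec:Kinf-proofs}. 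Because both indices share the same threshold $\ln_+\!\bigl(T/(KN_\star(t))\bigr)/N_\star(t)$, the two tail integrals produce bounds of the same $\O(\sqrt{KT})$ order.

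The main obstacle will be the KL-UCB-regime exploration error: the Pinsker inequality \eqref{eq:Pinsker-binf-Kinf} only yields $U^{\KLind}_\star \leq U^{\moss}_\star$, so the lower tail of $U^{\KLind}_\star$ can a priori be heavier than the MOSS one and a direct $\Kinf$ tail bound is required, together with a careful exchange of the tail integral with the summation over time (using that $U^{\KLind}_\star$ changes only when $\star$ is pulled). The exponent $1/5$ in $f(T,K)=\lfloor(T/K)^{1/5}\rfloor$ is chosen with distribution-dependent optimality in mind; for the present bound it is enough that the pulls of suboptimal arms occurring in the KL-UCB regime, at most $(K-1)(f(T,K)+1) = \O\bigl(K^{4/5}T^{1/5}\bigr)$ in number, are absorbed into $\O(\sqrt{KT})$ when $K\leq T$ (the case $K>T$ being trivial since $R_T \leq T \leq \sqrt{KT}$). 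The explicit constant~$23$ will then emerge from adding up the three error sources with the explicit numerical constants coming from the tail computations.
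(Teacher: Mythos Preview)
Your overall decomposition and the treatment of the overestimation term via $U_a(t)\le U^{\moss}_a(t)$ match the paper's proof exactly. The split of the exploration term according to $N_\star(t)\lessgtr f(T,K)$ and the use of the $\Kinf$ deviation inequality on the KL-UCB side are also what the paper does.

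However, your last paragraph misidentifies the role of the exponent $1/5$, and this is precisely the heart of the proof. It is \emph{not} chosen for distribution-dependent purposes, nor is the relevant constraint the crude pull count $(K-1)(f(T,K)+1)$. The exponent is forced by the distribution-free control of the KL-UCB exploration term for the optimal arm. Concretely, after optional skipping one has to bound
\[
\sum_{t=K}^{T-1}\sum_{n=1}^{f(T,K)} \E\Bigl[\bigl(\mu^\star - U^{\KLind}_{a^\star,n}\bigr)^+\Bigr],
\]
and Corollary~\ref{cor:kinfdev} gives $\E\bigl[(\mu^\star - U^{\KLind}_{a^\star,n})^+\bigr]\le (2n+1)\sqrt{\pi/n}\,\exp\bigl(-\ln_+(T/(Kn))\bigr) = (2n+1)\sqrt{\pi/n}\cdot Kn/T$. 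The double sum is then of order $K\,f(T,K)^{5/2}$, and requiring this to be $\O(\sqrt{KT})$ is exactly what forces $f(T,K)\le (T/K)^{1/5}$. Your suggested shortcut via ``$U^{\KLind}_\star$ changes only when $\star$ is pulled'' does not help here: the event $\{N_\star(t)\le f(T,K)\}$ can hold for arbitrarily many rounds $t$ (the optimal arm may be ignored for a long stretch), so the $t$-sum cannot be collapsed to $f(T,K)$ terms. You need the per-$n$ bound to already carry a factor $K/T$, which it does thanks to the $\ln_+(T/(Kn))$ threshold in the index, and then the $n$-sum up to $f(T,K)$ delivers $f(T,K)^{5/2}$.
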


KL-UCB-Switch thus enjoys a distribution-free regret bound of optimal order $\sqrt{KT}$,
see~\eqref{eq:distrfree-LB-reminder}.
The MOSS strategy by~\citet{audibert_minimax_2009} already enjoyed this optimal distribution-free regret bound but its construction (relying on a sub-Gaussian assumption) prevents it from being optimal from a distribution-dependent viewpoint; MOSS can even be arbitrarily worse
than a classical strategy like UCB in some situations (see \citealp[Section~9.2]{BanditBook}).

By considering the exact same algorithm, we may also obtain a (sophisticated) dis\-tri\-bu\-tion-dependent regret bound.
A simple analysis similar to the one for Theorem~\ref{th:asymptoticanytime}
would yield a second-order term in the regret bound below of the order of $\O_T\bigl( (\ln T)^{6/7} \bigr)$.
On the other hand, an extremely technical analysis (deferred to Appendix~\ref{sec:proof-lnln})
gets the improved second-order term $- \ln \ln T / \Kinf(\nu_a, \mu^\star)$ stated below;
it is partially built on the analysis of \citet{honda_non-asymptotic_2015}.

We recall that the $\O_T( \,\cdot\, )$ symbol means the following:
a quantity $Q_T$, possibly depending on other parameters than $T$,
is a $\O_T\bigl( r(T) \bigr)$ for some positive rate function $r$ if
\[
\limsup_{T \to \infty} \frac{|Q_T|}{r(T)} < +\infty \,.
\]

\begin{theorem}[Distribution-dependent bound]\label{th:distribdependentloglog}
Given $T \geq 1$, the KL-UCB-Switch algorithm, tuned with the knowledge of $T$ and the switch function $f(T, K) = \lfloor (T/K)^{1/5} \rfloor$,
ensures that for all bandit problems $\unu$ over $[0,1]$ with $\mu^\star \in (0,1)$,
for all sub-optimal arms~$a$, for all $T \geq K/\min\big\{1-\mu^\star, \,(\Delta_a/9)^{12}\big\}$,
	\begin{equation}
	\nonumber
	\E[N_a(T)] \leq \frac{\ln T - \ln \ln T}{\Kinf(\nu_a, \mu^\star)} + \O_T(1)\,,
	\end{equation}
	where a finite-time, closed-form expression of the $\O_T(1)$ term is
	provided in Equation~\eqref{eq:detailedboundTh3} and in the comments following it.
\end{theorem}

KL-UCB-Switch thus enjoys a distribution-dependent regret bounds of optimal orders,
see~\eqref{eq:distrdep-LB-reminder1} and~\eqref{eq:distrdep-LB-reminder2}.
This optimal order was already reached by the IMED strategy by~\citet{honda_non-asymptotic_2015}
on the same model $\pset$, though the regret bound exhibited for IMED is of a somewhat asymptotic nature.
The KL-UCB algorithm studied, e.g., by \citet{cappe_kullbackleibler_2013},
only enjoyed optimal regret bounds for more limited models; for instance,
for distributions over $[0,1]$ with finite support. In the analysis of
KL-UCB-Switch we actually provide in passing an analysis of KL-UCB for the
model $\pset$ of all probability distributions over $[0,1]$.

\subsection{Adaptation to the Horizon $T$ (an Anytime Version of KL-UCB-Switch)}
\label{sec:anytime}

A standard doubling trick fails to provide a meta-strategy that would not
require the knowledge of $T$ and have optimal $\O\bigl(\sqrt{KT}\bigr)$
and $\bigr(1+\o(1)\bigr) (\ln T)/\Kinf(\nu_a, \mu^\star)$ bounds.
Indeed, on the one hand, there are two different rates, $\sqrt{T}$ and $\ln T$,
to accommodate simultaneously and each would require different regime
lengths, e.g., $2^r$ and $2^{2^r}$, respectively, and on the other hand,
any doubling trick on the distribution-dependent bound
would result in an additional multiplicative constant in front
of the $1/\Kinf(\nu_a, \mu^\star)$ factor. This is why
a dedicated anytime version of our algorithm is needed.

For technical reasons, it was useful in our proof to perform some additional exploration, which deteriorates the second-order terms in the regret bound. Indeed, we define the augmented exploration function (which is non-decreasing) by
\begin{equation}
\label{eq:augmented}
	\varphi(x) = \log_+\!\big(x  (1 + \log_+^2 x)\big)
\end{equation}
and the associated index functions by
\begin{align}
\label{eq:defadaklind}
U^{\adaKLind}_a(t) & \defeq
\sup \Biggl\{ \mu \in [0,1] \; \bigg\vert \; \Kinf\big(\hat{\nu}_a(t), \mu\big) \leq \frac{1}{N_a(t)} \, \varphi \bigg(\frac{t}{K N_a(t)} \bigg) \Biggr\} \\
\label{eq:defadamoss}
\mbox{and} \qquad
U^{\adamoss}_a(t) & \defeq \hat{\mu}_a(t) + \sqrt{ \frac{1}{2N_a(t)} \, \varphi \bigg( \frac{t}{KN_a(t)} \bigg)}\,.
\end{align}
For matters related to proofs, it will also be convenient to define
the index function $U^{\moss,\varphi}_a(t)$ by
\begin{equation}
\label{eq:Umoss-varphi}
U^{\adamoss}_a(t) \leq U^{\moss,\varphi}_a(t) \defeq \hat{\mu}_a(t) + \sqrt{ \frac{1}{2N_a(t)} \, \varphi \bigg( \frac{T}{KN_a(t)} \bigg)}\,.
\end{equation}

The \textsc{-a} in the superscripts stands for ``augmented'' or for ``anytime'' as this augmented exploration
gives rise to the anytime version of KL-UCB-Switch, which simply relies on the index
\begin{equation}
\label{eq:U-KLUCBS-anytime}
U_a^{\anytime}(t) = \left\{
\begin{aligned}
 &U^{\adaKLind}_a(t)& \text{if } N_a(t) \leq  f(t, K), \\
 &U^{\adamoss}_a(t) & \text{if } N_a(t) > f(t, K),
\end{aligned}\right.
\end{equation}
where $f(t,K)= \lfloor (t/K)^{1/5} \rfloor$.
Note that the thresholds $f(t,K)$ for the switches between the sub-indices $U^{\adaKLind}_a(t)$
and  $U^{\adamoss}_a(t)$ now vary with $t$ (and we cannot exclude that a switch back may occur).

For this anytime version of KL-UCB-Switch, the same ranking of (sub-)indexes
holds as the one~\eqref{eq:Pinsker-U} for our first version of KL-UCB-Switch relying on the horizon $T$:
\begin{equation}
\label{eq:Pinsker-U-anytime}
U_a^{\adaKLind}(t) \leq U^{\anytime}_a(t) \leq U^{\adamoss}_a(t)\,.
\end{equation}
The performance guarantees are indicated in the next two theorems,
whose proofs may be found in Sections~\ref{sec:proofs:distfree}
and~\ref{sec:proofs:distdep}, respectively.
The distribution-free analysis is essentially the same as in the case of a known horizon, although the additional exploration required an adaptation of most of the calculations.
Note also that the simulations detailed below suggest that all anytime variants of the KL-UCB algorithms (KL-UCB-Switch included) behave better without the additional exploration required, i.e., with $\ln_+$ as the exploration function.

\begin{theorem}[Anytime distribution-free bound]\label{th:distfreeanytime}
	The regret of the anytime version of KL-UCB-Switch algorithm above, tuned with the switch function $f(t, K) = \lfloor (t/K)^{1/5} \rfloor$, is uniformly bounded over all bandit problems $\unu$ over $[0,1]$ as follows: for all $T \geq 1$,
	\begin{equation}
	\nonumber
	R_T \leq (K-1) + 44 \sqrt{KT}\,.
	\end{equation}
\end{theorem}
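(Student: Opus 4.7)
My plan is to adapt the proof of Theorem~\ref{th:distribfree}, which handles the horizon-known case with the exploration $\ln_+$, to the anytime setting where $\ln_+$ is replaced by the augmented function $\varphi$ and where the switch threshold $f(t,K) = \lfloor (t/K)^{1/5}\rfloor$ now varies with $t$. Start from the standard decomposition: the $K$ initialization pulls contribute at most $K-1$ to the regret (since $\mu^\star \leq 1$ and at most $K-1$ arms are suboptimal), so
\[
R_T \leq (K-1) + \sum_{a \, : \, \Delta_a > 0} \Delta_a \, \E\bigl[N_a(T) - 1\bigr]\,,
\]
and the remaining task is to bound the sum on the right by $44\sqrt{KT}$.

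For each arm $a$, I split the pulls in this remaining sum into \emph{KL-UCB-phase pulls} (those at times $t$ with $N_a(t) \leq f(t,K)$) and \emph{MOSS-phase pulls}. Because $f(\cdot, K)$ is non-decreasing in $t$, the pull that increments $N_a$ from $n{-}1$ to $n$ at some time $t_n \leq T$ is a KL-UCB-phase pull only if $n-1 \leq f(t_n,K) \leq f(T,K)$. Each arm therefore undergoes at most $f(T,K)+1$ KL-UCB-phase pulls, and this category contributes at most $K\bigl(f(T,K)+1\bigr) \leq K + K^{4/5} T^{1/5}$ to the regret. This is absorbed into an $O(\sqrt{KT})$ term as soon as $K \leq T$; the regime $K > T$ is handled trivially by the $(K-1)$ term since $R_T \leq T \leq \sqrt{KT}$.

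The core of the proof concerns the MOSS-phase pulls. For these, the pulling rule $A_{t+1} = a$ combined with~\eqref{eq:Pinsker-U-anytime} yields $U^{\adamoss}_a(t) \geq U^{\anytime}_{a^\star}(t)$, where $a^\star$ is any optimal arm. Both sub-indices dominate the empirical mean---for $U^{\adaKLind}$, this holds because $\Kinf(\hat\nu_{a^\star}(t), \hat\mu_{a^\star}(t)) = 0$ (approximate with distributions of slightly higher mean), so $\hat\mu_{a^\star}(t)$ always lies in the feasible set of the defining supremum---hence $U^{\adamoss}_a(t) \geq \hat\mu_{a^\star}(t)$. Combined with $U^{\adamoss}_a(t) \leq U^{\moss,\varphi}_a(t)$ from~\eqref{eq:Umoss-varphi}, the event $\{A_{t+1} = a, \ N_a(t) > f(t,K)\}$ is contained in $\bigl\{\hat\mu_a(t) + \sqrt{\varphi(T/(KN_a(t)))/(2N_a(t))} \geq \hat\mu_{a^\star}(t)\bigr\}$. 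From this the standard MOSS-style distribution-free analysis, adapted to the anytime exploration $\varphi$ (relying on the MOSS machinery rederived in Section~\ref{sec:rederived} and Appendix~\ref{sec:MOSS}), takes over: peel over the dyadic levels of $N_a$ and $N_{a^\star}$, apply Hoeffding-type maximal inequalities to the sub-Gaussian deviations of $\hat\mu_a$ and $\hat\mu_{a^\star}$, integrate the resulting tails, and sum over arms to obtain a bound of order $\sqrt{KT}$ on the MOSS-phase regret.

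The main obstacle is precisely the last step: propagating the heavier exploration $\varphi(x) = \ln_+\!\bigl(x(1+\ln_+^2 x)\bigr)$ through the peeling and maximal-deviation arguments without degrading the $\sqrt{KT}$ rate. The extra $\ln_+^2$ correction is exactly what makes the relevant geometric/tail sums converge uniformly in $T$, as required in the anytime setting, but it also forces slightly looser numerical constants throughout, which explains the degradation of the leading constant from $23$ in Theorem~\ref{th:distribfree} to $44$ in the anytime result stated here.
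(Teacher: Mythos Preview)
Your proposal has a genuine gap in the MOSS-phase step. When you lower-bound $U^{\anytime}_{a^\star}(t)$ by $\hat\mu_{a^\star}(t)$, you discard the exploration bonus on the optimal arm, and that bonus is exactly what makes the MOSS machinery of Appendix~\ref{sec:MOSS} work. The control of $\sum_t \E\bigl[(\mu^\star - U^{\adamoss}_{a^\star}(t-1))^+\bigr]$ by $c\sqrt{KT}$ (see~\eqref{eq:mossfirstsum}) relies on peeling over $N_{a^\star}(t)$: on each peeling slab the bonus contributes a geometric damping factor that makes the sum over slabs converge. Without the bonus you are left with $\sum_t \E\bigl[(\mu^\star - \hat\mu_{a^\star}(t-1))^+\bigr]$, and since $\E\bigl[(\mu^\star - \hat\mu_{a^\star,n})^+\bigr]$ is of order $1/\sqrt{n}$, bounding this sum would require a lower bound on $N_{a^\star}(t-1)$ that you do not have a priori---and establishing such a bound is essentially equivalent to the regret bound you are trying to prove. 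The ``standard MOSS-style analysis'' you invoke simply does not apply when the optimal arm's index has been stripped down to its bare empirical mean; a double peeling over $N_a$ and $N_{a^\star}$ in the Auer et~al.\ style would at best recover a $\sqrt{KT\log T}$ rate.

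The paper's proof uses a different decomposition: it does not split according to the regime of the \emph{pulled} arm. Instead, via the trick $\mu^\star - \mu_{A_t} \leq (\mu^\star - U^{\anytime}_{a^\star}(t-1)) + (U^{\anytime}_{A_t}(t-1) - \mu_{A_t})$ (see~\eqref{eq:decompregret-KLUCB}), it isolates the deviation of the optimal arm's \emph{full} index and then splits \emph{that} sum according to whether $N_{a^\star}(t-1) \leq f(t-1,K)$ or not. In the MOSS regime for $a^\star$, Proposition~\ref{prop:MOSS-adaptive} (which keeps the bonus on $a^\star$) gives $26\sqrt{KT}$; in the KL regime for $a^\star$, Corollary~\ref{cor:kinfdev} combined with the choice $f(t,K) = \lfloor (t/K)^{1/5}\rfloor$ gives another $11\sqrt{KT}$. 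Your crude count of KL-phase pulls of the \emph{suboptimal} arms is not the problem; the point is that the delicate part of the analysis lives on the optimal arm's side, and splitting by the pulled arm's regime does not address it.
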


\begin{theorem}[Anytime distribution-dependent bound]\label{th:asymptoticanytime}
The anytime version of KL-UCB-Switch algorithm above, tuned with the switch function $f(t, K) = \lfloor (t/K)^{1/5} \rfloor$, ensures that for all bandit problems $\unu$ over $[0,1]$, for all sub-optimal arms $a$, for all $T \geq 1$,
\[
\E[N_a(T)] \leq \frac{\ln T }{\Kinf(\nu_a, \mu^\star)} + \O_T\bigl( (\ln T)^{6/7} \bigr)\,,
\]
where a finite-time, closed-form expression of the $\O_T\bigl( (\ln T)^{6/7} \bigr)$ term
is given in Equation~\eqref{eq:thdistrdep:precisebound:anytime}
and in the comments following it.
\end{theorem}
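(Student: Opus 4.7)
The plan is to transport the proof of Theorem~\ref{th:distribdependent} (Section~\ref{sec:proofs:distdep}) to the anytime setting, replacing the horizon-dependent exploration $\ln_+(T/(KN_a(t)))$ by the time-varying $\varphi(t/(KN_a(t)))$ of~\eqref{eq:augmented} and accounting for the fact that the switching threshold $f(t,K)$ itself depends on $t$. Fix a sub-optimal arm $a$ and a parameter $\delta \in (0, \mu^\star - \mu_a)$, and set $u = \bigl\lceil \ln T / \Kinf(\nu_a, \mu^\star - \delta) \bigr\rceil$. I would start from the standard bound
\[
\E[N_a(T)] \,\leq\, u + \sum_{t=K}^{T-1} \P\bigl(A_{t+1} = a, \, N_a(t) \geq u\bigr),
\]
and, on the event inside the probability, distinguish the two regimes $N_a(t) \leq f(t,K)$ (KL-UCB regime, where $U_a^{\anytime}(t) = U_a^{\adaKLind}(t)$) and $N_a(t) > f(t,K)$ (MOSS regime, where $U_a^{\anytime}(t) = U_a^{\adamoss}(t)$).

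In the KL-UCB regime, the inclusion $\{A_{t+1}=a\} \subseteq \{U_a^{\anytime}(t) \geq U_{a^\star}^{\anytime}(t)\}$ together with~\eqref{eq:Pinsker-U-anytime} gives $U_a^{\adaKLind}(t) \geq U_{a^\star}^{\adaKLind}(t)$, and splitting along the auxiliary event $\{U_{a^\star}^{\adaKLind}(t) \geq \mu^\star - \delta\}$ reduces the analysis to two contributions. The first is a lower-deviation probability of the optimal arm in $\Kinf$-distance summed over $t$; this is exactly what the $1 + \log_+^2$ factor inside $\varphi$ is designed to absorb, through a peeling argument on $N_{a^\star}(t)$ relying on the deviation results prepared in Appendix~\ref{sec:Kinf-proofs}, and it contributes only $\O(1)$. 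The second uses the definition of $U_a^{\adaKLind}(t)$ to obtain $\Kinf\bigl(\hat\nu_a(t), \mu^\star - \delta\bigr) \leq (1/u)\,\varphi(T/K)$, which the upper-deviation estimate for $\Kinf$ already central to Theorem~\ref{th:distribdependent} turns into the leading term $\ln T / \Kinf(\nu_a, \mu^\star - \delta)$ plus a $\O(\log\log T)$ penalty coming from the augmentation of $\varphi$.

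In the MOSS regime, I would exploit~\eqref{eq:Umoss-varphi} to reduce to a sub-Gaussian deviation of $\hat\mu_a(t)$ and combine it with $N_a(t) > f(t,K) = \lfloor (t/K)^{1/5}\rfloor$; by the same peeling strategy used in the distribution-free Theorem~\ref{th:distfreeanytime}, the sum in $t$ is negligible compared to $\ln T$. The continuity of $\Kinf(\nu_a,\cdot)$ at $\mu^\star$ (Appendix~\ref{sec:Kinf-proofs}) then converts $u$ into $\ln T / \Kinf(\nu_a, \mu^\star)$ plus an $\O(\delta \ln T)$ error, and the choice $\delta = (\ln T)^{-1/7}$ balances this error against the polynomial-in-$1/\delta$ remainder produced by the peeling and by $\varphi$, yielding the announced $\O\bigl((\ln T)^{6/7}\bigr)$ bound.

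The hard part is the bookkeeping of the $\log\log$ factors that $\varphi$ introduces, together with the fact that the time-varying threshold $f(t,K)$ prevents one from partitioning $\{K,\ldots,T-1\}$ into a single KL-UCB phase followed by a single MOSS phase, as is essentially done in the known-horizon proof: here the two regimes can alternate along a trajectory, forcing every deviation inequality to be uniformized in $t$. Controlling this uniformity is precisely the role of the $1+\log_+^2$ augmentation in~\eqref{eq:augmented}, and it is this augmentation that accounts for the deterioration of the second-order rate from $(\ln T)^{2/3}$ in Theorem~\ref{th:distribdependent} to $(\ln T)^{6/7}$ here.
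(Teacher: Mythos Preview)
Your high-level plan is reasonable and close in spirit to the paper's, but you have misidentified the bottleneck term, and this gap matters because it is precisely what determines the exponent $6/7$.

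You claim that the lower-deviation contribution for the optimal arm, i.e.\ $\sum_{t=K}^{T-1}\P\bigl[U_{a^\star}^{\adaKLind}(t)<\mu^\star-\delta\bigr]$, ``contributes only $\O(1)$'' thanks to the $1+\log_+^2$ augmentation in $\varphi$. This is not so. The paper's Part~1 together with Lemma~\ref{lm:calc:distrdep} bounds this sum by
\[
\frac{1}{1-\e^{-2\delta^2/K}}+\frac{\e(3+8K)}{(1-\e^{-2\delta^2})^3}\;=\;\O\!\bigl(1/\delta^6\bigr),
\]
and it is exactly this $1/\delta^6$ that, balanced against the $\O(\delta\ln T)$ coming from the regularity of $\Kinf$, forces $\delta=(\ln T)^{-1/7}$ and hence the $(\ln T)^{6/7}$ rate. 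The augmentation of $\varphi$ plays a more limited role than you describe: it makes $\sum_t \e^{-\varphi(t/(Kn))}$ finite (of order $Kn$, via the $\arctan$ computation in Lemma~\ref{lm:calc:distrdep}), but one must then still sum $\e(2n+1)(1+Kn\pi/2)\e^{-2n\delta^2}$ over $n$, and the $n^2$ weight is what produces the cube $(1-\e^{-2\delta^2})^{-3}$. No peeling on $N_{a^\star}(t)$ is used here---just a union bound over $n$ followed by Corollary~\ref{cor:inclusionevents} and Proposition~\ref{prop:kinfdev}.

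Two smaller points. First, your ``second contribution'' in the KL-UCB regime is described as ``turning into the leading term $\ln T/\Kinf$'', but you have already extracted that leading term as $u$; what remains should be the residual $\O(1/\delta^2)$ from Proposition~\ref{prop:kinfconcentration}, as in the paper's Part~2. Second, with your specific choice $u=\lceil\ln T/\Kinf(\nu_a,\mu^\star-\delta)\rceil$, the inequality $\varphi(T/K)/u<\Kinf(\nu_a,\mu^\star-\delta)$ that you would need for the concentration bound to be nontrivial actually \emph{fails} for large $T$, since $\varphi(T/K)>\ln T$. The paper avoids this by not pre-committing to a threshold $u$: it uses the decomposition~\eqref{eq:decompNaT} and lets the cutoff $n_1=\lceil\varphi(T/K)/(\Kinf-2\delta/(1-\mu^\star))\rceil$ emerge from the analysis of Part~2. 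If you want to keep your initial decomposition, take $u$ to be this $n_1$ rather than $\lceil\ln T/\Kinf\rceil$.
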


\section{Numerical Experiments}
\label{sec:exp}

We provide here some numerical experiments comparing the different algorithms we refer to in this work.
These simulations are only provided for the sake of illustration: their high-level message is exactly what we expected
to see. Namely, we consider four benchmark algorithms, KL-UCB (yellow curves), MOSS (blue curves), IMED (purple curves),
and Tsallis-INF (red curves). Among these, KL-UCB and IMED perform the best from a distribution-dependent point of view (see Figure~\ref{fig:asymp})
while MOSS performs the best from a distribution-free point of view (see Figure~\ref{fig:minimax}).
We consider three instances of KL-UCB-Switch (green curves), with respective switch functions $f(t,K)= \lfloor t/K \rfloor^{\alpha}$
where $\alpha \in \{1/5, \, 1/2, \, 8/9\}$, and generally observe that well-calibrated versions of KL-UCB-Switch
perform as well as, and even outperform, the best benchmark strategies. \medskip

We provide a more detailed analysis below but first indicate the exact specifications of the four benchmark algorithms.
MOSS is implemented as in~\eqref{eq:defadamoss}.
KL-UCB is implemented based on the indices
\[
\sup \Biggl\{ \mu \in [0,1] \; \bigg\vert \; \Kinf\big(\hat{\nu}_a(t), \mu\big) \leq \frac{\phi(t)}{N_a(t)} \Biggr\}
\]
with $\phi(t) = \log t$; \citet{cappe_kullbackleibler_2013} recommended $\phi(t) = \ln t + \ln \ln t$ or
$\phi(t) = \ln t + 3 \ln \ln t$ depending on the model (distributions over $[0,1]$ with finite supports or exponential families),
so it was not clear what exploration function $\phi(t)$ to use, which is why we pick the
simplest choice $\phi(t) = \ln t$. Note also that unlike the definition~\eqref{eq:defadaklind},
we do not define the exploration bonus in terms of $\phi\bigl((t/K)/N_a(t)\bigr)$.
IMED, from \citet{honda_non-asymptotic_2015}, picks the arm
\[
A_t \in \arg\min \biggl\{ N_a(t) \,  \Kinf\Big( \hat \nu_a(t), \, \max_{j \in \{1,\ldots,K\}} \hat\mu_j(t) \Big) + \log N_a(t) \biggr\} \, .
\]
Tsallis-INF was originally introduced by \citet{audibert_minimax_2009} as a minimax optimal algorithm for adversarial rewards (and was
later identified, in \citealp{pmlr-v19-audibert11a}, as an instance of a follow-the-regularized-leader strategy).
\citet{seldin19} and \citet{zimmert2021tsallis} observed that Tsallis-INF also enjoys logarithmic distribution-dependent
regret bounds in the stochastic setting, and provided details on an efficient implementation thereof.
Tsallis-INF picks $A_t$ at random according to the probability distribution $(p_{t,a})_{a \in \{1,\ldots,K\}}$ with coordinates
\[
	p_{t, a} = 4\bigg( \eta_t \sum_{s= 1}^{t-1} \hat L_{s, a}  - C_t \bigg)^{-2}\,,  \qquad \mbox{where }
\qquad  \hat L_{s, a} = \frac{1 - Y_s}{p_{a, s}}\1{A_s = a} \, , \qquad   \eta_t = \frac{2}{\sqrt{t}} \,,
\]
and $C_t\in \mathbb{R}$ is a normalization factor.

\paragraph{Distribution-dependent bounds.}
We compare in Figure~\ref{fig:asymp} the distribution-dependent behaviors of the algorithms.
We use a logarithmic scale on the $x$--axis as the regrets scale logarithmically; we indeed
observe linear curves.
IMED is the best-performing benchmark for the three situations considered, followed by KL-UCB.
The regret of KL-UCB-Switch depends on $\alpha$: for the small value $\alpha = 1/5$,
the performance of KL-UCB-switch follows the one of MOSS; for the intermediate value $\alpha = 1/2$,
it follows the one of KL-UCB in two out of the three situations; finally, the choice $\alpha = 8/9$
outperforms all four benchmarks.

\paragraph{Distribution-free bounds.}
Figure~\ref{fig:minimax} reports the behavior of the normalized regret $R_T/\sqrt{KT}$, either as a function of $T$
(top part of the figure) or of $K$ (bottom part of the figure). This quantity should remain bounded as $T$ or $K$ increases.
MOSS and the three versions of KL-UCB-Switch share the same performance and clearly outperform the three other benchmarks.
The performance of KL-UCB seems to not scale optimally with $T$ or $K$, while the one for IMED scales well
with $T$ but seem to be slightly suboptimal with $K$.

\paragraph{Illustration of the switching profiles.}
Figures~\ref{fig:switch_profI} and~\ref{fig:switch_profII} illustrate the switching profiles
of optimal and suboptimal arms, in the case $\alpha = 1/5$. Therein, we provide, for each arm, an estimation of the probability,
according to time, that it lies in the ``KL-UCB mode'' \eqref{eq:defadaklind}
or in the ``MOSS mode'' \eqref{eq:defadamoss}. We also provide an estimation of the distribution
of the number of switches (back and forth) between the two modes.

In the first illustration, in Figure~\ref{fig:switch_profI},
we consider a Bernoulli bandit with $K=2$ Bernoulli arms with close means, namely $\mu_1 = 0.9$ and $\mu_2 = 0.75$.
Therein, for most of the runs, both arms switched only once and stayed in the MOSS mode the rest of the time.
For the optimal arm, $92\%$ of the runs had their switch exactly at time $t=4$, and the switch always occurred before time $t = 13$ on
the $1,000$ runs considered. For the suboptimal arm, the first switch occurred before time $t=30$ in $90\%$ of the runs, and before $t=54$ in $99\%$
of the runs. There were two outliers, with first-switch times at $t=440$ and $t=480$.

In the second illustration, in Figure~\ref{fig:switch_profII},
we consider another Bernoulli problem with larger suboptimality gaps in order to highlight the differences in behavior between the arms.
We take $K = 5$ arms, associated with means
\[
\mu_1 = 0.9, \qquad \mu_2 = \mu_3 = 0.6, \qquad \mbox{and} \qquad \mu_4 = \mu_5 = 0.3\,.
\]
More diverse behaviors arise: while the optimal arm again quickly switches to a MOSS mode,
the suboptimal arms have a large probability to switch four times. Also, at time $T = 5,000$, a significant
fraction of the arms is again in the initial KL-UCB mode.

\begin{figure}[t]
\center
\begin{tabular}{cc}
    \includegraphics[width=.48\textwidth]{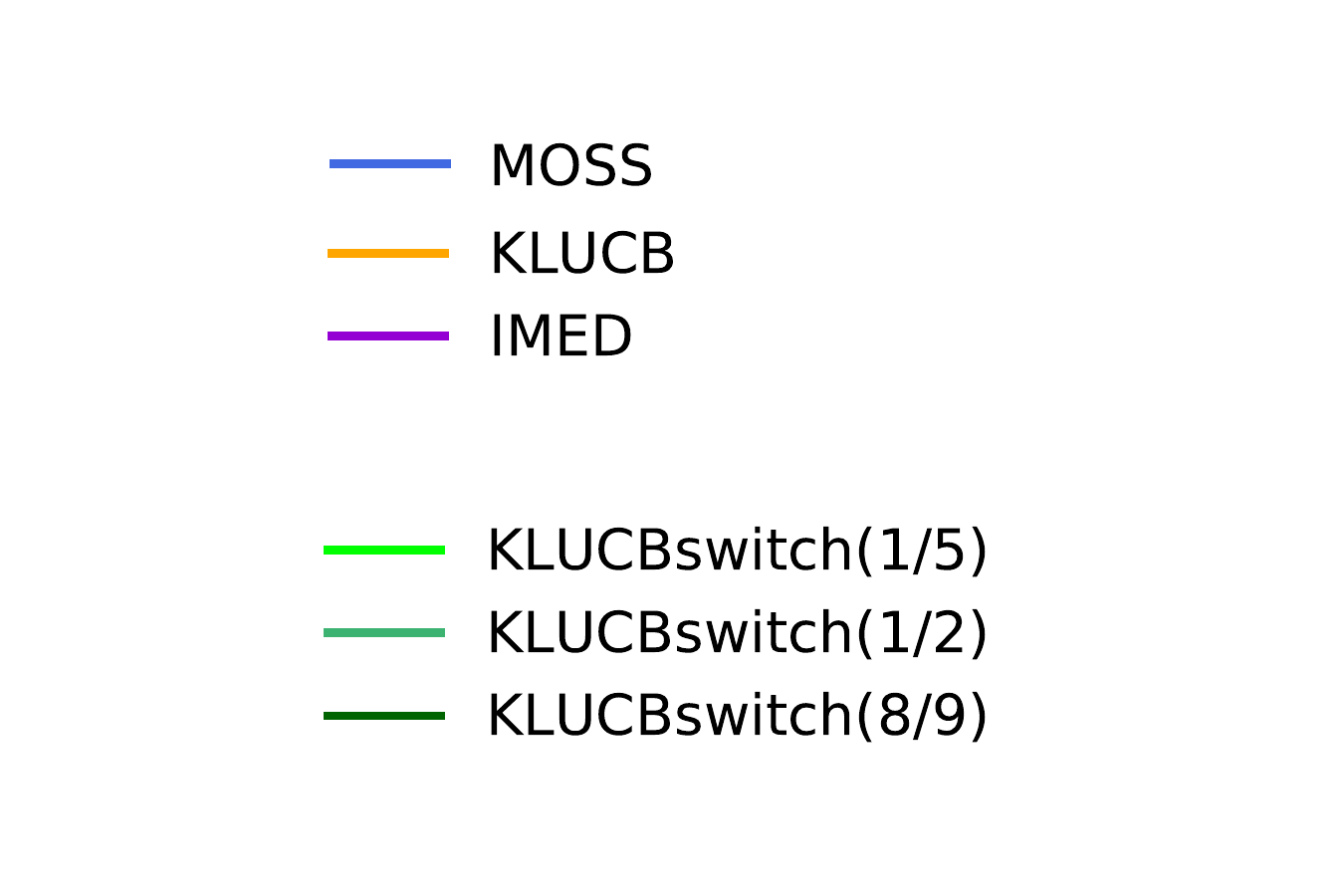} &
    \includegraphics[width=.48\textwidth]{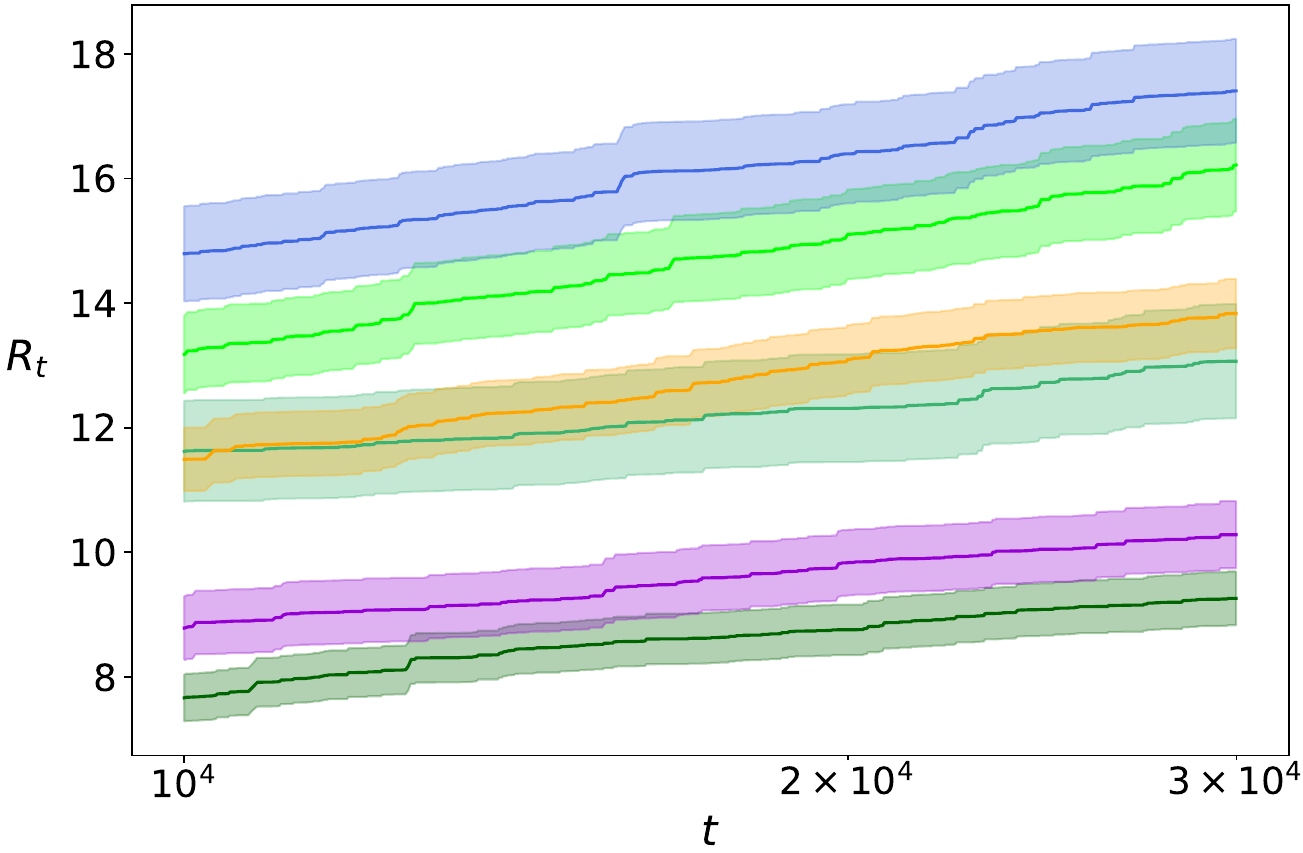} \\
    \includegraphics[width=.48\textwidth]{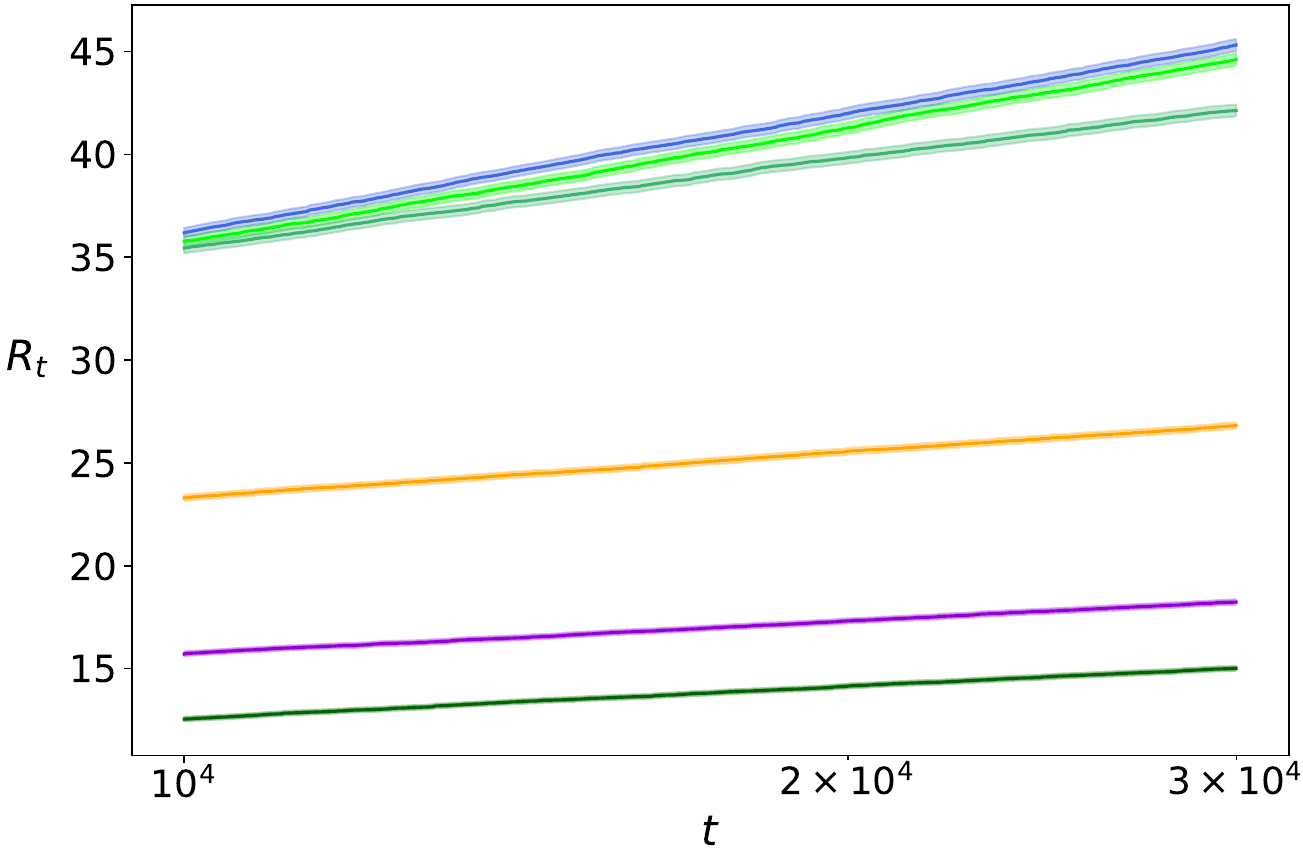} &
    \includegraphics[width=.48\textwidth]{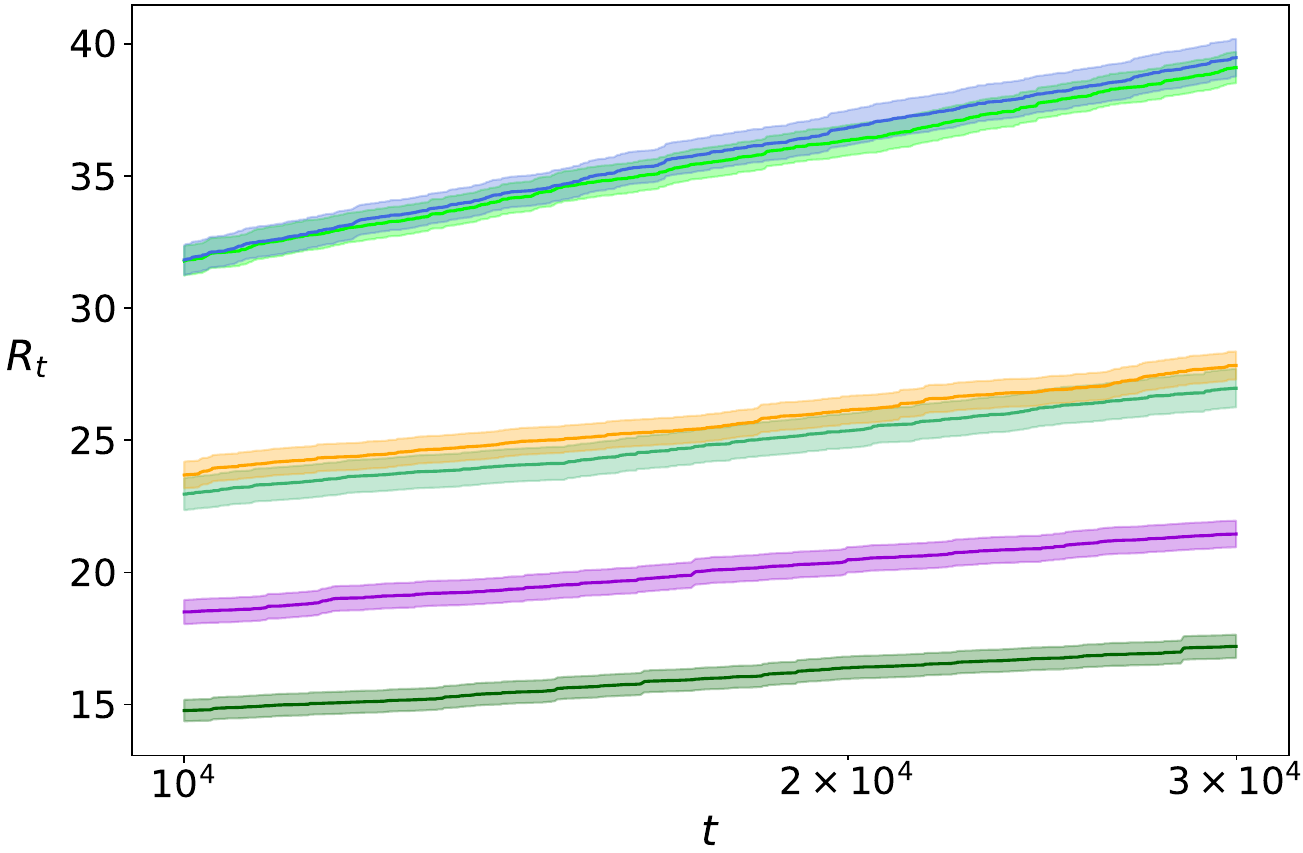}
\end{tabular}
\caption{Regrets approximated over 100 runs, shown on a logarithmic scale for the $x$--axis; the shaded areas correspond
to standard errors in the empirical means computed. Distributions of the arms
consist of:
\newline
[\emph{Top graph}] \hfill Bernoulli distributions with parameters $(0.9, \, 0.8)$ \newline
[\emph{Bottom-left graph}] \hfill Exponential distributions with expectations \newline
\phantom{exp}\hfill $(0.15,\, 0.12,\, 0.10,\, 0.05)$, truncated on $[0,1]$ \newline
[\emph{Bottom-right graph}] \hfill Gaussian distributions with means $(0.7,\, 0.5,\, 0.3,\, 0.2)$ \newline
\smallskip
\phantom{and}\hfill and same standard deviation~$\sigma=0.1$, truncated on $[0,1]$ \newline
The performance of Tsallis-INF is outside of the range considered and is therefore not displayed.}
\label{fig:asymp}
\end{figure}

\begin{figure}[ph]
\center
\begin{tabular}{c}
    \includegraphics[width=.98\textwidth]{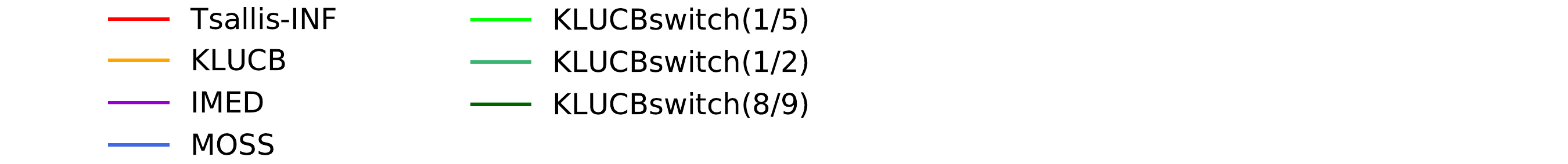} \\
    \ \\
    \includegraphics[width=.98\textwidth]{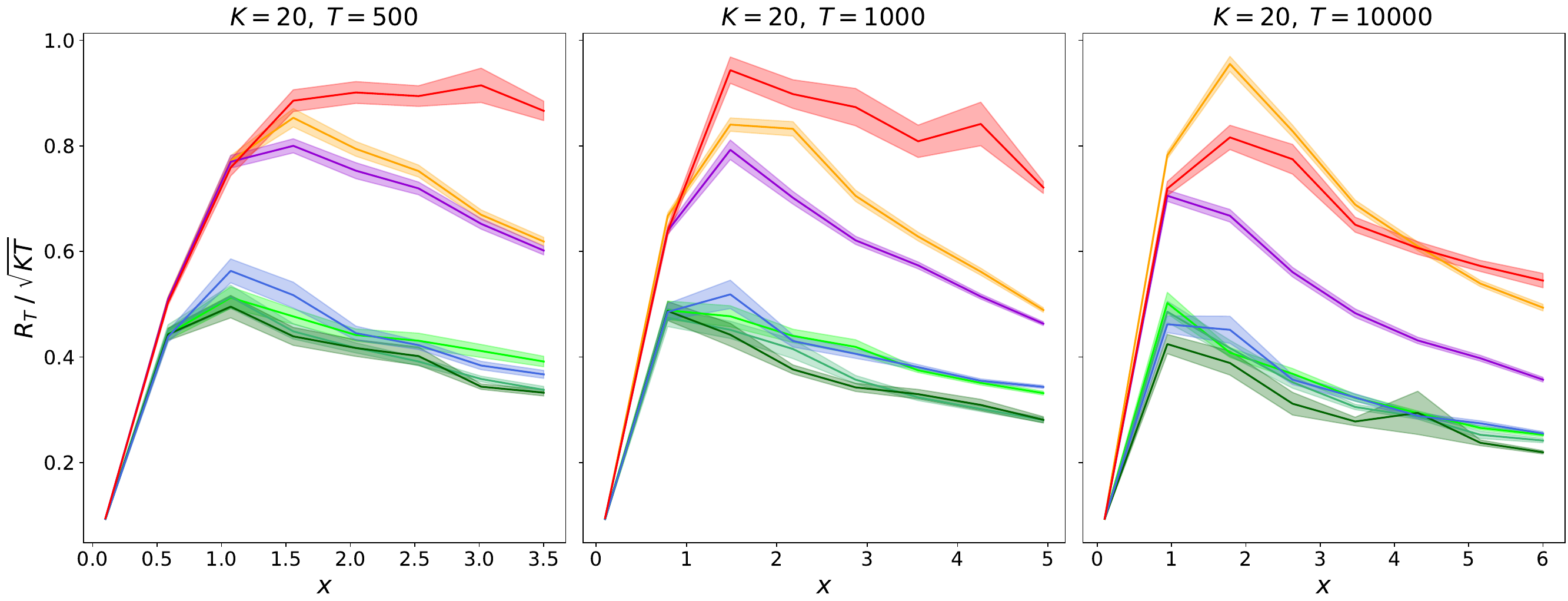} \\
    \ \\
    \includegraphics[width=.98\textwidth]{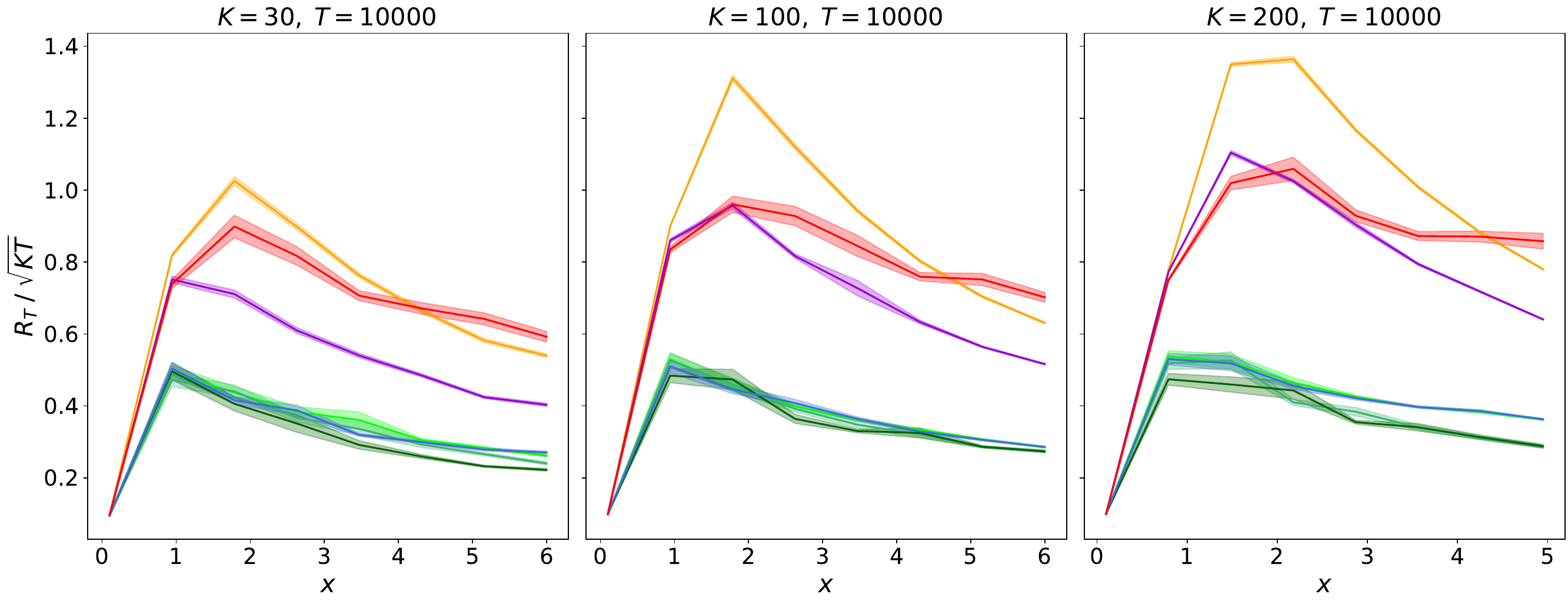}
\end{tabular}
\caption{Expected regret $R_T/\sqrt{K T}$, approximated over 100 runs;
the shaded areas correspond to standard errors in the empirical means computed. \smallskip \newline
\emph{Top graphs}: as a function of $x$, for a Bernoulli bandit problem with $K = 20$ arms,
for time horizons $T \in \{500;\,1,000;\, 10,000\}$, and for respective parameters $(0.8,\,\, 0.8-x\sqrt{K/T},\,\ldots,\, 0.8-x\sqrt{K/T})$
\smallskip \newline
\emph{Bottom graphs}: as a function of $x$, for a Bernoulli bandit problem with $K \in \{30,\,100,\,200\}$ arms,
for a time horizon $T = 10,000$, and for parameters $(0.8,\,\, 0.8-x\sqrt{K/T},\,\ldots,\, 0.8-x\sqrt{K/T})$
}
\label{fig:minimax}
\end{figure}

\begin{figure}[ph]
	\center
		\includegraphics[width=.98\textwidth]{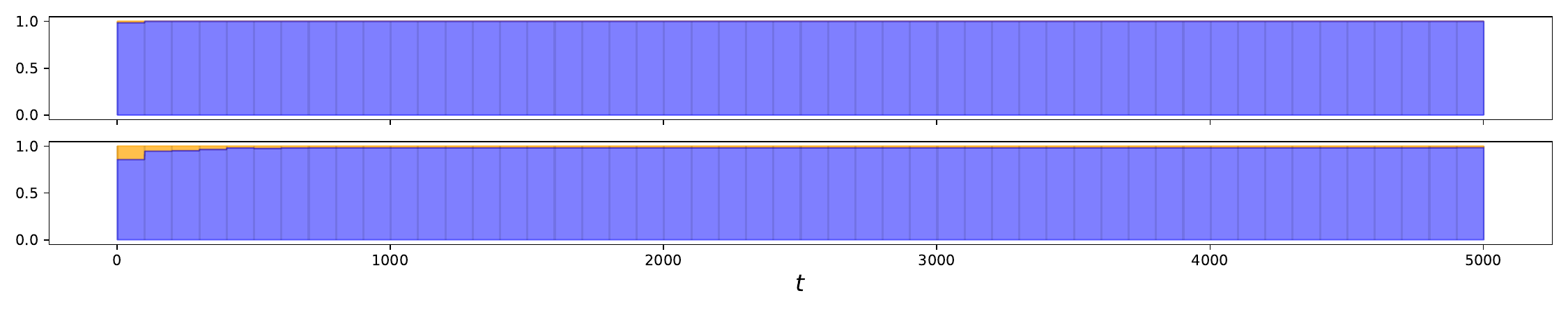} \medskip \\
		\begin{tabular}{ccccccccc}
			\hline
			Number of switches & & $0$ & $1$ & $2$ & $3$ & $4$ & $\geq 5$ \\
			\hline
			Optimal arm & \ & 0 & 100 & 0 & 0 & 0 & 0 \\
			Suboptimal arm & & 0 & 94.8 & 0.3 & 3.3 & 1.4 & 0.2 \\
			\hline
		\end{tabular}
		
\caption{\label{fig:switch_profI}
KL-UCB-Switch with $f(t,K)= \lfloor t/K \rfloor^{1/5}$ is run on a Bernoulli bandit problem with $K=2$ arms, of parameters $(0.9, \, 0.75)$, and for $T = 5,000$ rounds; $N = 1,000$ runs
are performed. \newline
\emph{Top graphs:} Each box depicts the proportion of runs for which the index of the corresponding arm was in MOSS mode (blue) or in KL mode (orange). \newline
\emph{Bottom table:} Distributions of the number of switches for each arm (from the KL-UCB mode to the MOSS mode, or the other way round).}
\end{figure}

\begin{figure}[ph]
	\center
	\includegraphics[width=.98\textwidth]{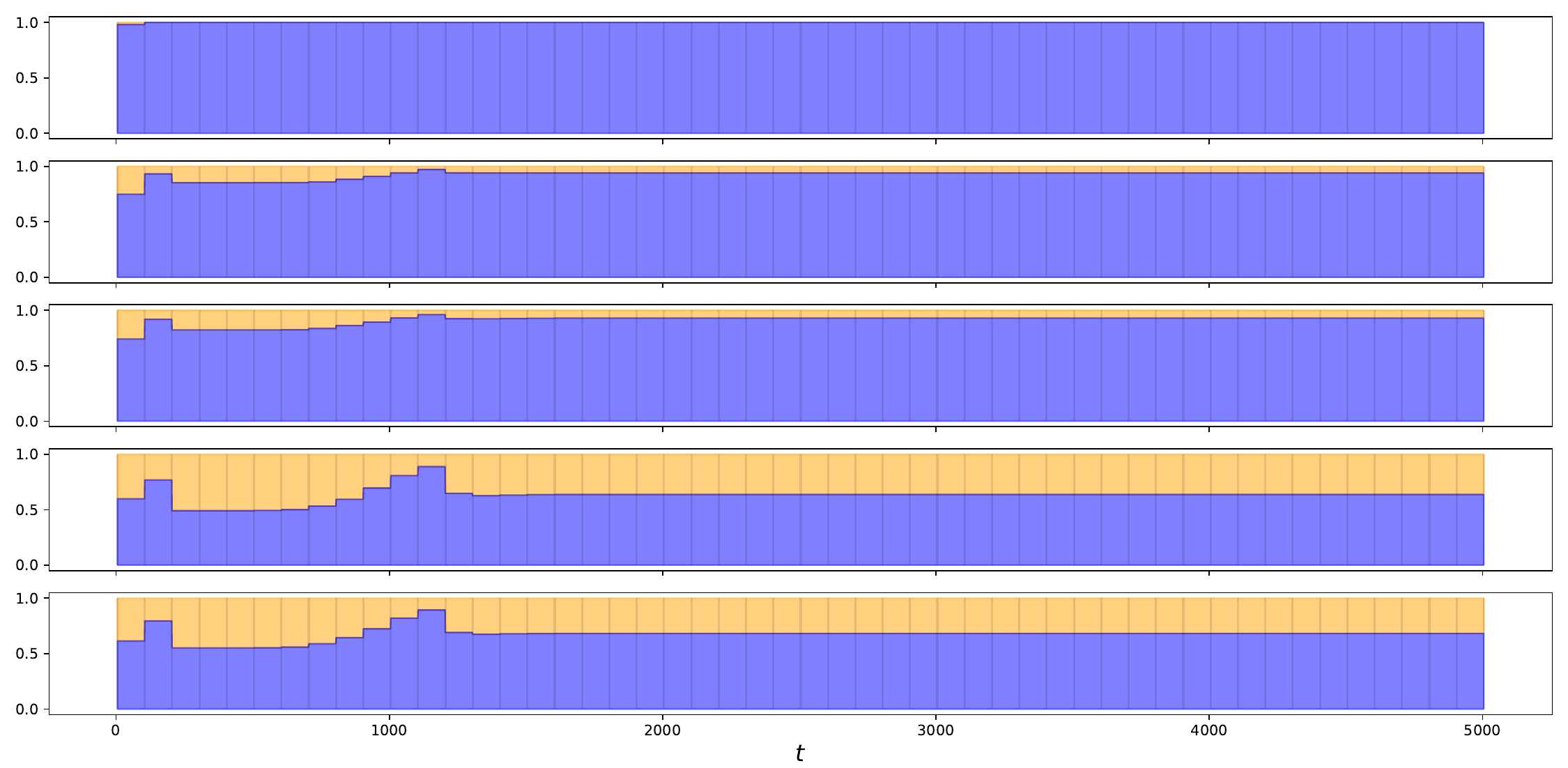} \medskip \\
	\begin{center}
		\begin{tabular}{ccccccccc}
			\hline
			Number of switches & & $0$ & $1$ & $2$ & $3$ & $4$ & $\geq 5$ \\
			\hline
			Optimal arm, $\mu_1 = 0.9$ & \ & 0& 100 & 0 & 0 & 0 & 0 \\
			Suboptimal arms, $\mu_2 = \mu_3 = 0.6$ & & 0& 82.2 & 0.9 & 10.8 &6.1& 0 \\
			Suboptimal arms, $\mu_4 = \mu_5 = 0.3$ & & 0 & 54.6  &5.8 &13.6&26.0&0 \\
			\hline
		\end{tabular}
	\end{center}
\caption{\label{fig:switch_profII} Same legend as for Figure~\ref{fig:switch_profI}, for
the Bernoulli bandit problem with $K=5$ arms, of parameters $(0.9, \, 0.6, \, 0.6, \, 0.3, \, 0.3)$.}
\end{figure}

\section{Results (More or Less) Extracted from the Literature} \label{sec:rederived}

We gather in this section results that are all known and published elsewhere (or almost).
For the sake of self-completeness we provide a proof of each of them (sometimes this
proof is shorter or simpler than the known proofs, and we then comment on this fact).
\emph{Readers familiar with the material described here are urged to move to the next section}.

\subsection{Optional Skipping---How to Go from Global Times $t$ to Local Times $n$}
\label{sec:optional}

The trick detailed here is standard in the bandit literature,
see, e.g., its application in~\citet{auer_finite-time_2002}.
It is sometimes called optional skipping, and sometimes, optional sampling;
we pick the first terminology, following what seems to be the
preferred terminology in probability theory\footnote{The abstract of a recent
article by \citet{articleskipping} reads: ``A general set of distribution-free conditions
is described under which
an i.i.d.\ sequence of random variables is preserved under optional skipping.
This work is motivated by theorems of J.L.\ Doob (1936)
and Z.\ Ignatov (1977), unifying and extending aspects of both.''}.
In any case, the original reference is
Theorem 5.2 of \citet[Chapter III, p. 145]{doob1953};
one can also check \citet[Section~5.3]{CT88} for a more recent reference.
\medskip

Doob's optional skipping enables the rewriting of various quantities like
$U_a(t), \, \hat{\mu}_a(t)$, etc., that are indexed by
the global time $t$, into versions indexed by the local
number of times $N_a(t) = n$ that the specific arm considered has been
pulled so far. The corresponding quantities will be denoted by
$U_{a,n}, \, \hat{\mu}_{a,n}$, etc.

The reindexation is possible as soon as the considered algorithm
pulls each arm infinitely often; it is the case for all algorithms
considered in this article (exploration never stops even if it becomes
rare after a certain time).

We denote by $\cF_0 = \{\emptyset,\Omega\}$ the trivial $\sigma$--algebra
and by $\cF_t$ the $\sigma$--algebra generated
by $A_1,Y_1,$\,$\ldots,$
$A_t,Y_t$, when $t \geq 1$.
We fix an arm $a$. For each $n \geq 1$, we denote by
\[
\uptau_{a,n} = \min \bigl\{ t \geq 1 : \ \ N_a(t) = n \bigr\}
\]
the round at which arm~$a$ was pulled for the $n$--th time.
Now, Doob's optional skipping
ensures that the random variables $X_{a,n} = Y_{\uptau_{a,n}}$
are independent and identically distributed according to $\nu_a$.

We can then define, for instance, for $n \geq 1$,
\[
\hat{\mu}_{a,n} = \frac{1}{n} \sum_{k=1}^n X_{a,k}
\]
and have the equality $\hat{\mu}_a(t) = \hat{\mu}_{a,N_a(t)}$ for $t \geq K$.
\[
\mbox{on the event} \ \bigl\{ N_a(t) = n \bigr\}\,, \qquad
\hat{\mu}_a(t) = \hat{\mu}_{a,N_a(t)} = \hat{\mu}_{a,n}\,.
\]
Here is an example of how to use this rewriting.

\begin{example}[Simple application]
\label{ex:1:optsk}
In our initial example, we start with a simple application:
we consider a subset $\mathcal{E} \subseteq [0,1]$ and
are interested in bounding the probability
\[
\P\bigl[ \hat{\mu}_a(t) \in \cE \bigr]\,.
\]
Recall that $N_a(t) \geq 1$ for $t \geq K$ and $N_a(t) \leq t-K+1$ as each arm was pulled once
in the first rounds. We get
\[
\bigl\{ \hat{\mu}_a(t) \in \cE \bigr\}
= \bigcup_{n=1}^{t-K+1} \bigl\{ \hat{\mu}_a(t) \in \cE \pand N_a(t) = n \bigr\}
= \bigcup_{n=1}^{t-K+1} \bigl\{ \hat{\mu}_{a,n} \in \cE \pand N_a(t) = n \bigr\}\,,
\]
so that, by a union bound,
\[
\P\bigl[ \hat{\mu}_a(t) \in \cE \bigr]
\leq \sum_{n=1}^{t-K+1} \P \bigl[ \hat{\mu}_{a,n} \in \cE \pand N_a(t) = n \bigr]
\leq \sum_{n=1}^{t-K+1} \P \bigl[ \hat{\mu}_{a,n} \in \cE \bigr]\,.
\]
The last sum above only deals with
independent and identically distributed random variables; we took care
of all dependency issues that are so present in bandit problems.
The price to pay, however, is that we bounded one probability by a sum
of probabilities.

Actually, a more careful use of optional skipping would be
\[
\P\bigl[ \hat{\mu}_a(t) \in \cE \bigr]
\leq \P \! \left[ \bigcup_{n=1}^{t-K+1} \bigl\{ \hat{\mu}_{a,n} \in \cE \bigr\} \right]
= \P \Bigl[ \exists \, n \in \{1,\ldots,t-K+1\} : \ \hat{\mu}_{a,n} \in \cE \Bigr]\,.
\]
\end{example}

There was no constraint on the number of times $N_a(t)$ arm $a$ was pulled in the previous example,
but imposing a lower bound $n_0 \geq 1$ on $N_a(t)$ leads to a summation over $n$ starting
not at~$1$ but at~$n_0$. For instance (and considering expectations for a change), given a
bounded function $g$,
\[
\E \Bigl[ f \bigl( \hat{\mu}_a(t) \bigr) \, \1{N_{a}(t) \geq n_0} \Bigr]
= \sum_{n=n_0}^{t-K+1} \E \Bigl[ f \bigl( \hat{\mu}_a(t) \bigr) \, \1{N_{a}(t) = n} \Bigr]
= \sum_{n=n_0}^{t-K+1} \E \Bigl[ f \bigl( \hat{\mu}_{a,n} \bigr) \, \1{N_{a}(t) = n} \Bigr]\,.
\]

\begin{example}[More complex application with random arms $A_t$]
\label{ex:2:optsk}
Given a subset $\mathcal{E} \subseteq [0,1]$ and a strategy to sequentially pick
arms $A_t$, we are now interested in bounding the sum of probabilities
\[
\sum_{t=1}^T \P\bigl[ \hat{\mu}_{A_t}(t) \in \cE \bigr]\,.
\]
We start with a decomposition according to the values $a$ of $A_t$ and $n$ of $N_a(t)$, for each $t$:
\begin{align*}
\bigl\{ \hat{\mu}_{A_t}(t) \in \cE \bigr\}
& = \bigcup_{a=1}^K \,\, \bigcup_{n=1}^{t-K+1} \bigl\{ \hat{\mu}_a(t) \in \cE \pand A_t = a \pand N_a(t) = n \bigr\} \\
& = \bigcup_{a=1}^K \,\, \bigcup_{n=1}^{t-K+1} \bigl\{ \hat{\mu}_{a,n} \in \cE \pand A_t = a \pand N_a(t) = n \bigr\}\,.
\end{align*}
Therefore (since for a given $t$, the events above are disjoint as $a$ and $n$ vary),
\[
\sum_{t=1}^T \P\bigl[ \hat{\mu}_{A_t}(t) \in \cE \bigr]
= \sum_{a=1}^K \sum_{n=1}^{t-K+1} \left( \sum_{t=1}^T \P \bigl[ \hat{\mu}_{a,n} \in \cE \pand A_t = a \pand N_a(t) = n \bigr] \right).
\]
Now, we observe that for a given pair $(a,n)$, the events
\[
\cN_{a,n,t} = \bigl\{ A_t = a \pand N_a(t) = n \bigr\}
\]
are disjoint as $t$ varies from $1$ to $T$ (but their union is not necessarily
the entire probability space). Indeed, if for a given
$t_0$ we have $A_{t_0} = a$ and $N_a(t_0) = n$, then
$N_a(t) \leq n-1$ for all $t \leq t_0-1$, while for
$t \geq t_0+1$, if $A_t = a$ then $N_a(t) \geq n+1$.
The combination of $A_t = a$ and $N_a(t) = n$ may therefore
happen for at most one value of $t \in \{1,\ldots,T\}$.
Because of this, for a given pair $(a,n)$, we get the upper bound
\[
\sum_{t=1}^T \P \bigl[ \hat{\mu}_{a,n} \in \cE \pand A_t = a \pand N_a(t) = n \bigr]
\leq \P \bigl[ \hat{\mu}_{a,n} \in \cE \bigr]\,.
\]
All in all, collecting all inequalities, we have
\[
\sum_{t=1}^T \P\bigl[ \hat{\mu}_{A_t}(t) \in \cE \bigr]
\leq \sum_{a=1}^K \sum_{n=1}^{t-K+1} \P \bigl[ \hat{\mu}_{a,n} \in \cE \bigr]\,.
\]
\end{example}

\subsection{Maximal Version of Hoeffding's Inequality}

The maximal version of Hoeffding's inequality (Proposition~\ref{prop:hoeffding}) is a standard result from \citet{hoeffding_probability_1963}.
It was already used in the original analysis of MOSS (\citealp{audibert_minimax_2009}).
For our slightly simplified analysis of MOSS (see Section~\ref{sec:th-moss}), we will rather rely on Corollary~\ref{prop:hoeffdingintegrated},
a consequence of Proposition~\ref{prop:hoeffding} obtained by integrating it.

\begin{proposition}\label{prop:hoeffding}
	Let $X_1, \dots, X_n$ be a sequence of i.i.d.\ random variables bounded in $[0,1]$ and let $\hat{\mu}_n$ denote their empirical mean. Then for all $u \geq 0$ and for all $N \geq 1$:
	\begin{equation}
\label{eq:Hoeffding-max}
		\P \! \left[ \max_{n \geq N} \, \big(\hat{\mu}_n - \mu\big)  \geq u \right] \leq \e^{- 2 N u^2}\,.
	\end{equation}
\end{proposition}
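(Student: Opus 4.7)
The inequality goes beyond a pointwise Hoeffding bound because it controls a supremum over infinitely many sample sizes; the cleanest route is via an exponential supermartingale combined with Ville's maximal inequality (the $n \geq 0$ version of Doob's maximal inequality for non-negative supermartingales).

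First, I would set $S_n = \sum_{i=1}^n (X_i - \mu)$ so that $\hat{\mu}_n - \mu = S_n/n$. Since the $X_i$ are independent and take values in $[0,1]$, Hoeffding's lemma yields $\E\bigl[\e^{\lambda (X_i - \mu)}\bigr] \leq \e^{\lambda^2/8}$ for every $\lambda \in \mathbb{R}$. Fixing $\lambda > 0$, I would then introduce
\[
M_n \defeq \exp\!\Bigl( \lambda S_n - n \lambda^2/8 \Bigr)
\]
and check, by conditioning on the natural filtration and using Hoeffding's lemma, that $(M_n)_{n \geq 0}$ is a non-negative supermartingale with $M_0 = 1$.

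Next, I would translate the event of interest into one involving $M_n$. For $\lambda \in (0, 8u]$ the constant $\lambda u - \lambda^2/8$ is non-negative, so when $n \geq N$ and $\hat{\mu}_n - \mu \geq u$, one has $\lambda S_n - n\lambda^2/8 \geq n(\lambda u - \lambda^2/8) \geq N(\lambda u - \lambda^2/8)$. In other words,
\[
\Bigl\{ \exists\, n \geq N : \ \hat{\mu}_n - \mu \geq u \Bigr\}
\subseteq \Bigl\{ \sup_{n \geq 0} M_n \geq \e^{N(\lambda u - \lambda^2/8)} \Bigr\}.
\]
Applying Ville's inequality to the supermartingale $M$ gives
\[
\P\!\left[ \sup_{n \geq N} (\hat{\mu}_n - \mu) \geq u \right]
\leq \e^{-N(\lambda u - \lambda^2/8)}.
\]
Optimizing the right-hand side in $\lambda$ leads to $\lambda^\star = 4u$ (which indeed lies in $(0, 8u]$), yielding $N(\lambda^\star u - (\lambda^\star)^2/8) = 2Nu^2$, which is the claimed bound.

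The only genuinely non-routine ingredient is the use of Ville's inequality to control the supremum of a non-negative supermartingale over \emph{all} indices, rather than Doob's inequality over a bounded horizon: this is what allows the bound to match the single-time Hoeffding inequality at level $N$, with no logarithmic inflation from a peeling argument. Everything else is a standard Chernoff/Cram\'er-type optimisation.
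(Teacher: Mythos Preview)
Your proof is correct. The paper does not actually prove this proposition: it merely cites it as ``a standard result from \citet{hoeffding_probability_1963}'' and moves on to derive the integrated corollary. Your supermartingale/Ville argument is one of the standard clean routes to the maximal form, and every step checks out: $(M_n)$ is indeed a non-negative supermartingale by Hoeffding's lemma, the event inclusion is valid precisely because $\lambda u - \lambda^2/8 \geq 0$ on $(0,8u]$ so that $n \geq N$ yields the lower bound $N(\lambda u - \lambda^2/8)$, and the optimisation at $\lambda^\star = 4u$ gives exactly $2Nu^2$. The boundary case $u = 0$ is trivial since the right-hand side is $1$.

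The only minor remark is that Hoeffding's original 1963 argument proceeds via Doob's submartingale maximal inequality applied to $\e^{\lambda S_n}$ over a finite horizon $N \leq n \leq M$, followed by letting $M \to \infty$; your use of Ville's inequality handles the infinite horizon in one stroke, which is arguably tidier but not materially different.
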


\begin{corollary}\label{prop:hoeffdingintegrated}
Under the same assumptions, for all $\epsilon \geq 0$,
		\begin{equation}
		\E \! \left[ \left( \max_{n \geq N} \big(  \mu  - \hat{\mu}_{n}  - \epsilon \big)   \right)^{\! +}  \right] \leq \sqrt{\frac{\pi}{8}} \sqrt{\frac{1}{N}} \, \e^{- 2N \epsilon^2}\,.
		\end{equation}
\end{corollary}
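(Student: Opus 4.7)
The strategy is the standard layer-cake/integration trick: express the positive-part expectation as an integral of tail probabilities, and feed in the maximal Hoeffding bound.

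First, I would note that although Proposition~\ref{prop:hoeffding} is stated for $\max_{n\geq N}(\hat\mu_n-\mu)$, the same inequality holds for $\max_{n\geq N}(\mu-\hat\mu_n)$: replacing $X_i$ by $1-X_i$ preserves the assumption of i.i.d.\ $[0,1]$-valued variables and swaps the roles of $\hat\mu_n-\mu$ and $\mu-\hat\mu_n$. Thus for every $v\geq 0$,
\[
\P\!\left[\max_{n\geq N}\bigl(\mu-\hat\mu_n\bigr)\geq v\right]\;\leq\;\e^{-2Nv^2}\,.
\]

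Next, set $Z=\max_{n\geq N}(\mu-\hat\mu_n-\epsilon)$ and use the layer-cake identity $\E[Z^+]=\int_0^\infty\P[Z>u]\,\d u$, valid for any real-valued $Z$. Since $\{Z>u\}=\{\max_{n\geq N}(\mu-\hat\mu_n)>u+\epsilon\}$ and $u+\epsilon\geq 0$, the displayed inequality above yields
\[
\E\bigl[Z^+\bigr]\;\leq\;\int_0^\infty \e^{-2N(u+\epsilon)^2}\,\d u\,.
\]

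Finally, the bound $(u+\epsilon)^2\geq u^2+\epsilon^2$ for $u,\epsilon\geq 0$ gives $\e^{-2N(u+\epsilon)^2}\leq \e^{-2N\epsilon^2}\e^{-2Nu^2}$, and the Gaussian integral
\[
\int_0^\infty \e^{-2Nu^2}\,\d u \;=\; \frac{1}{2}\sqrt{\frac{\pi}{2N}}\;=\;\sqrt{\frac{\pi}{8}}\sqrt{\frac{1}{N}}
\]
concludes the proof. There is really no obstacle here: the only tiny subtlety is remembering to invoke the maximal inequality in the opposite direction (via $X_i\mapsto 1-X_i$), and then keeping the $\e^{-2N\epsilon^2}$ factor outside the Gaussian integral by dropping the cross term $4N\epsilon u\geq 0$.
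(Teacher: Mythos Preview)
Your proof is correct and follows essentially the same route as the paper's: express the positive part via the layer-cake identity, apply the maximal Hoeffding bound to get $\int_0^\infty \e^{-2N(u+\epsilon)^2}\,\d u$, drop the cross term $4N\epsilon u\geq 0$, and evaluate the Gaussian integral. Your explicit justification of the symmetry via $X_i\mapsto 1-X_i$ is in fact slightly more careful than the paper, which just mentions the symmetry in passing.
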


\noindent
Of course, by symmetry Proposition~\ref{prop:hoeffding} and Corollary~\ref{prop:hoeffdingintegrated}
hold with $\mu - \hat{\mu}_n$ instead of~$\hat{\mu}_n - \mu$. \medskip

\begin{proof}
By the Fubini-Tonelli theorem, an integration of the maximal deviation inequality~\eqref{eq:Hoeffding-max} yields
\begin{multline*}
	\E \! \left[ \Big( \max_{n \geq N} \big(  \mu  - \hat{\mu}_{n}  - \epsilon \big)   \Big)^{\! +}  \right] =
	\int_{0}^{+ \infty} \P \Big[ \max_{n \geq N} \, \big(\hat{\mu}_n - \mu - \varepsilon \big) \geq u \Big] \d u\\
	\leq \int_{0}^{+ \infty} \e^{-2 N (u+\varepsilon)^2} \d u \leq \e^{-2N\varepsilon^2}\int_{0}^{+ \infty} \e^{-2 N u^2} \d u = \sqrt{\frac{\pi}{8}} \sqrt{\frac{1}{N}} \, \e^{- 2N \epsilon^2}\,.
\end{multline*} \vspace{-.75cm}

\end{proof}

\subsection{Distribution-Free Bound for the MOSS Algorithm}
\label{sec:th-moss}

Such a distribution-free bound was already provided in the literature, both
for a known horizon $T$ (see \citealp{audibert_minimax_2009}) and for an anytime version (see \citealp{degenne_anytime_2016}).
We only provide a slightly shorter and more focused proof
of these results based on Corollary~\ref{prop:hoeffdingintegrated} and
indicate an intermediate result---see~\eqref{eq:prop:MOSS}---that will
be useful for us in the analysis of our new KL-UCB-Switch algorithm.
We do not claim any improvement on the results
themselves, just a clarification of the existing proofs.

Our proof is slightly shorter and more focused for two reasons.
First, in the two references mentioned,
the peeling trick was used on the probabilities of deviations (see Proposition~\ref{prop:hoeffding})
and had to be performed separately and differently for each deviation $u$; then, these probabilities were
integrated to obtain a control on the needed expectations. In contrast, we perform the peeling trick
directly on the expectations at hand, and we do so by applying it only once, based on Corollary~\ref{prop:hoeffdingintegrated} and
at fixed times depending solely on $T$.
Second, unlike the two mentioned references, we do not attempt to
simultaneously build a distribution-free and some type of distribution-dependent bound.
This raised technical difficulties because of the correlations between the choices of the arms and the observed rewards.
The idea of our approach is to focus solely on the distribution-free regime, for which we notice that some crude bounding
neglecting the correlations suffice (i.e., our analysis deals with all sub-optimal arms in the same way, independently of how
often they are played). \medskip

For a known horizon $T$,
we denote by $A_{t+1}^{\moss}$ the arm played by the index strategy maximizing, at each step $t+1$ with $t \geq K$,
the quantities~\eqref{eq:MOSS_index}:
\[
U^{\moss}_a(t) \defeq \hat{\mu}_a(t) + \sqrt{ \frac{1}{2N_a(t)} \ln_{+} \! \bigg( \frac{T}{KN_a(t)} \bigg)}\,.
\]
The superscripts M in $A_{t+1}^{\moss}$ and $U^{\moss}_a(t)$ stand for MOSS.
We do so not to mix it with the arm $A_{t+1}$ played by the KL-UCB-Switch strategy (no superscript), but of course,
once an arm $a$ was sufficiently pulled, we have $A_{t+1} = A_{t+1}^{\moss}$ by definition of the KL-UCB-Switch strategy.

Appendix~\ref{sec:MOSS} provides the proof of the following regret bound.
We denote by $a^\star$ an optimal arm, i.e., an arm such that $\mu_a = \mu^\star$.

\begin{proposition}\label{prop:MOSS}
For a known horizon $T \geq 1$,
for all bandit problems $\unu$ over $[0,1]$, MOSS achieves a regret bound smaller than
$R_T \leq (K-1) + 17 \sqrt{KT}$. More precisely, with the notation of optional skipping (Section~\ref{sec:optional}),
we have the inequalities
\begin{align}
\nonumber
& R_T = T\mu^\star - \E\!\left[ \sum_{t=1}^T \mu_{A_t^{\moss}} \right] \\
\nonumber
& \leq (K-1) +
\overbrace{\sum_{t= K+1}^T \E \Bigl[ \bigl( \mu^\star - U^{\moss}_{a^\star}(t-1) \bigr)^+  \Bigr]}^{\leq 13\sqrt{KT}} \\
\label{eq:prop:MOSS}
& \phantom{\leq (K-1)}\, + \underbrace{\sqrt{KT} + \sum_{a=1}^K \sum_{n=1}^T \E \!\left[ \Biggl( \hat{\mu}_{a,n} + \sqrt{ \frac{\ln_{+}\bigl(T/(K n)\bigr)}{2 n}}
- \mu_{a} - \sqrt{\frac{K}{T}} \Biggr)^{\!\! +} \right]}_{\leq 4\sqrt{KT}}.
\end{align}
\end{proposition}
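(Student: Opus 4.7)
The proof follows the classical optimism-based decomposition for upper-confidence-bound algorithms, with the simplification, announced in Section~\ref{sec:th-moss}, that peeling is performed on integrated expectations rather than on probabilities. First, for each $t \geq K$, the MOSS selection rule ensures $U^{\moss}_{A^{\moss}_{t+1}}(t) \geq U^{\moss}_{a^\star}(t)$, whence
\[
\mu^\star - \mu_{A^{\moss}_{t+1}} \leq \bigl( \mu^\star - U^{\moss}_{a^\star}(t) \bigr)^+ + \bigl( U^{\moss}_{A^{\moss}_{t+1}}(t) - \mu_{A^{\moss}_{t+1}} \bigr)\,.
\]
Introducing a deterministic slack $\sqrt{K/T}$ inside the second summand and summing over $t = K, \ldots, T-1$ (the first $K$ initialization rounds contribute at most $(K-1)$ to the regret, and the slack contributes at most $\sqrt{KT}$ over $T$ summands) produces a bound involving two random sums: $\sum_t \E\bigl[(\mu^\star - U^{\moss}_{a^\star}(t))^+\bigr]$ and $\sum_t \E\bigl[(U^{\moss}_{A^{\moss}_{t+1}}(t) - \mu_{A^{\moss}_{t+1}} - \sqrt{K/T})^+\bigr]$.

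The second of these is handled by optional skipping (Section~\ref{sec:optional}): each time an arm $a$ is chosen, the local count $N_a(t)$ takes a distinct value in $\{1, \ldots, N_a(T)\}$, so this sum is dominated by the double sum $\sum_{a=1}^K \sum_{n=1}^T \E\bigl[(\hat\mu_{a,n} + b(n) - \mu_a - \sqrt{K/T})^+\bigr]$, where $b(n) = \sqrt{\ln_+(T/(Kn))/(2n)}$; this is the decomposition~\eqref{eq:prop:MOSS}. To bound the first random sum by $13\sqrt{KT}$, I would use optional skipping once more to majorize the generic term by a supremum,
\[
\bigl(\mu^\star - U^{\moss}_{a^\star}(t-1)\bigr)^+ \leq \sup_{n \geq 1} \bigl(\mu^\star - \hat\mu_{a^\star, n} - b(n)\bigr)^+\,,
\]
then peel the right-hand side over dyadic blocks $[2^k, 2^{k+1})$, using $b(n) \geq b(2^{k+1})$ on each block, and integrate via Corollary~\ref{prop:hoeffdingintegrated} block-by-block; summing the resulting geometric series yields a bound of order $\sqrt{K/T}$ on the supremum, which multiplied by the crude count $T$ of summands gives $13\sqrt{KT}$. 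To bound the doubly-indexed sum by $3\sqrt{KT}$, I would split at $n \approx T/K$: above this threshold, $b(n) = 0$ and Corollary~\ref{prop:hoeffdingintegrated} applied with shift $\sqrt{K/T}$ produces Gaussian tails summable in $n$; below the threshold, a similar direct integration gives the remaining contribution, with both regimes combining to the announced $3\sqrt{KT}$.

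The main obstacle is the bookkeeping of numerical constants in the two peeling/summation steps: one must carefully combine the geometric sums with the $b(n)$ shifts to recover the explicit values $13$ and $3$ (and hence $17 = 13 + 4$). All the probabilistic content is already packaged inside Corollary~\ref{prop:hoeffdingintegrated}; what remains is elementary but delicate algebra on logarithmic and geometric factors, which is precisely where the ``integrate then peel'' approach saves effort compared to the ``peel then integrate'' analyses of \citet{audibert_minimax_2009} and \citet{degenne_anytime_2016}.
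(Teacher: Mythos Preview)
Your overall skeleton matches the paper's: same optimism decomposition, same introduction of the slack $\sqrt{K/T}$, same use of optional skipping to pass to the double sum. The handling of the doubly-indexed sum is essentially the paper's (split the summand via $(x+y)^+\leq x^+ + y^+$ into the pure deviation $(\hat\mu_{a,n}-\mu_a-\sqrt{K/T})^+$ and the deterministic bonus $b(n)$, then integrate and sum), so that part is fine.

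The gap is in your peeling for the first sum. A \emph{dyadic} grid $[2^k,2^{k+1})$ is precisely the borderline case where the ``geometric series'' you announce has ratio~$1$ and does not converge. Indeed, on block $[2^k,2^{k+1})$ with $2^{k+1}\leq T/K$, applying Corollary~\ref{prop:hoeffdingintegrated} with $N=2^k$ and $\epsilon=b(2^{k+1})$ gives
\[
2N\epsilon^2 \;=\; 2\cdot 2^k\cdot \frac{1}{2\cdot 2^{k+1}}\,\ln\!\Big(\tfrac{T}{K2^{k+1}}\Big)
\;=\;\tfrac12\,\ln\!\Big(\tfrac{T}{K2^{k+1}}\Big),
\qquad
\sqrt{\tfrac{\pi}{8N}}\,\e^{-2N\epsilon^2}
\;=\;\sqrt{\tfrac{\pi}{4}}\,\sqrt{\tfrac{K}{T}}\,,
\]
which is \emph{independent of $k$}. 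Summing over the $\approx\log_2(T/K)$ blocks yields a bound of order $\sqrt{K/T}\,\ln(T/K)$ on $\E\bigl[\sup_{n\geq 1}(\mu^\star-\hat\mu_{a^\star,n}-b(n))^+\bigr]$, and after multiplying by $T$ you get $\sqrt{KT}\,\ln(T/K)$, not $\sqrt{KT}$.

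The paper avoids this by taking a geometric grid with ratio $\beta\in(1,2)$ (it uses $\beta=3/2$): then the per-block contribution is $\sqrt{\pi/8}\,\sqrt{K/T}\,\beta^{1/2+\ell(1/2-1/\beta)}$, which is genuinely geometric with ratio $\beta^{1/2-1/\beta}<1$ and sums to a constant times $\sqrt{K/T}$. (The paper also first splits off the regime $N_{a^\star}(t)\geq T/K$, where the bonus vanishes, and handles it directly by Corollary~\ref{prop:hoeffdingintegrated} with $\epsilon=0$.) Once you replace the factor~$2$ by any fixed $\beta<2$, your plan goes through and recovers the stated constants.
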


\begin{remark}
The proof (see Remark~\ref{rm:boundforallalgo}) actually reveals that
for a known horizon $T \geq 1$, for all bandit problems $\unu$ over $[0,1]$,
and for all strategies (not only MOSS), the following bound holds:
\[
\sum_{t= K+1}^T \E \Bigl[ \bigl( \mu^\star - U^{\moss}_{a^\star}(t-1) \bigr)^+  \Bigr] \leq 13\sqrt{KT}\,.
\]
We will re-use this fact to state a similar remark below (Remark~\ref{rm:propMOSS-adaptive-OK}),
which will be useful for Part~2 of the proof lying in Section~\ref{sec:proofs:distfree}.
\end{remark}

Our proof in Appendix~\ref{sec:MOSS} reveals that designing an adaptive version of MOSS
comes at no effort. For this adaptive version we will also want
to possibly explore more. We will do so by considering an augmented exploration function $\varphi$,
that is, a function $\varphi \geq \ln_+$ as in~\eqref{eq:augmented}.
We therefore define MOSS-anytime (M-A) as relying on the indexes
defined in~\eqref{eq:defadamoss}, which we copy here:
\[
U^{\adamoss}_a(t) \defeq \hat{\mu}_a(t) + \sqrt{ \frac{1}{2N_a(t)} \, \varphi \bigg( \frac{t}{KN_a(t)} \bigg)}\,.
\]
We denote by $A_{t+1}^{\adamoss}$ the arm picked as
$\displaystyle{\argmax_{a = 1, \dots, K} U^{\adamoss}_a(t)}$.

\begin{proposition}\label{prop:MOSS-adaptive}
For all horizons $T \geq 1$,
for all bandit problems $\unu$ over $[0,1]$, MOSS-anytime achieves a regret bound smaller than
$R_T \leq (K-1) + c \sqrt{KT}$ where $c = 30$ for $\varphi = \ln_+$ and
$c = 33$ for the augmented exploration function $\varphi(x) = \log_+\!\big(x  (1 + \log_+^2 x)\big)$
defined in~\eqref{eq:augmented}. More precisely, with the notation of optional skipping (Section~\ref{sec:optional}), we have the inequalities
\begin{align}
\nonumber
R_T & = T\mu^\star - \E\!\left[ \sum_{t=1}^T \mu_{A_t^{\adamoss}} \right] \\
\nonumber
& \leq (K-1) +
\overbrace{\sum_{t= K+1}^T \E \Bigl[ \bigl( \mu^\star - U^{\adamoss}_{a^\star}(t-1) \bigr)^+  \Bigr]}^{\leq 26\sqrt{KT}} \\
\label{eq:prop:MOSS-adaptive}
& \phantom{\leq (K-1)}\, + \underbrace{\sqrt{KT} + \sum_{a=1}^K \sum_{n=1}^T \,
\E \! \left[ \Biggl( \hat{\mu}_{a,n} + \sqrt{ \frac{\varphi\bigl(T/(K n)\bigr)}{2 n}}
- \mu_{a} - \sqrt{\frac{K}{T}} \Biggr)^{\!\! +} \right]}_{\leq 4\sqrt{KT} \ \mbox{\tiny\rm for} \ {\varphi = \ln_+} \!
\ \mbox{\tiny\rm and} \ 7\sqrt{KT} \ \mbox{\tiny\rm for} \ \varphi(x) = \log_+(x  (1 + \log_+^2 x))}.
\end{align}
\end{proposition}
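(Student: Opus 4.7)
The plan is to mirror the proof of Proposition~\ref{prop:MOSS} carried out in Appendix~\ref{sec:MOSS}, making the adjustments required by the anytime nature of the algorithm. After separating out the $K$ initial forced pulls (which contribute at most $K-1$ to the regret), the fact that $A_{t+1}^{\adamoss}$ maximizes the indices over all arms yields $U^{\adamoss}_{A_{t+1}^{\adamoss}}(t) \geq U^{\adamoss}_{a^\star}(t)$, so at each round $t \geq K$,
\[
\mu^\star - \mu_{A_{t+1}^{\adamoss}} \leq \bigl( \mu^\star - U^{\adamoss}_{a^\star}(t) \bigr)^+ + \bigl( U^{\adamoss}_{A_{t+1}^{\adamoss}}(t) - \mu_{A_{t+1}^{\adamoss}} \bigr)^+.
\]
Summing over $t$ and taking expectations then isolates the two positive-part sums that appear in the underbraces of~\eqref{eq:prop:MOSS-adaptive}, and the rest of the proof amounts to bounding each of them separately.

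For the overestimation sum (the second underbraced term), the reduction to the known-horizon setting is direct. Since $\varphi$ is non-decreasing and $t \leq T$, relation~\eqref{eq:Umoss-varphi} gives $U^{\adamoss}_a(t) \leq U^{\moss,\varphi}_a(t)$, so the index inside the expectation no longer depends on the global time. Introducing the safe threshold $\sqrt{K/T}$ explains the isolated $\sqrt{KT}$ term: the constant part integrates to $T \cdot \sqrt{K/T} = \sqrt{KT}$, and the residual positive parts, after applying optional skipping (Section~\ref{sec:optional}), rewrite as a double sum over arms $a$ and local pull counts $n$ involving only the i.i.d.\ empirical averages $\hat{\mu}_{a,n}$. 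Each summand is then controlled via Corollary~\ref{prop:hoeffdingintegrated}, and a direct computation, peeling over geometric scales in $n$, yields the $4\sqrt{KT}$ bound for $\varphi = \ln_+$. For the augmented function $\varphi(x) = \log_+\!\big( x(1 + \log_+^2 x) \big)$, the additional $1 + \log_+^2 x$ factor inside $\varphi$ contributes a controlled extra amount at each scale, leading to the announced $7\sqrt{KT}$.

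The main obstacle is the underestimation sum $\sum_{t=K+1}^{T} \E\bigl[ ( \mu^\star - U^{\adamoss}_{a^\star}(t-1))^+ \bigr]$. In the known-horizon proof of Proposition~\ref{prop:MOSS}, the exploration term involves $\ln_+(T/(Kn))$, which depends on the local count $n$ only, allowing the peeling to be carried out ``at fixed times depending solely on $T$''; in the anytime setting, the presence of $t$ inside $\varphi(t/(KN_{a^\star}(t-1)))$ prevents this shortcut, and we cannot exploit $t \leq T$ since we now need a \emph{lower} bound on the index. The plan is, after applying optional skipping to the optimal arm to obtain
\[
\bigl( \mu^\star - U^{\adamoss}_{a^\star}(t-1) \bigr)^+ \leq \max_{n \geq 1} \bigl( \mu^\star - \hat{\mu}_{a^\star,n} - \sqrt{\varphi(t/(Kn))/(2n)} \bigr)^+,
\]
to perform a peeling of the local count $n$ on dyadic scales; on each block, $\varphi(t/(Kn))$ admits uniform bounds depending only on the block index and on $t$, which makes Proposition~\ref{prop:hoeffding}'s maximal Hoeffding inequality directly applicable. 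The contribution of each block is then summed over $t$, and a careful accounting (exploiting $\exp(-\varphi(t/(K 2^{r+1}))/2)$ to cancel the extra factor of $\sqrt{t}$) absorbs the coupling between $t$ and $n$, at the cost of a larger multiplicative constant. This accounts for the jump from $13\sqrt{KT}$ in Proposition~\ref{prop:MOSS} to $26\sqrt{KT}$ here.

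Combining the three contributions yields $R_T \leq (K-1) + 26\sqrt{KT} + 4\sqrt{KT} = (K-1) + 30\sqrt{KT}$ for $\varphi = \ln_+$ and $R_T \leq (K-1) + 26\sqrt{KT} + 7\sqrt{KT} = (K-1) + 33\sqrt{KT}$ for the augmented choice, matching the announced constants.
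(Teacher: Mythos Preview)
Your overall decomposition is correct and matches the paper's, but your treatment of the underestimation sum has a real gap, and your explanation of where the factor $26$ comes from is off.

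The paper does \emph{not} face any ``coupling between $t$ and $n$'' that needs to be absorbed. Instead, it fixes a single round $t$ and peels $n$ on a $t$-dependent geometric grid $x_\ell = \beta^{-\ell}\,\tau_t/K$ with $\tau_t = t$ and, crucially, $\beta \in (1,2)$ (the paper takes $\beta = 3/2$). On each cell $[x_{\ell+1},x_\ell)$ one bounds the exploration bonus from below by $\sqrt{(2x_\ell)^{-1}\ln(\tau_t/(Kx_\ell))}$ and applies Corollary~\ref{prop:hoeffdingintegrated} with $N = x_{\ell+1}$, yielding a summand proportional to $\sqrt{K/\tau_t}\,\beta^{1/2 + \ell(1/2 - 1/\beta)}$. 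This is a genuine geometric series in $\ell$ precisely because $1/2 - 1/\beta < 0$, and it sums to $20\sqrt{\pi/8}\sqrt{K/\tau_t}$. The doubling from $13$ to $26$ then has nothing to do with the peeling: it is just that $\sum_{t=K}^{T-1}\sqrt{K/t} \leq 2\sqrt{KT}$, while in the known-horizon case $\sum_{t=K}^{T-1}\sqrt{K/T} \leq \sqrt{KT}$.

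Your proposed peeling on a fixed dyadic grid $n \in [2^r,2^{r+1})$ fails on both counts. With $\beta = 2$ the exponent is $1/2 - 1/\beta = 0$, so each scale contributes the \emph{same} amount $\sqrt{\pi K/(4t)}$ (for $t > K2^{r+1}$) and the sum over $r$ picks up a $\log(t/K)$ factor; after summing over $t$ you end up with $\sqrt{KT}\log T$, not $\sqrt{KT}$. The factor $\exp(-\varphi(t/(K2^{r+1}))/2)$ you invoke is exactly $(K2^{r+1}/t)^{1/2}$ for $\varphi = \ln_+$, which cancels $1/\sqrt{2^r}$ but leaves no decay in $r$. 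The fix is precisely the paper's: let the grid scale with $t$ and take the ratio strictly below $2$.

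A smaller point: for the overestimation sum there is no peeling at all in the paper. After optional skipping and the split $(x+y)^+ \leq x^+ + y^+$, one sum is handled directly by Corollary~\ref{prop:hoeffdingintegrated} and the other by a sum--integral comparison $\sum_{n \leq T/K}\sqrt{\varphi(T/(Kn))/(2n)} \leq \sqrt{T/(2K)}\int_1^\infty u^{-3/2}\sqrt{\varphi(u)}\,\d u$; evaluating this integral gives the $4$ and $7$.
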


\begin{remark}
\label{rm:propMOSS-adaptive-OK}
Similarly to above, the proof (see Remark~\ref{rm:boundforallalgo}) actually reveals that
for a known horizon $T \geq 1$, for all bandit problems $\unu$ over $[0,1]$,
and for all strategies (not only MOSS-anytime), the following bound holds:
\[
\sum_{t= K+1}^T \E \Bigl[ \bigl( \mu^\star - U^{\adamoss}_{a^\star}(t-1) \bigr)^+  \Bigr] \leq 26\sqrt{KT}\,.
\]
This remark will be useful for Part~2 of the proof lying in Section~\ref{sec:proofs:distfree}.
\end{remark}

\subsection{Regularity and Deviation/Concentration Results on $\Kinf$}
\label{sec:Kinf}

We start with a quantification of the (left-)regularity of $\Kinf$
and then provide a deviation and a concentration result on $\Kinf$.

\subsubsection{Regularity of $\Kinf$}

The lower left-semi-continuity~\eqref{eq:regularity_kinf_up}
first appeared as Lemma 7 in \citet{honda_non-asymptotic_2015}, see also \citet[Lemma~3]{garivier_explore_2016}
for a later but simpler proof.
The upper left-semi-continuity~\eqref{eq:regularity_kinf_down}
relies on the same arguments as~\eqref{eq:Pinsker-binf-Kinf},
namely, the data-processing inequality for Kullback-Leibler divergences
and Pinsker's inequality.
These two inequalities are proved in detail in Appendix~\ref{sec:Kinf-proofs};
the proposed proofs are slightly simpler or lead to sharper bounds
than in the mentioned references.

\begin{lemma}[regularity of $\Kinf$]
\label{lem:regularity_kinf}
For all $\nu\in\pset$ and all $\mu\in(0,1)$,
\begin{equation}
\forall \varepsilon \in [0,\mu]\,, \qquad
\Kinf(\nu,\mu)\leq \Kinf(\nu,\mu-\varepsilon)+\frac{\varepsilon}{1-\mu}\,,
\label{eq:regularity_kinf_up}
\end{equation}
and
\begin{equation}
\label{eq:regularity_kinf_down}
\forall \varepsilon \in \bigl[0, \mu-\Ed(\nu) \bigr]\,, \qquad
\Kinf(\nu,\mu)\geq \Kinf(\nu,\mu-\varepsilon)+2\varepsilon^2\,.
\end{equation}
\end{lemma}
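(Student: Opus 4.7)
\medskip

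\noindent\textbf{Proof plan.} Both inequalities are obtained by perturbing a near-minimizer of one $\Kinf$ and comparing with the other. The main external ingredient for~\eqref{eq:regularity_kinf_down} is the Pinsker-type lower bound~\eqref{eq:Pinsker-binf-Kinf}, i.e., $\Kinf(\nu,\mu) \geq 2(\mu-\Ed(\nu))^2$.

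\emph{Upper inequality~\eqref{eq:regularity_kinf_up}.} Given any near-minimizer $\nu^\star$ for $\Kinf(\nu,\mu-\varepsilon)$, I would build a competitor for $\Kinf(\nu,\mu)$ by pushing mass to the right endpoint: set $\tilde\nu = (1-\alpha)\nu^\star + \alpha\,\delta_1$ with $\alpha = \varepsilon/(1-\mu+\varepsilon)$. This choice of $\alpha$ directly yields $\Ed(\tilde\nu) > \mu$. Computing the Radon-Nikodym density $\d\nu/\d\tilde\nu$, which equals $(1-\alpha)^{-1}\d\nu/\d\nu^\star$ on $[0,1)$, and the atomic ratio at~$1$, where $\tilde\nu(\{1\}) \geq (1-\alpha)\,\nu^\star(\{1\})$, one obtains $\KL(\nu, \tilde\nu) \leq \KL(\nu,\nu^\star) - \ln(1-\alpha)$. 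The elementary estimate $-\ln(1-\alpha) = \ln\bigl(1 + \varepsilon/(1-\mu)\bigr) \leq \varepsilon/(1-\mu)$ and a limiting argument in the sub-optimality then close the proof.

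\emph{Lower inequality~\eqref{eq:regularity_kinf_down}.} Write $m = \Ed(\nu)$; the case $\varepsilon = 0$ is trivial, so assume $0 < \varepsilon \leq \mu-m$. Given a near-minimizer $\nu^\star$ for $\Kinf(\nu,\mu)$, this time I would pull the mean \emph{down} by mixing with $\nu$ itself: for small $\eta > 0$, set
\[
\tilde\nu = (1-\alpha)\nu^\star + \alpha\nu, \qquad \alpha = \frac{\varepsilon}{\mu - m} - \eta\,.
\]
A short computation gives $\Ed(\tilde\nu) > (1-\alpha)\mu + \alpha m = \mu - \varepsilon + \eta(\mu-m) > \mu - \varepsilon$, and by joint convexity of $\KL$ together with $\KL(\nu,\nu) = 0$, $\KL(\nu,\tilde\nu) \leq (1-\alpha)\KL(\nu,\nu^\star)$. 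Sending $\eta$ and the sub-optimality to zero, this rearranges to $\Kinf(\nu,\mu) - \Kinf(\nu,\mu-\varepsilon) \geq \bigl(\varepsilon/(\mu-m)\bigr)\,\Kinf(\nu,\mu)$. Substituting the Pinsker-type bound $\Kinf(\nu,\mu) \geq 2(\mu-m)^2$ from~\eqref{eq:Pinsker-binf-Kinf} and using the assumption $\varepsilon \leq \mu-m$ produces $2\varepsilon(\mu-m) \geq 2\varepsilon^2$, as claimed.

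The main technical care goes into the computation of $\KL(\nu, \tilde\nu)$ in the first part when $\nu$ carries mass at the endpoint $1$: the continuous piece on $[0,1)$ and the atom at $1$ must be handled separately, since the mixture with $\delta_1$ reshapes the density there. Apart from that, both parts are direct consequences of the convexity of $\KL$ and of the Pinsker-type bound~\eqref{eq:Pinsker-binf-Kinf}.
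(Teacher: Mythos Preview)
Your proposal is correct and follows essentially the same approach as the paper: for~\eqref{eq:regularity_kinf_up} both proofs mix a near-minimizer with a Dirac mass near~$1$, and for~\eqref{eq:regularity_kinf_down} both mix a near-minimizer with $\nu$ itself and use a Pinsker-type bound. The only differences are cosmetic: the paper mixes with $\delta_x$ for $x<1$ chosen singular to $\nu'$ (getting an exact equality for $\KL$ and then letting $x\to 1$), thereby sidestepping the atom-at-$1$ case analysis you flag; and for~\eqref{eq:regularity_kinf_down} the paper derives the same convexity inequality $(1-\alpha)\KL(\nu,\nu')\geq\KL(\nu,\nu'_\alpha)$ via concavity of $\ln$ on the densities rather than invoking joint convexity of $\KL$, and parametrizes $\alpha$ by $\Ed(\nu')$ rather than by~$\mu$.
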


We draw two consequences from Lemma~\ref{lem:regularity_kinf}:
the left-continuity of~$\Kinf$ and a useful inclusion in terms of level sets.

\begin{corollary}
\label{cor:left-cont}
For all $\nu\in\pset$, the function $\Kinf(\nu,\,\cdot\,) : \mu \in (0,1) \mapsto \Kinf(\nu,\mu)$
is left-continuous. In particular, on the one hand, $\Kinf\bigl(\nu,\Ed(\nu)\bigr) = 0$ whenever $\Ed(\nu) \in (0,1)$,
and on the other hand, for all $\nu\in\pset$ and $\mu \in (0,1)$,
\[
\Kinf(\nu,\mu) = \inf \Bigl\{ \KL(\nu,\nu') : \ \ \nu' \in \pset \ \ \mbox{and} \ \
\Ed(\nu') \geq \mu \Bigr\}\,.
\]
\end{corollary}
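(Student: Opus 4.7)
The plan is to leverage Lemma~\ref{lem:regularity_kinf}, together with the obvious monotonicity of $\Kinf(\nu,\cdot)$, to derive all three assertions in turn. I would first record that $\Kinf(\nu,\cdot)$ is non-decreasing in its second argument: if $\mu_1 \leq \mu_2$, the constraint $\Ed(\nu') > \mu_2$ is stricter than $\Ed(\nu') > \mu_1$, so the infimum taken over the smaller feasible set is no smaller. Combined with inequality~\eqref{eq:regularity_kinf_up}, this gives the sandwich
\[
\Kinf(\nu,\mu-\varepsilon) \,\leq\, \Kinf(\nu,\mu) \,\leq\, \Kinf(\nu,\mu-\varepsilon) + \frac{\varepsilon}{1-\mu}
\]
valid for every $\mu \in (0,1)$ and every $\varepsilon \in [0,\mu]$. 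Letting $\varepsilon \to 0^+$ pinches $\Kinf(\nu,\mu-\varepsilon)$ towards $\Kinf(\nu,\mu)$, which is precisely left-continuity.

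For the identity $\Kinf\bigl(\nu,\Ed(\nu)\bigr) = 0$ whenever $\Ed(\nu) \in (0,1)$, I would apply the above to $\mu = \Ed(\nu)$. For any $\varepsilon \in (0,\Ed(\nu))$, the distribution $\nu$ itself satisfies $\Ed(\nu) > \Ed(\nu) - \varepsilon$ and is trivially absolutely continuous with respect to itself, hence is feasible in the definition of $\Kinf\bigl(\nu,\Ed(\nu)-\varepsilon\bigr)$; this gives $\Kinf\bigl(\nu,\Ed(\nu)-\varepsilon\bigr) \leq \KL(\nu,\nu) = 0$. Passing to the left limit via the just-established continuity yields the claim.

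For the non-strict characterization, denote provisionally by $J(\nu,\mu)$ the infimum on the right-hand side of the displayed equation. The set inclusion $\{\nu' : \Ed(\nu') > \mu\} \subseteq \{\nu' : \Ed(\nu') \geq \mu\}$ immediately gives $J(\nu,\mu) \leq \Kinf(\nu,\mu)$. For the converse, observe that any $\nu'$ feasible for $J(\nu,\mu)$, i.e., with $\Ed(\nu') \geq \mu$, satisfies $\Ed(\nu') > \mu - \varepsilon$ for every $\varepsilon > 0$ and is therefore feasible in the definition of $\Kinf(\nu,\mu-\varepsilon)$; taking the infimum over such $\nu'$ gives $\Kinf(\nu,\mu-\varepsilon) \leq J(\nu,\mu)$. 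A final appeal to left-continuity in $\mu$ closes the loop: $\Kinf(\nu,\mu) = \lim_{\varepsilon \to 0^+} \Kinf(\nu,\mu-\varepsilon) \leq J(\nu,\mu)$.

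I do not anticipate a serious obstacle: the whole argument is purely structural, relying only on the monotonicity of $\Kinf$ in its second argument and on the quantitative modulus provided by~\eqref{eq:regularity_kinf_up}. The only mild care needed is to restrict $\varepsilon$ so that $\mu - \varepsilon$ remains in $(0,1)$ (and, for the second claim, so that $\mu - \varepsilon > 0$), which is straightforward. Note that inequality~\eqref{eq:regularity_kinf_down} plays no role here, though it confirms that $\Kinf(\nu,\cdot)$ is in fact continuous, not merely left-continuous, at points where $\mu > \Ed(\nu)$; the statement only asks for the weaker one-sided property.
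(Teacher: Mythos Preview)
Your proposal is correct and follows essentially the same approach as the paper: the sandwich argument from monotonicity and~\eqref{eq:regularity_kinf_up} for left-continuity, then the observation that $\Kinf\bigl(\nu,\Ed(\nu)-\varepsilon\bigr)=0$ together with left-continuity for the second claim. Your derivation of the non-strict characterization via $\Kinf(\nu,\mu-\varepsilon)\leq J(\nu,\mu)\leq \Kinf(\nu,\mu)$ and left-continuity is in fact more explicit than the paper, which states that claim but does not spell out its proof.
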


\begin{proof}
The left-continuity follows from a sandwich argument via the upper bound~\eqref{eq:regularity_kinf_up}
and the lower bound $\Kinf(\nu,\mu-\varepsilon) \leq \Kinf(\nu,\mu)$ that
holds for all $\varepsilon \in [0,\mu]$ by the very definition of $\Kinf$.
The fact that $\Kinf\bigl(\nu,\Ed(\nu)-\varepsilon\bigr) = 0$
for all $\epsilon \in \bigl(0,\Ed(\nu)\bigr]$ thus entails, in particular,
that $\Kinf\bigl(\nu,\Ed(\nu)\bigr) = 0$.
\end{proof}

\begin{corollary}
\label{cor:inclusionevents}
For all $\nu\in\pset$, all $\mu\in(0,1)$, all $u > 0$, and all $\varepsilon > 0$,
\[
\bigl\{ \Kinf(\nu,\mu-\varepsilon) > u \bigr\} \subseteq \bigl\{ \Kinf(\nu,\mu) > u + 2 \epsilon^2 \bigr\}\,.
\]
\end{corollary}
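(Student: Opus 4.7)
The plan is to reduce the inclusion to the pointwise implication
\[
\Kinf(\nu,\mu-\varepsilon) > u \ \Longrightarrow\ \Kinf(\nu,\mu) > u + 2\varepsilon^2,
\]
since $\nu,\mu,u,\varepsilon$ are all deterministic, so each of the two ``events'' is either all of $\Omega$ or empty. The natural approach is to split on whether Lemma~\ref{lem:regularity_kinf}, inequality~\eqref{eq:regularity_kinf_down}, applies, i.e., on the sign of $\mu - \varepsilon - \Ed(\nu)$.

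First I would handle the ``well-inside'' case $\varepsilon \in \bigl[0, \mu - \Ed(\nu)\bigr]$ (which presupposes $\mu > \Ed(\nu)$). Here~\eqref{eq:regularity_kinf_down} directly yields $\Kinf(\nu,\mu) \geq \Kinf(\nu,\mu-\varepsilon) + 2\varepsilon^2$, so the hypothesis $\Kinf(\nu,\mu-\varepsilon) > u$ immediately forces $\Kinf(\nu,\mu) > u + 2\varepsilon^2$, as desired.

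It remains to deal with the complementary case $\varepsilon > \mu - \Ed(\nu)$ (including the sub-case $\mu \leq \Ed(\nu)$), which is the only place the statement goes beyond a direct quotation of Lemma~\ref{lem:regularity_kinf}. Here $\mu - \varepsilon < \Ed(\nu)$, so $\nu' = \nu$ itself satisfies $\Ed(\nu') > \mu - \varepsilon$; plugging it into the definition of $\Kinf$ gives $\Kinf(\nu,\mu-\varepsilon) \leq \KL(\nu,\nu) = 0$. Since $u > 0$, the hypothesis $\Kinf(\nu,\mu-\varepsilon) > u$ cannot hold, so the left-hand set is empty and the inclusion is vacuous.

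The argument has essentially no obstacle: the only subtlety is noticing that~\eqref{eq:regularity_kinf_down} is only stated for $\varepsilon \leq \mu - \Ed(\nu)$, and that the missing range is exactly where $\Kinf(\nu,\mu-\varepsilon)$ already vanishes, making the inclusion trivial there. The proof is therefore two short cases, both deferring the real work to Lemma~\ref{lem:regularity_kinf}.
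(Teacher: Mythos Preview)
Your proof is correct and essentially identical to the paper's: the paper simply observes that the hypothesis $\Kinf(\nu,\mu-\varepsilon) > u > 0$ forces $\mu-\varepsilon > \Ed(\nu)$ (your case~2 is precisely the contrapositive of this), and then applies~\eqref{eq:regularity_kinf_down}. The two-case split and the direct implication are the same argument phrased in opposite directions.
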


\begin{proof}
We apply~\eqref{eq:regularity_kinf_down} and
merely need to explain why the condition $\varepsilon \in \bigl[0, \mu-\Ed(\nu) \bigr]$
therein is satisfied. Indeed, $\Kinf(\nu,\mu-\varepsilon) > u > 0$
indicates in particular that $\mu-\varepsilon > \Ed(\nu)$,
or put differently, $\varepsilon < \mu-\Ed(\nu)$.
\end{proof}

\subsubsection{Deviation Results on $\Kinf$}

We provide two deviation results on $\Kinf$:
first, in terms of probabilities of deviations and
next, in terms of expected deviations.

The first deviation inequality
was essentially provided by \citet[Lemma~6]{cappe_kullbackleibler_2013}.
For the sake of completeness, we recall its proof in Section~\ref{sec:Kinf-proofs}.

\begin{proposition}[deviation result on $\Kinf$]\label{prop:kinfdev}
Let $\hat{\nu}_n$ denote the empirical distribution associated with a sequence of $n \geq 1$ i.i.d.\ random variables with
distribution $\nu$ over $[0,1]$ with $\Ed(\nu) \in (0,1)$. Then, for all $u \geq 0$,
\[
\P \Bigl[ \Kinf \bigl( \hat{\nu}_n, \Ed(\nu) \bigr) \geq  u \Bigr] \leq \e (2n+1) \, \e^{-n u}\,.
\]
\end{proposition}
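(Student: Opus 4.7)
The plan is to rewrite $\Kinf(\hat\nu_n,\Ed(\nu))$ as a supremum over a single scalar parameter via the Honda--Takemura variational (dual) formula
$$\Kinf(\nu,\mu) \,=\, \sup_{\lambda\in[0,1]} \E_{X\sim\nu}\bigl[\log\bigl(1-\lambda(X-\mu)\bigr)\bigr],$$
valid for $\nu\in\pset$ and $\mu\in\bigl[\Ed(\nu),1\bigr)$. Establishing this identity is the first task, and I would do it via Lagrangian duality applied to the constrained convex program defining $\Kinf$: the objective $\nu'\mapsto \KL(\nu,\nu')$ is convex, the mean constraint $\Ed(\nu')\geq \mu$ is linear, and, thanks to left-continuity (Corollary~\ref{cor:left-cont}), the infimum may be taken over a closed set.

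With this representation in hand, fix $\mu=\Ed(\nu)$ and, for $\lambda\in[0,1]$, introduce
$$M_n(\lambda) \,\defeq\, \prod_{i=1}^n \bigl(1-\lambda(X_i-\mu)\bigr).$$
Applied to $\hat\nu_n$, the dual formula gives the identity of events
$$\bigl\{\Kinf(\hat\nu_n,\mu)\geq u\bigr\} \,=\, \Bigl\{\sup_{\lambda\in[0,1]} M_n(\lambda) \geq \mathrm{e}^{nu}\Bigr\}.$$
For each fixed $\lambda\in[0,1]$, the factors of $M_n(\lambda)$ are i.i.d., non-negative (since $X_i\in[0,1]$), and of mean exactly one, so by independence $\E[M_n(\lambda)]=1$. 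Markov's inequality then yields the pointwise tail bound $\P[M_n(\lambda)\geq \mathrm{e}^{nu}] \leq \mathrm{e}^{-nu}$.

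The main obstacle, and the heart of the argument, is to upgrade this pointwise control into a uniform-in-$\lambda$ estimate while losing only a polynomial prefactor. The strategy is to discretize $[0,1]$: the map $\lambda\mapsto M_n(\lambda)$ is a polynomial of degree $n$, and $\lambda\mapsto \log M_n(\lambda)$ is a sum of $n$ concave functions on the positivity region of $M_n$, so $M_n$ is log-concave and piecewise smooth with at most $n$ pieces there. I would choose a mesh $\Lambda_n\subset[0,1]$ of cardinality $2n+1$ and establish, by a Taylor/Lipschitz estimate on each smooth piece (exploiting the log-concavity to control oscillations inside each mesh cell), the approximation
$$\sup_{\lambda\in[0,1]} M_n(\lambda) \,\leq\, \mathrm{e}\,\max_{\lambda\in\Lambda_n} M_n(\lambda).$$
A union bound over the $2n+1$ points of $\Lambda_n$, combined with the pointwise Markov bound applied to each, finally delivers
$\P\bigl[\Kinf(\hat\nu_n,\mu)\geq u\bigr] \leq \mathrm{e}(2n+1)\mathrm{e}^{-nu}$. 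The delicate quantitative step is the multiplicative factor~$\mathrm{e}$ in the sup-to-grid comparison, which is what dictates both the prefactor $\mathrm{e}$ and the precise cardinality $2n+1$ of the mesh.
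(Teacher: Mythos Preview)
Your overall architecture---variational formula, discretize the parameter $\lambda$, Markov at each grid point, union bound---is exactly the paper's. Two points need repair, however. First, the variational formula you quote is off by a normalization: for $\nu\in\pset$ and $\mu\in(0,1)$ the correct identity (Lemma~\ref{lm:variationnal_formula_kinf}) is
\[
\Kinf(\nu,\mu)=\max_{0\leq\lambda\leq 1}\E\!\left[\ln\!\left(1-\lambda\,\frac{X-\mu}{1-\mu}\right)\right],
\]
whereas your version with $1-\lambda(X-\mu)$ and $\lambda\in[0,1]$ amounts, after reparametrization, to restricting the maximization to $[0,1]\subsetneq[0,1/(1-\mu)]$, yielding only a lower bound on $\Kinf$; that is the wrong direction for the deviation inequality you want. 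Second, your sup-to-grid step is where the quantitative content lies, and ``log-concavity plus polynomial pieces'' is not enough: the derivative of $\lambda\mapsto\ln(1-\lambda c)$ blows up as $\lambda c\uparrow 1$, so concavity alone gives no uniform Lipschitz constant. What the paper uses instead (Lemma~\ref{lem:regularity_ln_lambda}) is a \emph{directional} Lipschitz bound, valid uniformly in $c\leq 1$: whenever $\lambda'$ lies between $\lambda$ and $1/2$, one has $\ln(1-\lambda c)-\ln(1-\lambda' c)\leq 2|\lambda-\lambda'|$. With a grid $S_\gamma$ centered at $1/2$ with step $\gamma$, one can always move from the maximizer toward $1/2$ to the nearest grid point, picking up at most $2\gamma$ per term in the log-average; the choice $\gamma=1/(2n)$ then gives exactly the prefactor $\e$ and the grid size $2n+1$ you were aiming for.
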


A useful corollary in terms of expected deviations can now be stated.

\begin{corollary}[integrated deviations for $\Kinf$]
\label{cor:kinfdev}
Under the same assumptions as in Proposition~\ref{prop:kinfdev}, for all $\varepsilon > 0$,
the index
\[
U_{\varepsilon,n} = \sup \biggl\{ \mu \in [0,1] \; \Big\vert \; \Kinf\big(\hat{\nu}_{n}, \mu\big)
\leq \varepsilon \biggr\}
\]
satisfies
\[
\E \Bigl[ \big( \Ed(\nu) - U_{\varepsilon,n} \big)^+ \Bigr]
\leq (2n+1) \, \e^{-n \varepsilon} \sqrt{\frac{\pi}{n}}\,.
\]
\end{corollary}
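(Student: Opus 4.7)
The plan is to turn the expected positive part into an integral of tail probabilities via Fubini-Tonelli, then to convert each deviation event on $U_{\varepsilon,n}$ into a deviation event on $\Kinf\bigl(\hat\nu_n, \Ed(\nu)\bigr)$, which I can control by Proposition \ref{prop:kinfdev}. This is the exact analogue of the integrated-Hoeffding derivation in Corollary \ref{prop:hoeffdingintegrated}, with the $\Kinf$ deviation inequality playing the role of Hoeffding's inequality.

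First, I would write
\[
\E \bigl[ \bigl( \Ed(\nu) - U_{\varepsilon,n} \bigr)^{\!+} \bigr] = \int_0^{+\infty} \P \bigl[ \Ed(\nu) - U_{\varepsilon,n} > u \bigr] \d u\,,
\]
noting that the integrand vanishes for $u \geq \Ed(\nu)$ since $U_{\varepsilon,n} \in [0,1]$. For $u \in \bigl(0, \Ed(\nu)\bigr)$, the inequality $U_{\varepsilon,n} < \Ed(\nu) - u$ forces $\Kinf\bigl( \hat\nu_n, \Ed(\nu) - u \bigr) > \varepsilon$, simply because $U_{\varepsilon,n}$ is the supremum of the sublevel set $\{\mu : \Kinf(\hat\nu_n, \mu) \leq \varepsilon\}$, so any $\mu$ strictly above it lies outside that set. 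I would then invoke Corollary \ref{cor:inclusionevents} with $\mu = \Ed(\nu) \in (0,1)$ (swapping the roles of $\varepsilon$ and $u$ relative to the corollary's notation) to bootstrap this into
\[
\bigl\{ \Kinf\bigl( \hat\nu_n, \Ed(\nu) - u \bigr) > \varepsilon \bigr\} \;\subseteq\; \bigl\{ \Kinf\bigl( \hat\nu_n, \Ed(\nu) \bigr) > \varepsilon + 2 u^2 \bigr\}\,,
\]
and apply Proposition \ref{prop:kinfdev} to the right-hand event to get
\[
\P \bigl[ \Ed(\nu) - U_{\varepsilon,n} > u \bigr] \leq \e(2n+1) \, \e^{-n\varepsilon} \, \e^{-2nu^2}\,.
\]

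Integrating the Gaussian factor yields $\int_0^{+\infty} \e^{-2nu^2} \d u = \tfrac{1}{2} \sqrt{\pi/(2n)}$, so the final bound is $(\e/\sqrt 8)(2n+1) \, \e^{-n\varepsilon} \sqrt{\pi/n}$, which is slightly sharper than the stated bound because $\e/\sqrt 8 < 1$. There is no real obstacle; the only subtlety is the supremum-to-strict-inequality passage above, for which the standard ``above the sup is outside the set'' observation is enough, without needing to invoke the left-continuity of $\Kinf$ from Corollary \ref{cor:left-cont}.
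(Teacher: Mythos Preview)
Your proof is correct and follows essentially the same route as the paper: Fubini--Tonelli, the passage from $U_{\varepsilon,n} < \Ed(\nu)-u$ to $\Kinf(\hat\nu_n,\Ed(\nu)-u)>\varepsilon$, then Corollary~\ref{cor:inclusionevents} and Proposition~\ref{prop:kinfdev}, then the Gaussian integral. Your observation about left-continuity is also right: the paper invokes it to assert the \emph{equality} $\P[U_{\varepsilon,n}<\Ed(\nu)-u]=\P[\Kinf(\hat\nu_n,\Ed(\nu)-u)>\varepsilon]$, but since only the inequality $\leq$ is needed for the final bound, your ``above the sup is outside the set'' argument suffices.
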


\begin{proof}
By the Fubini-Tonelli theorem, just as in the proof of Corollary~\ref{prop:hoeffdingintegrated}
(for the first two equalities), and subsequently using the definition of $U_{\varepsilon,n}$ as a supremum
(for the third equality, together with the left-continuity of $\Kinf$ deriving from
Lemma~\ref{lem:regularity_kinf}),
we have
\begin{align*}
\E \Bigl[ \big( \Ed(\nu) - U_{\varepsilon,n} \big)^+ \Bigr]
& = \int_{0}^{+ \infty} \P \Big[ \Ed(\nu) - U_{\varepsilon,n} > u \Big] \d u
= \int_{0}^{+ \infty} \P \Big[ U_{\varepsilon,n} < \Ed(\nu) - u \Big] \d u \\
& = \int_{0}^{+ \infty} \P \Big[ \Kinf\big(\hat{\nu}_{n}, \Ed(\nu) - u \big) > \varepsilon \Big] \d u\,.
\end{align*}
Now, Corollary~\ref{cor:inclusionevents} (for the first inequality)
and the deviation inequality of Proposition~\ref{prop:kinfdev} (for the second inequality)
indicate that for all $u > 0$,
\[
\P \Big[ \Kinf\big(\hat{\nu}_{n}, \Ed(\nu) - u \big) > \varepsilon \Big]
\leq \P \Big[ \Kinf\big(\hat{\nu}_{n}, \Ed(\nu) \big) > \varepsilon + 2u^2 \Big]
\leq \e (2n+1) \, \e^{-n (\varepsilon + 2u^2)}\,.
\]
Combining all elements, we get
\[
\E \Bigl[ \big( \Ed(\nu) - U_{\varepsilon,n} \big)^+ \Bigr]
\leq \e (2n+1) \, \e^{-n \varepsilon} \int_{0}^{+ \infty} \e^{-2 n u^2} \d u
= \e (2n+1) \, \e^{-n \varepsilon} \,\, \frac{1}{2} \sqrt{\frac{\pi}{2n}}\,.
\]
from which the stated bound follows, as $\e/ \bigl( 2\sqrt{2} \bigr) \leq 1$.
\end{proof}

\subsubsection{Concentration Result on $\Kinf$}

The next proposition is similar in spirit to \citet[Proposition~11]{honda_non-asymptotic_2015}
but is better suited to our needs. We prove it in Appendix~\ref{sec:Kinf-proofs}.

\begin{proposition}[concentration result on $\Kinf$]\label{prop:kinfconcentration}
With the same notation and assumptions as in the previous proposition, consider a real number
$\mu \in \bigl( \Ed(\nu), 1\bigr)$ and define
\begin{equation}\label{eq:gammadef}
\gamma = \frac{1}{\sqrt{1 - \mu}}  \Biggl( 16 \e^{-2} + \ln^2 \! \bigg(\frac{1}{1 - \mu} \bigg) \Biggr)\,.
\end{equation}
Then for all $x < \Kinf(\nu, \mu)$,
\[
\P\bigl[\Kinf(\hat{\nu}_n, \mu) \leq x \bigr]
\leq \left\{
\begin{aligned}
\nonumber
& \exp (- n \gamma / 8) \leq \exp(- n /4) & \mbox{if } x \leq \Kinf(\nu, \mu) - \gamma / 2, \\
& \exp \Bigl(- n \big(\Kinf(\nu, \mu)  - x \big)^2 / (2 \gamma) \Bigr) & \mbox{if }
x > \Kinf(\nu, \mu) - \gamma / 2.
\end{aligned} \right.
\]
\end{proposition}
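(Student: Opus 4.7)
The plan is to exploit the Legendre-type variational characterization of $\Kinf$ for distributions over $[0,1]$: whenever $\Ed(\nu) < \mu < 1$, one has
\[
\Kinf(\nu, \mu) \;=\; \max_{\lambda \in [0,\,1/(1-\mu)]} \E_{X \sim \nu}\!\Bigl[ -\!\log\bigl(1 - \lambda(X - \mu)\bigr) \Bigr],
\]
which is the Honda--Takemura identity. I would pick any maximizer $\lambda^{\star}$ of the right-hand side and introduce the i.i.d.\ random variables $Z_i \defeq -\log\bigl(1 - \lambda^{\star}(X_i - \mu)\bigr)$, whose common mean equals $\Kinf(\nu,\mu)$. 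Plugging the same $\lambda^{\star}$ into the \emph{empirical} version of the variational formula yields the pointwise lower bound $\Kinf(\hat{\nu}_n, \mu) \geq (1/n)\sum_{i=1}^n Z_i \eqdef \bar Z_n$, whence the key inclusion
\[
\bigl\{ \Kinf(\hat{\nu}_n, \mu) \leq x \bigr\} \;\subseteq\; \bigl\{ \bar Z_n \leq x \bigr\} \;=\; \bigl\{ \bar Z_n - \E[Z_1] \,\leq\, -(\Kinf(\nu,\mu) - x) \bigr\}.
\]
This converts the concentration statement into a classical lower-tail deviation bound on an empirical mean of i.i.d.\ bounded random variables, to which I would apply a Bernstein-type inequality.

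The next step is to control the range and variance of $Z_i$. The constraint $\lambda^{\star} \leq 1/(1-\mu)$ combined with $X_i \in [0,1]$ yields a uniform upper bound on $Z_i$, while the lower tail of $Z_i$ is unbounded only in the direction where $1 - \lambda^{\star}(X_i - \mu)$ is large, a direction irrelevant for the lower deviation of $\bar Z_n$. The main work is thus to derive an upper bound of the form $\mathrm{Var}(Z_1) \leq \gamma$ (and $Z_1 \leq \gamma$ up to constants), where $\gamma$ is the quantity in~\eqref{eq:gammadef}. I would split the expectation $\E_\nu[\log^2(1-\lambda^{\star}(X-\mu))]$ according to whether $\lambda^{\star}(X-\mu)$ is close to its extremal value~$1$ or not, tuning the splitting threshold to produce the two contributions $16\e^{-2}$ (from the singular region near $X=1$, optimised via the standard bound $x\log^2 x \leq 4 \e^{-2}$ on $(0,1]$) and $\log^2\!\bigl(1/(1-\mu)\bigr)$ (from the bulk region), together with the overall $(1-\mu)^{-1/2}$ scaling obtained by balancing the two contributions.

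Once the Bernstein inequality is in hand in the form
\[
\P\bigl[\bar Z_n \leq \E[Z_1] - t\bigr] \;\leq\; \exp\!\Bigl( -\tfrac{n t^2}{2\gamma + (2/3)\gamma\, t} \Bigr)\qquad (t \geq 0),
\]
with $t = \Kinf(\nu,\mu) - x > 0$, the two-regime statement of the proposition falls out mechanically: for $t \leq \gamma/2$ the sub-Gaussian term dominates and delivers $\exp(-n t^2/(2\gamma))$, while for $t > \gamma/2$ one switches to the Poissonian regime, absorbing $t$ into $\gamma$ to obtain the flat bound $\exp(-n \gamma/8)$ (and hence the simpler bound $\exp(-n/4)$ via $\gamma \geq 2$).

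The hard part will be the variance estimate leading to the exact constants in~\eqref{eq:gammadef}: unlike the standard bounded-variance Bernstein setup, $Z_i$ is not uniformly bounded in both directions in a way that scales nicely with $1-\mu$, and the explicit $1/\sqrt{1-\mu}$ factor (rather than the naive $1/(1-\mu)$) reflects a genuine trade-off between the size of $\lambda^{\star}$ and the tail behaviour of $-\log(1-\lambda^{\star}(X-\mu))$ near $X=1$. Everything else is a careful bookkeeping exercise once the right splitting and the right choice of $\lambda^{\star}$ are set up.
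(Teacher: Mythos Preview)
Your high-level strategy is exactly the paper's: use the variational formula for $\Kinf$, plug the population maximizer $\lambda^\star$ into the empirical version to get $\Kinf(\hat\nu_n,\mu) \geq \bar Z_n$, and reduce to a lower-tail bound for i.i.d.\ $Z_i$'s. Two points of your write-up, however, do not survive contact with the correct sign convention.

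First, the variational formula is
\[
\Kinf(\nu,\mu)=\max_{0\leq\lambda\leq 1}\E\!\biggl[\ln\Bigl(1-\lambda\,\tfrac{X-\mu}{1-\mu}\Bigr)\biggr],
\]
i.e.\ with a \emph{positive} logarithm. With this sign the $Z_i$'s are bounded \emph{above} by $a=\ln\bigl(1/(1-\mu)\bigr)$ and can be arbitrarily negative (when $X_i$ is close to $1$ and $\lambda^\star$ close to $1$). Your claim that the unbounded tail of $Z_i$ is ``irrelevant for the lower deviation of $\bar Z_n$'' therefore goes the wrong way: it is precisely the lower tail of $Z_i$ that matters for $\P[\bar Z_n\leq x]$, and it is not controlled by an almost-sure bound. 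Off-the-shelf Bernstein (which needs a one-sided a.s.\ bound in the relevant direction) does not apply. The missing ingredient is the first-order optimality condition at $\lambda^\star$, which the paper records as
\[
\E\!\biggl[\,\frac{1}{1-\lambda^\star (X-\mu)/(1-\mu)}\,\biggr]\leq 1,
\qquad\text{i.e.}\qquad \E\bigl[\e^{-Z_1}\bigr]\leq 1.
\]
This moment bound replaces the lower a.s.\ bound and is what makes a Cram\'er--Chernov argument go through. The paper packages this as a stand-alone lemma: for i.i.d.\ $Z_i\leq a$ with $\E[\e^{-Z_1}]\leq b$, one bounds $\Lambda''(x)\leq \sqrt{\e^a}\,(16\e^{-2}b+a^2)\eqdef\gamma$ uniformly on $x\in[-1/2,0]$, so that $\Lambda(x)\leq x\,\E[Z_1]+\gamma x^2/2$ on that interval, and then optimizes the resulting quadratic over $x\in[-1/2,0]$. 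The two regimes of the proposition are exactly whether the unconstrained minimizer $x^\star=(u-\E[Z_1])/\gamma$ lies in $[-1/2,0]$ or to its left.

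Second, your account of where the exact constant comes from is off. The $1/\sqrt{1-\mu}$ factor does not arise from balancing two pieces of a variance bound on $\E[Z_1^2]$; in fact $\E[Z_1^2]\leq 16\e^{-2}+a^2$ without that factor, via $z^2\leq 16\e^{-2}\e^{-z}+a^2$ and $\E[\e^{-Z_1}]\leq 1$. The $\sqrt{\e^a}=1/\sqrt{1-\mu}$ enters only through the \emph{denominator} of $\Lambda''(x)=\E[Z^2\e^{xZ}]/\E[\e^{xZ}]-\cdots$, because $\E[\e^{xZ}]\geq \e^{xa}\geq \e^{-a/2}$ on $[-1/2,0]$. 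So the proof is really a ``restricted-interval sub-Gaussian'' bound on the log-MGF, not a Bernstein bound with variance $\gamma$; your write-up would need to change shape accordingly.
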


\section{Proofs of the Distribution-Free Bounds: Theorems~\ref{th:distribfree} and~\ref{th:distfreeanytime}}
\label{sec:proofs:distfree}

The two proofs are extremely similar;
we prove Theorem~\ref{th:distfreeanytime} and then explain the adaptations to prove Theorem~\ref{th:distribfree}.
The first steps of the proof(s) use the exact same arguments as in the proofs of the performance bounds of MOSS
(Propositions~\ref{prop:MOSS} and~\ref{prop:MOSS-adaptive}, see Appendix~\ref{sec:MOSS}) in the exact same order.
We explain below why we had to copy them and had to resort to the intermediary bounds for MOSS
stated in the indicated propositions.

We recall that we denote by $a^\star$ an optimal arm, i.e., an arm such that $\mu_a = \mu^\star$.
We first apply a trick introduced by~\citet{bubeck_prior-free_2013}: by definition of the index policy, for $t \geq K$,
\[
U^{\anytime}_{a^\star}(t) \leq
\max_{a = 1, \dots, K} U^{\anytime}_a(t) = U^{\anytime}_{A_{t+1}}(t)\,,
\]
so that the regret of KL-UCB-Switch is bounded by
\begin{equation}
\label{eq:decompregret-KLUCB}
R_T  = \sum_{t= 1}^T \E \bigl[ \mu^\star - \mu_{A_t} \bigr]
\leq (K-1) +  \sum_{t= K+1}^T \E \bigl[ \mu^\star - U^{\anytime}_{a^\star}(t-1) \bigr] +  \sum_{t= K+1}^T \E \bigl[ U^{\anytime}_{A_t}(t-1) - \mu_{A_t} \bigr]\,.
\end{equation}

\noindent
\emph{Part 1: We first deal with the second sum in}~\eqref{eq:decompregret-KLUCB} and successively use
$x \leq \delta + (x-\delta)^+$ for all $x$ and $\delta$ for the first inequality;
the fact that $U^{\anytime}_a(t) \leq U^{\adamoss}_a(t) \leq U^{\moss,\varphi}_a(t)$
by~\eqref{eq:Umoss-varphi} and~\eqref{eq:Pinsker-U-anytime}, for the second inequality; and
optional skipping (Section~\ref{sec:optional}, Example~\ref{ex:2:optsk}) for the third inequality,
keeping in mind that pairs $(a,n)$ such $A_t = a$ and $N_a(t-1) = n$
correspond to at most one round $t \in \{K+1,\ldots,T\}$:
\begin{align}
\nonumber
\sum_{t = K+1}^{T} \E \bigl[ U^{\anytime}_{A_t}(t-1) - \mu_{A_t} \bigr]
& \leq \sqrt{KT} +	\sum_{t = K+1}^{T} \E \Bigg[ \bigg(U^{\anytime}_{A_t}(t-1) - \mu_{A_t} - \sqrt{\frac{{K}}{{T}}} \bigg)^{\!\! +} \Bigg] \\
\label{eq:paspossibleborne}
& \leq \sqrt{KT} +	\sum_{t = K+1}^{T} \E \Bigg[ \bigg(U^{\moss,\varphi}_{A_t}(t-1) - \mu_{A_t} - \sqrt{\frac{{K}}{{T}}} \bigg)^{\!\! +} \Bigg] \\
\label{eq:possibleborne}
& \leq \sqrt{KT} +	\sum_{a=1}^K \sum_{n=1}^{T}  \E \Bigg[ \bigg( U^{\moss,\varphi}_{a,n} - \mu_{a} - \sqrt{\frac{{K}}{{T}}} \bigg)^{\!\! +} \Bigg]\,,
	\end{align}
where we recall that
\[
U^{\moss,\varphi}_{a,n} = \hat{\mu}_{a,n} + \sqrt{ \frac{1}{2 n} \, \varphi \bigg( \frac{T}{K n} \bigg)}\,.
\]
We now apply one of the bounds of Proposition~\ref{prop:MOSS-adaptive}
to further bound the sum at hand by
\[
\sum_{t = K+1}^{T} \E \bigl[ U^{\anytime}_{A_t}(t-1) - \mu_{A_t} \bigr]
\leq \sqrt{KT} +	\sum_{a=1}^K \sum_{n=1}^{T}  \E \Bigg[ \bigg( U^{\moss,\varphi}_{a,n} - \mu_{a} - \sqrt{\frac{{K}}{{T}}} \bigg)^{\!\! +} \Bigg]
\leq 7 \sqrt{KT}\,.
\]

\begin{remark}
\label{rm:copy}
We may now explain why we copied the beginning of the proof of Proposition~\ref{prop:MOSS-adaptive}
and why we cannot just say that the ranking $U^{\anytime}_a(t) \leq U^{\adamoss}_a(t)$
entails that the regret of the anytime version of KL-UCB-Switch is bounded by
the regret of the anytime version of MOSS. Indeed, it is difficult to relate
\[
\sum_{t = K+1}^{T} \E \bigl[ U^{\adamoss}_{A_t}(t-1) - \mu_{A_t} \bigr]
\qquad \mbox{and} \qquad
\sum_{t = K+1}^{T} \E \bigl[ U^{\adamoss}_{A^{\adamoss}_t}(t-1) - \mu_{A^{\adamoss}_t} \bigr]
\]
as the two series of arms $A_t$ (picked by KL-UCB-Switch) and
$A^{\adamoss}_t$ (picked by the adaptive version of MOSS) cannot be related. Hence, it is difficult
to directly bound quantities like~\eqref{eq:paspossibleborne}. However, the proof of the performance bound
of MOSS relies on optional skipping and considers, in some sense, all possible values $a$
for the arms picked: it controls the quantity~\eqref{eq:possibleborne},
which appears as a regret bound that is achieved by all index policies with indexes
smaller than the ones of the anytime version of MOSS.
\end{remark}
\medskip

\noindent
\emph{Part 2: We now deal with the first sum in}~\eqref{eq:decompregret-KLUCB}.
We take positive parts,
get back to the definition~\eqref{eq:U-KLUCBS-anytime} of $U^{\anytime}_{a^\star}(t-1)$,
and add some extra non-negative terms:
\begin{align*}
& \sum_{t= K+1}^T \E \bigl[ \mu^\star - U^{\anytime}_{a^\star}(t-1) \bigr]
\leq \sum_{t= K+1}^T \E \Bigl[ \bigl( \mu^\star - U^{\anytime}_{a^\star}(t-1) \bigr)^+ \Bigr] \\
= &
\sum_{t=K+1}^{T} \E \Bigl[  \big(\mu^\star - U^{\adaKLind}_{a^\star}(t-1)\big)^+ \1{N_{a^\star}(t-1) \leq f(t-1, K)} \Bigr] \\
& \qquad \quad + \sum_{t=K+1}^{T} \E \Bigl[  \big(\mu^\star - U^{\adamoss}_{a^\star}(t-1)\big)^+ \underbrace{\1{N_{a^\star}(t-1) > f(t-1, K)}}_{\leq 1} \Bigr] \\
\leq & \sum_{t=K+1}^{T} \E \Bigl[  \big(\mu^\star - U^{\adaKLind}_{a^\star}(t-1)\big)^+ \1{N_{a^\star}(t-1) \leq f(t-1, K)} \Bigr]
+ \sum_{t=K+1}^{T} \E \Bigl[  \big(\mu^\star - U^{\adamoss}_{a^\star}(t-1)\big)^+ \Bigr]\,.
\end{align*}
Now, the bound~\eqref{eq:prop:MOSS-adaptive} of Proposition~\ref{prop:MOSS-adaptive}, together with the
Remark~\ref{rm:propMOSS-adaptive-OK}, indicates that
\[
\sum_{t= K+1}^T \E \Bigl[ \bigl( \mu^\star - U^{\adamoss}_{a^\star}(t-1) \bigr)^+ \Bigr]
\leq 26\sqrt{KT}\,.
\]
Note that Remark~\ref{rm:propMOSS-adaptive-OK} exactly explains that for the sum above
we do not bump into the issues raised in Remark~\ref{rm:copy} for the other sum in~\eqref{eq:decompregret-KLUCB}.
\medskip

\noindent
\emph{Part 3: Integrated deviations in terms of $\Kinf$ divergence.}
We showed so far that the distribution-free regret bound of the anytime version of KL-UCB-Switch was
given by the (intermediary) regret bound~\eqref{eq:prop:MOSS-adaptive} of Proposition~\ref{prop:MOSS-adaptive},
which is smaller than $(K-1) + 33\sqrt{KT}$, plus
\begin{multline}
\sum_{t=K+1}^{T} \E \Bigl[  \big(\mu^\star - U^{\adaKLind}_{a^\star}(t-1)\big)^+ \1{N_{a^\star}(t-1) \leq f(t-1, K)} \Bigr] \\
= \sum_{t=K}^{T-1} \E \Bigl[  \big(\mu^\star - U^{\adaKLind}_{a^\star}(t)\big)^+ \1{N_{a^\star}(t) \leq f(t,K)} \Bigr]
\label{eq:distrfree-plusterm}
\leq \sum_{t=K}^{T-1} \sum_{n=1}^{f(t,K)} \E \! \left[  \big(\mu^\star - U^{\adaKLind}_{a^\star,t,n} \big)^+ \right],
\end{multline}
where we applied optional skipping (Section~\ref{sec:optional}, comments after Example~\ref{ex:1:optsk}) and where we denoted by
\begin{equation}
\label{eq:UadaKLind-optsam}
U^{\adaKLind}_{a^\star,t,n}
= \sup \Biggl\{ \mu \in [0,1] \; \bigg\vert \; \Kinf\big(\hat{\nu}_{a^\star,n}, \mu\big)
\leq \frac{1}{n} \, \varphi \bigg(\frac{t}{K n} \bigg) \Biggr\}
\end{equation}
the counterpart of the quantity~$U^{\adaKLind}_{a^\star}(t)$ defined in~\eqref{eq:defadaklind}.
Here, the additional subscript $t$ in $U^{\adaKLind}_{a^\star,t,n}$ refers to the
numerator of $t/(Kn)$ in the $\varphi(t/(Kn))$ term.

Now, Corollary~\ref{cor:kinfdev} exactly indicates that for each given $t$ and all $n \geq 1$,
\[
\E \! \left[  \big(\mu^\star - U^{\adaKLind}_{a^\star,t,n} \big)^+ \right]
\leq (2n+1) \sqrt{\frac{\pi}{n}} \, \exp \Biggl( - \varphi \bigg(\frac{t}{K n} \bigg) \Biggr)\,.
\]
The $t$ considered are such that $t \geq K$ and thus, $f(t,K) \leq (t/K)^{1/5} \leq t/K$.
Therefore, the considered $n$ are such that $1 \leq n \leq f(t,K)$
and thus, $t/(Kn) \geq 1$. Given that $\varphi \geq \ln_+$, we proved
\[
\E \! \left[  \big(\mu^\star - U^{\adaKLind}_{a^\star,t,n} \big)^+ \right]
\leq (2n+1) \sqrt{\frac{\pi}{n}} \, \frac{K n}{t} = \frac{K\sqrt{\pi}}{t} \, (2n+1) \sqrt{n}\,.
\]
We sum this bound over $n \in \bigl\{ 1,\ldots,f(t/K) \bigr\}$,
using again that $f(t,K) \leq (t/K)^{1/5}$:
\[
\sum_{n=1}^{f(t,K)} \E \! \left[  \big(\mu^\star - U^{\adaKLind}_{a^\star,t,n} \big)^+ \right]
\leq \frac{K\sqrt{\pi}}{t} \, \sum_{n=1}^{f(t,K)} \underbrace{(2n+1) \sqrt{n}}_{\leq 3 f(t,K) ^{3/2}}
\leq \frac{3 K\sqrt{\pi}}{t} \, \underbrace{f(t,K)^{5/2}}_{\leq (t/K)^{1/2}}
\leq 3 \sqrt{\pi} \, \sqrt{\frac{K}{t}}\,.
\]
We substitute this inequality into~\eqref{eq:distrfree-plusterm}:
\begin{multline*}
\sum_{t=K+1}^{T} \E \Bigl[  \big(\mu^\star - U^{\adaKLind}_{a^\star}(t-1)\big)^+ \1{N_{a^\star}(t-1) \leq f(t-1, K)} \Bigr] \\
\leq \sum_{t=K}^{T-1} \sum_{n=1}^{f(t,K)} \E \! \left[  \big(\mu^\star - U^{\adaKLind}_{a^\star,t,n} \big)^+ \right]
\leq 3 \sqrt{\pi} \, \underbrace{\sum_{t=K}^{T-1} \sqrt{\frac{K}{t}}}_{\leq 2\sqrt{KT}, \ \mbox{\small see} \ \eqref{eq:suminvsqrtt}}
\leq 6 \sqrt{\pi} \, \sqrt{KT} \leq 11 \sqrt{KT}\,.
\end{multline*}
The final regret bound is obtained as the sum of this
$11 \sqrt{KT}$ bound plus the $(K-1) + 33 \sqrt{KT}$ bound obtained above.
This concludes the proof of Theorem~\ref{th:distfreeanytime}.
\medskip

\noindent \emph{Part 4: Adaptations needed for Theorem~\ref{th:distribfree}},
i.e., to analyze the version of KL-UCB-Switch relying on the knowledge of the horizon~$T$.
Parts~1 and~2 of the proof remain essentially unchanged, up to the (intermediary) regret bound
to be applied now: \eqref{eq:prop:MOSS} of Proposition~\ref{prop:MOSS},
which is smaller than $(K-1) + 17\sqrt{KT}$. The additional regret bound, accounting, as we did in Part~3,
for the use of KL-UCB-indexes for small $T$, is no larger than
\begin{align*}
\lefteqn{\sum_{t=K}^{T-1} \sum_{n=1}^{f(T,K)}
(2n+1) \sqrt{\frac{\pi}{n}} \, \exp \Biggl( - \ln_+ \bigg(\frac{T}{K n} \bigg) \Biggr)} \\
& =
\sum_{t=K}^{T-1} \sum_{n=1}^{f(T,K)}
(2n+1) \sqrt{\frac{\pi}{n}} \frac{K n}{T}
= K \sqrt{\pi} \sum_{n=1}^{f(T,K)} \underbrace{(2n+1)\sqrt{n}}_{\leq 3 f(T,K) ^{3/2}} \\
& \leq  3 \sqrt{\pi} \, K \, f(T,K) ^{5/2}
\leq 3 \sqrt{\pi} \, K \, \sqrt{\frac{T}{K}} \leq 6 \sqrt{KT}\,.
\end{align*}
This yields the claimed $(K-1) + 23 \sqrt{KT}$ bound.

\section{Proofs of the Distribution-Dependent Bound of Theorem~\ref{th:asymptoticanytime}}
\label{sec:proofs:distdep}

The proof below can be adapted (simplified) to
provide an elementary analysis of performance of the KL-UCB algorithm on the class of all distributions
over a bounded interval, by keeping only its Parts~1 and~2.
The study of KL-UCB in~\citet{cappe_kullbackleibler_2013}
remained somewhat intricate and limited to finitely supported distributions. \\

The proof starts as in~\citet{cappe_kullbackleibler_2013}. We fix a sub-optimal arm $a$.
Given $\delta \in (0,\mu^\star)$ sufficiently small (to be
determined by the analysis), we first decompose $\E\big[N_a(T)\big]$ as
\begin{multline*}
\E\big[N_a(T)\big] = 1 + \sum_{t=K}^{T-1} \P\big[A_{t+1} = a \big] \\
= 1 + \sum_{t=K}^{T-1} \P\big[U^{\anytime}_{a}(t) < \mu^\star - \delta \pand A_{t+1} = a \big] +
\sum_{t=K}^{T-1} \P\big[U^{\anytime}_a(t) \geq \mu^\star - \delta \pand A_{t+1} = a \big]\,.
\end{multline*}
We then use that by definition of the index policy,
$A_{t+1} = a$ only if $U^{\anytime}_{a}(t) \geq U^{\anytime}_{a^\star}(t)$,
where we recall that $a^\star$ denotes an optimal arm (i.e., an arm such that $\mu_a = \mu^\star$).
We also use $U^{\anytime}_{a^\star}(t) \geq U_{a^\star}^{\adaKLind}(t)$,
which was stated in~\eqref{eq:Pinsker-U-anytime}. We get
\begin{align*}
\lefteqn{\E\big[N_a(T)\big]} \\
& \leq 1 +
\sum_{t=K}^{T-1} \P\big[U^{\anytime}_{a^\star}(t) < \mu^\star - \delta \pand A_{t+1} = a \big] +
\sum_{t=K}^{T-1} \P\big[U^{\anytime}_a(t) \geq \mu^\star - \delta \pand A_{t+1} = a \big] \\
& \leq 1 +
\sum_{t=K}^{T-1} \P\big[U_{a^\star}^{\adaKLind}(t) < \mu^\star - \delta \big] +
\sum_{t=K}^{T-1} \P\big[U^{\anytime}_a(t) \geq \mu^\star - \delta \pand A_{t+1} = a \big]\,.
\end{align*}
Finally, by the definition~\eqref{eq:U-KLUCBS-anytime} of $U^{\anytime}_a(t)$,
we proved so far
\begin{align}
\nonumber
\E\big[N_a(T)\big]
\leq 1
& + \sum_{t=K}^{T-1} \P\big[U_{a^\star}^{\adaKLind}(t) < \mu^\star - \delta \big] \\
\nonumber
& + \sum_{t=K}^{T-1} \P\big[U^{\adaKLind}_a(t) \geq \mu^\star - \delta \pand A_{t+1} = a \pand N_a(t) \leq f(t,K) \big] \\
\label{eq:decompNaT}
& + \sum_{t=K}^{T-1} \P\big[U^{\adamoss}_a(t) \geq \mu^\star - \delta \pand A_{t+1} = a \pand N_a(t) > f(t,K) \big]\,.
\end{align}
We now deal with each of the three sums above.
\medskip

\noindent
\emph{Part 1: We first deal with the first sum in}~\eqref{eq:decompNaT}
and to that end, fix some $t \in \{K,\ldots,T-1\}$.
By the definition~\eqref{eq:defadaklind} of $U_{a^\star}^{\adaKLind}(t)$ as a supremum,
\[
\P\big[U_{a^\star}^{\adaKLind}(t) < \mu^\star - \delta \big]
\leq \P\Bigg[ \Kinf\big(\hat{\nu}_{a^\star}(t), \mu^\star - \delta \big) > \frac{1}{N_{a^\star}(t)} \, \varphi \bigg(\frac{t}{K N_{a^\star}(t)} \bigg) \Bigg]\,.
\]
By a careful application of optional skipping (see Section~\ref{sec:optional}, final part of Example~\ref{ex:1:optsk}),
\begin{multline*}
\P\Bigg[ \Kinf\big(\hat{\nu}_{a^\star}(t), \mu^\star - \delta \big) > \frac{1}{N_{a^\star}(t)} \, \varphi \bigg(\frac{t}{K N_{a^\star}(t)} \bigg) \Bigg] \\
\leq \P\Bigg[ \exists n \in \{1,\ldots,t-K+1\} : \ \
\Kinf\big(\hat{\nu}_{a^\star,n}, \mu^\star - \delta \big) > \frac{1}{n} \, \varphi \bigg(\frac{t}{K n} \bigg) \Bigg]\,.
\end{multline*}
Now, for $n \geq \lfloor t/K \rfloor + 1$ and given the definition~\eqref{eq:augmented}
of $\varphi$, we have $\varphi\bigl(t/(Kn)\bigr) = 0$.
By definition, $\Kinf(\hat{\nu}_{a^\star,n},\mu^\star - \delta) > 0$ requires in particular that
the expectation $\hat{\mu}_{a^\star,n}$ of $\hat{\nu}_{a^\star,n}$ be smaller than $\mu^\star - \delta$.
This fact, together with a union bound, implies
\begin{align*}
& \P\Bigg[ \exists n \in \{1,\ldots,t-K+1\} : \ \
\Kinf\big(\hat{\nu}_{a^\star,n}, \mu^\star - \delta \big) > \frac{1}{n} \, \varphi \bigg(\frac{t}{K n} \bigg) \Bigg] \\
\leq \ &
\P\Big[ \exists n \geq \lfloor t/K \rfloor + 1 : \ \ \hat{\mu}_{a^\star,n} \leq \mu^\star - \delta \Bigr]
+ \sum_{n=1}^{\lfloor t/K \rfloor}
\P\Bigg[ \Kinf\big(\hat{\nu}_{a^\star,n}, \mu^\star - \delta \big) > \frac{1}{n} \, \varphi \bigg(\frac{t}{K n} \bigg) \Bigg]\,.
\end{align*}
Hoeffding's maximal inequality (Proposition~\ref{prop:hoeffding}) upper bounds the first term by $\exp(-2\delta^2 t /K)$,
while Corollary~\ref{cor:inclusionevents} and Proposition~\ref{prop:kinfdev} provide the upper bound
\[
\P\Bigg[ \Kinf\big(\hat{\nu}_{a^\star,n}, \mu^\star - \delta \big) > \frac{1}{n} \, \varphi \bigg(\frac{t}{K n} \bigg) \Bigg]
\leq \e (2n+1) \, \exp \biggl( - n \Bigl( 2\delta^2 + \varphi\bigl(t/(Kn)\bigr)/n \Bigr) \biggr)\,.
\]
Collecting all inequalities, we showed so far that
\[
\P\big[U_{a^\star}^{\adaKLind}(t) < \mu^\star - \delta \big]
\leq \exp(-2\delta^2 t /K) + \sum_{n=1}^{\lfloor t/K \rfloor}
\e (2n+1) \, \exp \Bigl( - 2 n \delta^2 - \varphi\bigl(t/(Kn)\bigr) \Bigr)\,.
\]
Summing over $t \in \{K,\ldots,T-1\}$, using the formula for geometric series, on the one hand,
and performing some straightforward (and uninteresting) calculation detailed below in Lemma~\ref{lm:calc:distrdep}
on the other hand, we finally bound the first sum in~\eqref{eq:decompNaT} by
\begin{align*}
\lefteqn{\sum_{t=K}^{T-1} \P\big[U_{a^\star}^{\adaKLind}(t) < \mu^\star - \delta \big]} \\
& \leq \sum_{t=K}^{T-1} \exp(-2\delta^2 t /K) +
\sum_{t=K}^{T-1} \sum_{n=1}^{\lfloor t/K \rfloor}
\e (2n+1) \, \exp \Bigl( - 2 n \delta^2 - \varphi\bigl(t/(Kn)\bigr) \Bigr) \\
& \leq \frac{1}{1 - \e^{-2\delta^2 /K}} + \frac{\e(3+8K)}{(1 - \e^{- 2 \delta^2})^3}\,.
\end{align*}
This concludes the first part of this proof.
\medskip

\noindent
\emph{Part 2: We then deal with the second sum in}~\eqref{eq:decompNaT}.
We introduce
\[
\wt{U}^{\adaKLind}_a(t) \defeq
\sup \Biggl\{ \mu \in [0,1] \; \bigg\vert \; \Kinf\big(\hat{\nu}_a(t), \mu\big) \leq \frac{1}{N_a(t)} \, \varphi \bigg(\frac{T}{K N_a(t)} \bigg) \Biggr\}\,,
\]
which only differs from the original index $U^{\adaKLind}_a(t)$ defined in~\eqref{eq:defadaklind}
by the replacement of $t/(Kn)$ by $T/(Kn)$ as the argument of $\varphi$. Therefore, we have $\wt{U}^{\adaKLind}_a(t) \geq U^{\adaKLind}_a(t)$.
Replacing also $f(t,K)$ by the larger quantity $f(T,K)$, the second sum in~\eqref{eq:decompNaT} is therefore bounded by
\begin{align}
\label{eq:optsampl-ddep-start}
\lefteqn{\sum_{t=K}^{T-1} \P\big[U^{\adaKLind}_a(t) \geq \mu^\star - \delta \pand A_{t+1} = a \pand N_a(t) \leq f(t,K) \big]} \\
\nonumber
& \leq
\sum_{t=K}^{T-1} \P\Big[ \wt{U}^{\adaKLind}_a(t) \geq \mu^\star - \delta \pand A_{t+1} = a \pand N_a(t) \leq f(T,K) \Big] \\
\nonumber
& \leq \sum_{n=1}^{f(T,K)} \sum_{t=K}^{T-1}
\P\Big[ \wt{U}^{\adaKLind}_a(t) \geq \mu^\star - \delta \pand A_{t+1} = a \pand N_a(t) = n \Big]\,.
\end{align}
Optional skipping (see Section~\ref{sec:optional}, Example~\ref{ex:2:optsk}) indicates that for each value of $n$,
\begin{multline*}
\sum_{t=K}^{T-1}
\P\Big[ \wt{U}^{\adaKLind}_a(t) \geq \mu^\star - \delta \pand A_{t+1} = a \pand N_a(t) = n \Big] \\
=
\sum_{t=K}^{T-1}
\P\Big[ U^{\adaKLind}_{a^\star,T,n} \geq \mu^\star - \delta \pand A_{t+1} = a \pand N_a(t) = n \Big]\,,
\end{multline*}
where $U^{\adaKLind}_{a^\star,T,n}$ was defined in~\eqref{eq:UadaKLind-optsam}.
We now note that the events $\bigl\{ A_{t+1} = a \pand N_a(t) = n \bigr\}$ are disjoint
as $t$ varies in $\{K,\ldots,T-1\}$. Therefore,
\[
\sum_{t=K}^{T-1}
\P\Big[ U^{\adaKLind}_{a^\star,T,n} \geq \mu^\star - \delta \pand A_{t+1} = a \pand N_a(t) = n \Big]
\leq \P\Big[ U^{\adaKLind}_{a^\star,T,n} \geq \mu^\star - \delta \Big]\,.
\]
All in all, we proved so far that
\begin{multline}
\label{eq:optsampl-ddep-end}
\sum_{t=K}^{T-1} \P\big[U^{\adaKLind}_a(t) \geq \mu^\star - \delta \pand A_{t+1} = a \pand N_a(t) \leq f(t,K) \big] \\
\leq \sum_{n=1}^{f(T,K)} \P\Big[ U^{\adaKLind}_{a^\star,T,n} \geq \mu^\star - \delta \Big]\,.
\end{multline}

Now, note that the supremum in~\eqref{eq:UadaKLind-optsam} is taken over a closed interval,
as $\Kinf$ is non-decreasing in its second argument (by its definition as an infimum)
and as $\Kinf$ is left-continuous (Corollary~\ref{cor:left-cont}). This supremum is
therefore a maximum. Hence, by distinguishing the cases
where $U^{\adaKLind}_{a^\star,T,n} = \mu^\star - \delta$
and $U^{\adaKLind}_{a^\star,T,n} > \mu^\star - \delta$, we have the equality of events
\[
\Bigl\{ U^{\adaKLind}_{a^\star,T,n} \geq \mu^\star - \delta \Bigr\}
= \Biggl\{ \Kinf\big(\hat{\nu}_{a,n}, \mu^\star - \delta\big) \leq \frac{1}{n} \varphi \bigg( \frac{T}{Kn} \bigg)\Biggr\}\,.
\]
We assume that $\delta \in (0,\mu^\star)$ is sufficiently small for
\begin{equation*}
\delta < \frac{1 - \mu^\star}{2} \, \Kinf( \nu_a, \mu^\star)
\end{equation*}
to hold, and introduce
\[
n_1 = \bigg\lceil \frac{\varphi(T/K) }{\Kinf( \nu_a, \mu^\star) - 2 \delta/(1 - \mu^\star)  } \bigg\rceil \geq 1\,.
\]
For $n \geq n_1$, by definition of $n_1$,
\[
\frac{1}{n} \varphi \bigg( \frac{T}{Kn} \biggr)
\leq \underbrace{\frac{\varphi\bigl( T/(Kn) \bigr)}{\varphi(T/K)}}_{\leq 1} \,\,
\biggl( \Kinf( \nu_a, \mu^\star) - \frac{2 \delta}{1 - \mu^\star} \biggr)
\leq \Kinf( \nu_a, \mu^\star) - \frac{2\delta}{1 - \mu^\star}\,,
\]
while by the regularity property~\eqref{eq:regularity_kinf_up}, we have
$\Kinf\big(\hat{\nu}_{a,n}, \mu^\star - \delta\big) \geq \Kinf\big(\hat{\nu}_{a,n}, \mu^\star\big) - \delta/(1 - \mu^\star)$.
We therefore proved that for $n \geq n_1$,
\begin{align*}
\P\Big[ U^{\adaKLind}_{a^\star,T,n} \geq \mu^\star - \delta \Big]
& = \P\Biggl[ \Kinf\big(\hat{\nu}_{a,n}, \mu^\star - \delta\big) \leq \frac{1}{n} \varphi \bigg( \frac{T}{Kn} \bigg) \Biggr] \\
& \leq \P\bigg[ \Kinf\big(\hat{\nu}_{a,n}, \mu^\star\big) \leq \Kinf( \nu_a, \mu^\star) - \frac{\delta}{1 - \mu^\star} \bigg]\,.
\end{align*}
Therefore we may resort to the concentration inequality on $\Kinf$ stated as Proposition~\ref{prop:kinfconcentration}.
We set $x = \Kinf( \nu_a, \mu^\star) - \delta/(1 - \mu^\star)$ and simply sum the bounds obtained in the two regimes
considered therein:
\[
\P\bigg[ \Kinf\big(\hat{\nu}_{a,n}, \mu^\star - \delta\big) \leq \Kinf( \nu_a, \mu^\star) - \frac{\delta}{1 - \mu^\star} \bigg]
\leq \e^{-n / 4} + \exp \! \bigg( - \frac{n\delta^2}{2 \gamma_\star (1 - \mu^\star)^2} \bigg)\,,
\]
where $\gamma_\star$ was defined in~\eqref{eq:gammadef}.
For $n \leq n_1 - 1$, we bound the probability at hand by $1$. Combining all these arguments
together yields
\begin{multline*}
\sum_{n=1}^{f(T,K)} \P\Big[ U^{\adaKLind}_{a^\star,T,n} \geq \mu^\star - \delta \Big]
\leq n_1 - 1 + \sum_{n=n_1}^{f(T,K)} \e^{-n / 4} + \sum_{n=n_1}^{f(T,K)}
\exp \! \bigg( - \frac{n\delta^2}{2 \gamma_\star (1 - \mu^\star)^2} \bigg) \\
\leq \frac{\varphi(T/K) }{\Kinf( \nu_a, \mu^\star) - 2 \delta/(1 - \mu^\star)  }
+ \underbrace{\frac{1}{1 - \e^{- 1 / 4}}}_{\leq 5} + \underbrace{\frac{1}{1 - \e^{- \delta^2/(2\gamma_\star(1 - \mu^\star)^2)}}}_{= \O(1/\delta^2)}\,,
\end{multline*}
where the second inequality follows from the formula for geometric series
and from the definition of~$n_1$. \medskip

\noindent
\emph{Part 3: We then deal with the third sum in}~\eqref{eq:decompNaT}.
This sum involves the indexes $U^{\adamoss}_a(t)$ only when $N_a(t) > f(t,K)$,
that is, when $N_a(t) \geq f(t,K)+1$, where $f(t,K) = \lfloor (t/K)^{1/5} \rfloor$.
Under the latter condition, the indexes are actually bounded by
\[
U^{\adamoss}_a(t) \defeq \hat{\mu}_a(t) + \sqrt{ \frac{1}{2N_a(t)} \, \varphi \bigg( \frac{t}{KN_a(t)} \bigg)}
\leq \hat{\mu}_a(t) + \underbrace{\sqrt{ \frac{1}{2 (t/K)^{1/5}} \,\, \varphi \big( (t/K)^{4/5} \big)}}_{\to 0 \ \mbox{\tiny as} \ t \to \infty}\,.
\]
We denote by $T_0(\Delta_a,K)$ the smallest time $T_0$ such that for all $t \geq T_0$,
\begin{equation}
\label{eq:defT0}
\sqrt{ \frac{1}{2 (t/K)^{1/5}} \,\, \varphi \big( (t/K)^{4/5} \big)} \leq \frac{\Delta_a}{4}\,.
\end{equation}
This time $T_0$ only depends on $K$ and $\Delta_a$; a closed-form upper bound on its value could be easily provided.
With this definition, we already have that the sum of interest may be bounded by
\begin{align*}
\lefteqn{\sum_{t=K}^{T-1} \P\big[U^{\adamoss}_a(t) \geq \mu^\star - \delta \pand A_{t+1} = a \pand N_a(t) > f(t,K) \big]} \\
& \leq T_0(\Delta_a,K) +
\sum_{t=T_0(\Delta_a,K)}^{T-1} \P\Big[ \hat{\mu}_a(t) + \Delta_a/4 \geq \mu^\star - \delta \pand A_{t+1} = a \pand N_a(t) > f(t,K) \Big] \\
& \leq T_0(\Delta_a,K) +
\sum_{t=T_0(\Delta_a,K)}^{T-1} \P\Big[ \hat{\mu}_a(t) \geq \mu_a + \Delta_a/2 \pand A_{t+1} = a \pand N_a(t) > f(t,K) \Big]\,,
\end{align*}
where for the second inequality, we assumed that $\delta \in (0,\mu^\star)$ is sufficiently small for
\begin{equation*}
\delta < \frac{\Delta_a}{4}
\end{equation*}
to hold. Optional skipping using
that the events $\bigl\{ A_{t+1} = a \pand N_a(t) = n \bigr\}$ are disjoint as $t$ varies---see
Section~\ref{sec:optional}, Example~\ref{ex:2:optsk} and see the treatment
performed between~\eqref{eq:optsampl-ddep-start} and~\eqref{eq:optsampl-ddep-end}---provides
the upper bound
\label{page:19}
\begin{align*}
\sum_{t=T_0(\Delta_a,K)}^{T-1} & \P\Big[ \hat{\mu}_a(t) \geq \mu_a + \Delta_a/2 \pand A_{t+1} = a \pand N_a(t) > f(t,K) \Big] \\
& \leq \sum_{n \geq 1} \P \bigl[ \hat{\mu}_{a,n} \geq \mu_a + \Delta_a/2 \bigr]
\leq \sum_{n \geq 1} \e^{-n \Delta_a^2/2} = \frac{1}{1 - \e^{- \Delta_a^2/2}}\,,
\end{align*}
where the second inequality is due to Hoeffding's inequality (in its non-maximal version, see
Proposition~\ref{prop:hoeffding}).
A summary of the bound thus provided in this part is:
\begin{multline*}
\sum_{t=K}^{T-1} \P\big[U^{\adamoss}_a(t) \geq \mu^\star - \delta \pand A_{t+1} = a \pand N_a(t) > f(t,K) \big] \\
\leq T_0(\Delta_a,K) + \frac{1}{1 - \e^{- \Delta_a^2/2}} = \O(1)\,,
\end{multline*}
where $T_0(\Delta_a,K)$ was defined in~\eqref{eq:defT0}.
\medskip

\noindent
\emph{Part 4: Conclusion of the proof of Theorem~\ref{th:asymptoticanytime}.}
Collecting all previous bounds and conditions, we proved that
when $\delta \in (0,\mu^\star)$ is sufficiently small for
\begin{equation}
\label{eq:cdt:delta:recap}
\delta < \min \! \left\{ \frac{1 - \mu^\star}{2} \, \Kinf( \nu_a, \mu^\star), \,\, \frac{\Delta_a}{4} \right\}
\end{equation}
to hold, then
\begin{align}
\nonumber
\E\big[N_a(T)\big]
\leq & \ \frac{\varphi(T/K) }{\Kinf( \nu_a, \mu^\star) - 2 \delta/(1 - \mu^\star)}
+ \overbrace{\frac{\e(3+8K)}{(1 - \e^{- 2 \delta^2})^3}}^{= \O(1/\delta^6)}
\\[.35cm]
\label{eq:thdistrdep:precisebound:anytime}
&
+ \underbrace{\frac{1}{1 - \e^{-2\delta^2 /K}} + \frac{1}{1 - \e^{- \delta^2/(2\gamma_\star(1 - \mu^\star)^2)}}}_{= \O(1/\delta^2)}
+ \underbrace{T_0(\Delta_a,K) + \frac{1}{1 - \e^{- \Delta_a^2/2}} + 6}_{= \O(1)}\,,
\end{align}
where
\[
\frac{\varphi(T/K) }{\Kinf( \nu_a, \mu^\star) - 2 \delta/(1 - \mu^\star)}
= \frac{\ln T + \ln \ln T + \O(1)}{\Kinf( \nu_a, \mu^\star) - 2 \delta/(1 - \mu^\star)}
= \frac{\ln T + \ln \ln T}{\Kinf( \nu_a, \mu^\star)} + \O(\delta \ln T)\,.
\]
The leading term in this regret bound is $\ln T / \Kinf( \nu_a, \mu^\star)$,
while the order of magnitude of the smaller-order terms is given by
\[
\delta \ln T + \frac{1}{\delta^6} = \O \bigl( (\ln T)^{6/7} \bigr)
\]
for $\delta$ of the order of $(\ln T)^{-1/7}$. When $T$
is sufficiently large, this value of $\delta$ is smaller
than the required threshold~\eqref{eq:cdt:delta:recap}.

It only remains to state and prove Lemma~\ref{lm:calc:distrdep} (used at the very end of
the first part of the proof above).

\begin{lemma}
\label{lm:calc:distrdep}
We have the bound
\[
\sum_{t=K}^{T-1} \sum_{n=1}^{\lfloor t/K \rfloor}
\e (2n+1) \, \exp \Bigl( - 2 n \delta^2 - \varphi\bigl(t/(Kn)\bigr) \Bigr)
\leq \frac{\e(3+8K)}{(1 - \e^{- 2 \delta^2})^3}\,.
\]
\end{lemma}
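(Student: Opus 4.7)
The plan is to unpack the exploration function $\varphi$ explicitly in the relevant regime, swap the order of summation, reduce the inner sum over $t$ to an $\arctan$-type integral, and close with two standard geometric-series identities.

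Since the inner summation has $1 \leq n \leq \lfloor t/K \rfloor$, the ratio $t/(Kn)$ is at least~$1$, so by the definition~\eqref{eq:augmented} of $\varphi$,
\[
\varphi\bigl(t/(Kn)\bigr) = \log\bigl(t/(Kn)\bigr) + \log\bigl(1+\log^2(t/(Kn))\bigr),
\]
which yields the closed form $\exp\!\bigl(-\varphi(t/(Kn))\bigr) = Kn / \bigl(t\bigl(1+\log^2(t/(Kn))\bigr)\bigr)$. The condition $n \leq \lfloor t/K \rfloor$ is equivalent to $t \geq Kn$, so swapping the two summations rewrites the left-hand side $S$ as
\[
S = \e K \sum_{n \geq 1} n(2n+1)\, \e^{-2n\delta^2} \sum_{t=Kn}^{T-1} \frac{1}{t\bigl(1+\log^2(t/(Kn))\bigr)}.
\]

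The map $t \mapsto 1/\bigl(t(1+\log^2(t/(Kn)))\bigr)$ is non-increasing on $[Kn,\infty)$ (a direct derivative check), so by integral comparison and the substitution $u = \log(t/(Kn))$,
\[
\sum_{t=Kn}^{T-1} \frac{1}{t\bigl(1+\log^2(t/(Kn))\bigr)} \leq \frac{1}{Kn} + \int_0^\infty \frac{\d u}{1+u^2} = \frac{1}{Kn} + \frac{\pi}{2}.
\]
Multiplying by $Kn$ gives the bound $1 + Kn\pi/2$ on the inner $t$-sum inside $S$, which splits $S$ into two pieces handled by two classical geometric sums.

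Setting $x = \e^{-2\delta^2}$ and using the identities $\sum_{n \geq 1} n x^n = x/(1-x)^2$ and $\sum_{n \geq 1} n^2 x^n = x(1+x)/(1-x)^3$, one checks that $\sum_{n \geq 1} (2n+1) x^n \leq 3/(1-x)^2$ and $\sum_{n \geq 1} n(2n+1) x^n \leq 4/(1-x)^3$. Combining,
\[
S \leq \frac{3\e}{(1-x)^2} + \frac{2\pi \e K}{(1-x)^3} \leq \frac{\e(3+2\pi K)}{(1-x)^3} \leq \frac{\e(3+8K)}{(1-\e^{-2\delta^2})^3},
\]
using $1/(1-x)^2 \leq 1/(1-x)^3$ and $2\pi \leq 8$. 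The only structurally interesting step is the integral bound: the $\log\log$ correction built into $\varphi$ is precisely what produces an $\arctan$ of finite range rather than a logarithm, preventing the loss of an extra factor of $1/\delta^2$ in the final estimate; everything else is bookkeeping with constants.
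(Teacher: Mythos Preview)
Your proof is correct and follows essentially the same approach as the paper: swap the order of summation, bound the inner $t$-sum by an integral comparison that reduces to $\int_0^\infty \d u/(1+u^2) = \pi/2$, and close with the standard geometric-series identities. The only cosmetic differences are that you unpack $\exp(-\varphi)$ explicitly before swapping (the paper keeps it abstract and applies the substitution $v = \ln u$ inside the integral), and your final constant bookkeeping is organized slightly differently but lands on the same bound.
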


\begin{proof}
The double sum can be rewritten, by permuting the order of summations, as
\begin{align*}
\lefteqn{\sum_{t=K}^{T-1} \sum_{n=1}^{\lfloor t/K \rfloor}
\e (2n+1) \, \exp \Bigl( - 2 n \delta^2 - \varphi\bigl(t/(Kn)\bigr) \Bigr)} \\
& = \sum_{n=1}^{\lfloor T/K \rfloor} \sum_{t=Kn}^{T-1}
\e (2n+1) \, \exp \Bigl( - 2 n \delta^2 - \varphi\bigl(t/(Kn)\bigr) \Bigr) \\
& = \sum_{n=1}^{\lfloor T/K \rfloor}
\e (2n+1) \, \exp \bigl( - 2 n \delta^2 \bigr)
\sum_{t=Kn}^{T-1} \exp \Bigl( - \varphi\bigl(t/(Kn)\bigr) \Bigr)\,.
\end{align*}
We first fix $n \geq 1$ and use that
$t \mapsto \exp \bigl( - \varphi(t/(Kn)\bigr)$ is non-increasing to get
\begin{multline*}
\sum_{t=Kn}^{T-1} \exp \Bigl( - \varphi\bigl(t/(Kn)\bigr) \Bigr)
\leq 1 + \int_{Kn}^{T-1} \exp \Bigl( - \varphi\bigl(t/(Kn)\bigr) \Bigr) \d t \\
= 1 + Kn \int_{1}^{(T-1)/(Kn)} \exp \bigl( - \varphi(u) \bigr) \d u\,,
\end{multline*}
where we operated the change of variable $u = t/(Kn)$. Now,
by the change of variable $v = \ln(u)$,
\begin{align*}
\int_{1}^{(T-1)/(Kn)} \exp \bigl( - \varphi(u) \bigr) \d u
\leq \int_{1}^{+\infty} \exp \bigl( - \varphi(u) \bigr) \d u
& = \bigintsss_{1}^{+\infty} \frac{1}{u \bigl(1+\ln^2(u) \bigr)} \d u \\
& = \bigintsss_{0}^{+\infty} \frac{1}{1+v^2} \d v
= \bigl[ \arctan \bigr]_0^{+\infty} = \frac{\pi}{2}\,.
\end{align*}
All in all, we proved so far that
\begin{multline*}
\sum_{t=K}^{T-1} \sum_{n=1}^{\lfloor t/K \rfloor}
\e (2n+1) \, \exp \Bigl( - 2 n \delta^2 - \varphi\bigl(t/(Kn)\bigr) \Bigr)
\leq
\sum_{n=1}^{\lfloor T/K \rfloor} \e (2n+1) \bigl( 1 + Kn \pi/2 \bigr) \, \exp \bigl( - 2 n \delta^2 \bigr) \\
\leq \sum_{n=1}^{+\infty} \e \bigl( 1 + (2+K \pi /2) n + K \pi n^2 \bigr) \, \exp \bigl( - 2 n \delta^2 \bigr)\,.
\end{multline*}
To conclude our calculation, we use that by
differentiation of series, for all $\theta > 0$,
\begin{align}
\nonumber
\sum_{m=0}^{+\infty} \e^{-m \theta} & = \frac{1}{1 - \e^{- \theta}}\,, \\
\label{eq:sumseriesdiff:1}
- \sum_{m=1}^{+\infty} m \, \e^{-m \theta} & = \frac{- \e^{-\theta}}{(1 - \e^{- \theta})^2}
\qquad \mbox{thus}
\qquad \sum_{m=1}^{+\infty} m \, \e^{-m \theta} \leq \frac{1}{(1 - \e^{- \theta})^2}\,,
\\
\label{eq:sumseriesdiff:2}
\sum_{m=1}^{+\infty} m^2 \, \e^{-m \theta} & = \frac{\e^{-\theta} (1+\e^{-\theta})}{(1 - \e^{- \theta})^3}
\leq \frac{2}{(1 - \e^{- \theta})^3}\,.
\end{align}
Hence, taking $\theta = 2 \delta^2$,
\begin{multline*}
\sum_{n=1}^{+\infty} \e \bigl( 1 + (2+K \pi /2) n + K \pi n^2 \bigr) \, \exp \bigl( - 2 n \delta^2 \bigr)
\\ \leq
\frac{\e}{1 - \e^{- 2 \delta^2}} + \frac{\e ( 2+K \pi /2)}{(1 - \e^{- 2 \delta^2})^2} +
\frac{2 \e K \pi}{(1 - \e^{- 2 \delta^2})^3}
\leq \frac{\e(3+8K)}{(1 - \e^{- 2 \delta^2})^3}\,,
\end{multline*}
which concludes the proof of this lemma.
\end{proof}

\section{Reflections on the Algorithm and on its Analysis}
\label{sec:8}

We gather here two series of reflections on the algorithm and on its analysis:
first, we discuss the desirable values of switching thresholds $f(t,K)$.
Second, we explain why we introduced, in the first place, such switches for
the indices.

\subsection{On the (Lack of) Impact of the Switching Thresholds $f(t,K)$}
\label{sec:fTK}

First of all, note that the inequalities between the various indices stated in~\eqref{eq:Pinsker-U} and~\eqref{eq:Pinsker-U-anytime},
namely, $U_a^{\KLind}(t) \leq U_a(t) \leq U^{\moss}_a(t)$ and $U_a^{\adaKLind}(t) \leq U^{\anytime}_a(t) \leq U^{\adamoss}_a(t)$,
hold regardless of the values of the switching thresholds. A large portions of the proofs rely solely on these
inequalities: Parts~1, 2, and the first half of Part~3 of Theorems~\ref{th:distribfree}
and~\ref{th:distfreeanytime} (in Section~\ref{sec:proofs:distfree}), and Parts~1, 2, and 4 of
the proof of Theorem~\ref{th:asymptoticanytime} (in Section~\ref{sec:proofs:distdep}).
That being said, the switching threshold affects the results in two ways.

\paragraph{Concerning the distribution-dependent bounds.}
The impact comes in lower-order terms. The specific value of the switching threshold plays a role in Part~3
of the proof of Theorem~\ref{th:asymptoticanytime} (in Section~7), in the definition of $T_0(\Delta_a, K)$;
see~\eqref{eq:defT0}. This term $T_0(\Delta_a, K)$ then comes as an additive $\O_T(1)$ term in the final bound on $\E[N_a(T)]$
for any reasonable choice of $f(t,K)$, and thus leaves the asymptotic statement unaffected.

More precisely, as long as $\varphi\big(t / (K f(t, K)) \big) / f(t, K) \to 0$ as $t \to \infty$, the time $T_0(\Delta_a, K)$ exists (takes a finite value); we
may then follow the proof exactly as it is written. For example, if $\varphi = \log $, then any positive power $ (t/ K)^\alpha$ with $\alpha \in (0,1)$ is suitable; this yields a value of $T_0(\Delta_a, K)$ of $K \Delta_a^{-2/\alpha}$ up to logarithmic factors in $\Delta_a$ and $K$.
Note that the larger~$\alpha$, the lower $T_0(\Delta_a, K)$.

\paragraph{Concerning the distribution-free bounds.} The value of the switching threshold affects Part~3 (and its non-anytime counterpart Part~4) in
Section~\ref{sec:proofs:distfree}, in the expectations of the left-deviations of the index of the optimal arm when it is selected less than $f(t, K)$ times.
The final regret bound actually consists of some $\sqrt{KT}$ term plus a term of order $K \, f(T, K)^{5/2}$.
Values $f(t,K)$ of order $(t/ K)^\alpha$ with $\alpha \in (0,1/5]$ thus lead to a distribution-free bound of order $\sqrt{KT}$,
as desired. We took the limit value $\alpha = 1/5$ in our analysis, but this is an arbitrary choice.
Note that the larger $\alpha$, the larger the distribution-free bound obtained.

\subsection{Why Consider a Switch-Based Algorithm?}
\label{sec:82}

In the parametric case of one-dimensional exponential families,
\citet{menard_minimax_2017} could exhibit a bi-optimal strategy called kl-UCB++,
a version of KL-UCB tailored to these exponential families.
They provide a distribution-free analysis based on a deviation inequality of the form
\[
\P \Bigl[ \max_{n\geq N} \kl \bigl( \hat{\mu}_n, \mu) \bigr) \geq  u \Bigr] \leq C \, \e^{-N u}\,,
\]
for some numerical constant $C$, where $\hat{\mu}_n$ denotes the empirical mean of an $n$--sample whose distribution has expectation~$\mu$.
This analysis mimics the distribution-free analysis of MOSS and in particular, the part thereof based
on the peeling trick---see~\eqref{eq:mossA}--\eqref{eq:mossfirstsum} in Section~\ref{sec:MOSS}.
The fact that the deviation upper bound is of the order of $\e^{-N u}$ and not of the form $N \, \e^{-N u}$
is crucial to that end.

However, for KL-UCB in the non-parametric case of all distributions over $[0,1]$,
the deviation result of Proposition~\ref{prop:kinfdev} states
\[
\P \Bigl[ \Kinf \bigl( \hat{\nu}_n, \Ed(\nu) \bigr) \geq  u \Bigr] \leq \e (2n+1) \, \e^{-n u}\,,
\qquad \mbox{and not} \qquad
\cancel{\P \Bigl[ \Kinf \bigl( \hat{\nu}_n, \Ed(\nu) \bigr) \geq  u \Bigr] \leq C \, \e^{-n u}}
\]
for some numerical constant $C$.
Intuitively, the extra polynomial term in $n$ is the price for adaptivity (to the distribution) in the non-parametric setting.
We do not know how to prove a refined inequality with an upper bound of the order of $\e^{-n u}$,
with no additional factor of the order of $n$.
Actually, we are uncertain that this is possible: had
the set $\big\{\nu' : \Kinf \bigl(\nu', \Ed(\nu) \bigr)\geq u \big\}$ been convex, Sanov's bound
\[
n^{-1}\log \P \Bigl[ \Kinf \bigl( \hat{\nu}_n, \Ed(\nu) \bigr) \geq  u \Bigr] \to -u
\]
could have been translated into a non-asymptotic inequality (see~\citealp{Csiszar-Sanov}). Unfortunately, this set is
the complement of a convex set, for which we found no sufficiently good non-asymptotic inequality.

This difficulty is exactly the reason why we introduced a regime switch in the algorithm proposed in the present article.
This switch is rather intuitive: the distribution-dependent lower bound~\eqref{eq:defKinfll}
features the distributions of sub-optimal arms while for optimal arms only the expectation $\mu^\star$ matters.
Therefore, it is not surprising that the indices of the optimal arms should be of a different nature than the
indices of the suboptimal arms---namely,
the ``expensive'' KL-UCB indices (that adapt to the whole distribution) are used for sub-optimal arms
(arms not played often) while using the ``cheaper'' MOSS-indices (mean-based) are used for the near-optimal arms (arms played often).
This is exactly what KL-UCB-Switch does, as sketched in the discussion after Equation~\eqref{eq:KL-UCB_index}.

\acks{This work was supported by the CIMI (Centre International de Math\'ematiques et d'{In\-for\-ma\-tique}) Excellence program. The authors acknowledge the support of the French Agence Nationale de la Recherche (ANR), under grants ANR-13-BS01-0005 (project SPADRO) and ANR-13-CORD-0020 (project ALICIA). Aurélien Garivier also acknowledges the support of the Project IDEXLYON of the University of Lyon, in the framework of the Programme
Investissements d'Avenir (ANR-16-IDEX-0005), and of Chaire SeqALO (ANR-20-CHIA-0020-01).}

\appendix
\renewcommand{\theHsection}{A\arabic{section}}

\section{A Simplified Proof of the Regret Bounds for MOSS(-Anytime)}
\label{sec:MOSS}

This section provides the proofs of Propositions~\ref{prop:MOSS} and~\ref{prop:MOSS-adaptive}.
To emphasize the similarity of the analyses in the anytime and non-anytime cases, we present both of them in a unified fashion.
The indexes used only differ by the replacement of $T$ by $t$ in the logarithmic exploration term
in case $T$ is unknown, see~\eqref{eq:MOSS_index} and~\eqref{eq:defadamoss}, which we both
state with a generic exploration function $\varphi$.
Indeed, compare
\[
U^{\moss}_a(t) =  \hat{\mu}_a(t) + \sqrt{ \frac{1}{2N_a(t)} \, \varphi \bigg( \frac{T}{KN_a(t)} \bigg)}
\ \ \quad \mbox{and} \ \ \quad
U^{\adamoss}_a(t) =  \hat{\mu}_a(t) + \sqrt{ \frac{1}{2N_a(t)} \, \varphi \bigg( \frac{t}{KN_a(t)} \bigg)}\,.
\]
We will denote by
\[
U^{\bothmoss}_{a,\tau}(t) = \hat{\mu}_a(t) + \sqrt{ \frac{1}{2N_a(t)} \, \varphi \bigg( \frac{\tau}{KN_a(t)} \bigg)}
\]
the index of the generic MOSS strategy (superscript GM),
so that $U^{\moss}_a(t) = U^{\bothmoss}_{a,T}(t)$ and $U^{\adamoss}_a(t) = U^{\bothmoss}_{a,t}(t)$.
This GM strategy considers a sequence $(\tau_K,\ldots,\tau_{T-1})$ of integers,
either $\tau_t \equiv T$ for MOSS or $\tau_t = t$ for MOSS-anytime, and picks at each step $t+1$ with $t \geq K$,
an arm $A^{\bothmoss}_{t+1}$ with maximal index $U^{\bothmoss}_{a,\tau_t}(t)$.
For a given $t$, we denote by $U^{\bothmoss}_{a,\tau_t,n}$ the quantities corresponding
to $U^{\bothmoss}_{a,\tau_t}(t)$ by optional skipping (see Section~\ref{sec:optional}).

We provide below an analysis for increasing exploration functions $\varphi : (0,+\infty) \to [0,+\infty)$
such that $\varphi$ vanishes on $(0,1]$ and $\varphi \geq \ln_+$, properties
that are all satisfied for the two exploration functions stated in
Proposition~\ref{prop:MOSS-adaptive}. The general result is stated as the next proposition.

\begin{proposition}
\label{prop:7}
For all bandit problems $\unu$ over $[0,1]$,
for all $T \geq 1$ and all sequences $(\tau_K,\ldots,\tau_{T-1})$ bounded by $T$,
the regret of the generic MOSS strategy described above, with an increasing
exploration function $\varphi \geq \ln_+$ vanishing on $(0,1]$,
is smaller than
\[
R_T \leq (K-1) + \sum_{t= K+1}^T \E \Big[ \big(\mu^\star - U^{\bothmoss}_{a^\star,\tau_{t-1}}(t-1)\big)^+  \Big]
+ \sqrt{KT} + \sum_{a=1}^K \sum_{n=1}^T \E \Big[ \bigl( U^{\bothmoss}_{a,T,n} - \mu_{a} - \sqrt{K/T} \big)^+ \Big]\,,
\]
where
\[
U^{\bothmoss}_{a,T,n} = \hat{\mu}_{a,n} + \sqrt{ \frac{1}{2 n} \, \varphi \bigg( \frac{T}{K n} \bigg)}\,.
\]
In addition,
\[
\sum_{t= K+1}^T \E \Big[ \big(\mu^\star - U^{\bothmoss}_{a^\star,\tau_{t-1}}(t-1)\big)^+  \Big]
\leq \underbrace{20 \sqrt{\frac{\pi}{8}}}_{\leq 12.6} \, \sum_{t = K}^{T-1} \sqrt{\frac{K}{\tau_t}}
\]
and
\[
\sqrt{KT} + \sum_{a=1}^K \sum_{n=1}^T \E \Big[ \bigl( U^{\bothmoss}_{a,T,n} - \mu_{a} - \sqrt{K/T} \big)^+ \Big]
\leq \sqrt{KT} \left( 1 + \frac{\pi}{4} + \frac{1}{\sqrt{2}} \int_{1}^{+ \infty} u^{-3/2}\sqrt{\varphi(u)} \d u \right).
\]
\end{proposition}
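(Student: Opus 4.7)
The plan is to follow the three-step pattern standard in MOSS-type analyses---a regret decomposition à la Bubeck, followed by two separate concentration-style bounds---combining Doob's optional skipping (Section~\ref{sec:optional}) with the integrated maximal Hoeffding inequality (Corollary~\ref{prop:hoeffdingintegrated}). First I would start from the greedy inequality $U^{\bothmoss}_{a^\star,\tau_t}(t) \leq U^{\bothmoss}_{A^{\bothmoss}_{t+1},\tau_t}(t)$ to write, for each $t \geq K$,
\[
\mu^\star - \mu_{A^{\bothmoss}_{t+1}} \leq \bigl(\mu^\star - U^{\bothmoss}_{a^\star,\tau_t}(t)\bigr)^{+} + \bigl(U^{\bothmoss}_{A^{\bothmoss}_{t+1},\tau_t}(t) - \mu_{A^{\bothmoss}_{t+1}}\bigr).
\]
Summing over $t$ from $K$ to $T-1$ (the initial $K-1$ pulls contributing the $(K-1)$ term), inserting $\pm\sqrt{K/T}$ in the second summand (whence the deterministic $\sqrt{KT}$), and using $\tau_t \leq T$ together with the monotonicity of $\varphi$ to upgrade $U^{\bothmoss}_{A^{\bothmoss}_{t+1},\tau_t}(t)$ to $U^{\bothmoss}_{A^{\bothmoss}_{t+1},T}(t)$, I would then apply optional skipping: the disjointness of the events $\{A^{\bothmoss}_{t+1}=a,\ N_a(t)=n\}$ turns the sum over $t$ into a double sum over $(a,n) \in \{1,\ldots,K\}\times\{1,\ldots,T\}$, yielding the first displayed inequality.

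For the bound on $\sum_t \E\bigl[(\mu^\star - U^{\bothmoss}_{a^\star,\tau_{t-1}}(t-1))^{+}\bigr]$, I would fix $t$, set $\tau = \tau_{t-1}$, and dominate the summand (via optional skipping) by $\E\bigl[\max_{n \geq 1}(\mu^\star - \hat\mu_{a^\star,n} - c_{n,\tau})^{+}\bigr]$, where $c_{n,\tau} = \sqrt{\varphi(\tau/(Kn))/(2n)}$. Since $\varphi$ vanishes on $(0,1]$, only $n \leq \tau/K$ matters non-trivially; I would peel this range into geometric blocks $[N_i, N_{i+1})$ tuned so that, applying Corollary~\ref{prop:hoeffdingintegrated} block by block with $N = N_i$ and $\varepsilon_i$ equal to the minimum bonus on the block, the resulting terms form a geometric-type series summing to $O(\sqrt{K/\tau})$. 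Careful bookkeeping of the constants produces the sharp $20\sqrt{\pi/8}$ factor, and summing over $t$ gives the claimed $20\sqrt{\pi/8}\sum_t \sqrt{K/\tau_t}$.

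For the remaining double sum, I would fix an arm $a$ and split the $n$-range at the threshold $n_0$ where $c_{n,T} = \sqrt{K/T}$ (recall $c_{n,T}$ is non-increasing in $n$). For $n \geq n_0$, Corollary~\ref{prop:hoeffdingintegrated} applied with $\varepsilon = \sqrt{K/T} - c_{n,T} \geq 0$ gives per-index bounds $\sqrt{\pi/(8n)}\,\e^{-2n(\sqrt{K/T}-c_{n,T})^2}$ whose sum over $n \geq n_0$ is of order $\sqrt{T/K}$ by a Riemann comparison, yielding the $\tfrac{\pi}{4}\sqrt{KT}$ contribution after summing over the $K$ arms. For $n < n_0$ I would use the sub-additive majorization $(\hat\mu_{a,n} - \mu_a + c_{n,T} - \sqrt{K/T})^{+} \leq (\hat\mu_{a,n} - \mu_a)^{+} + (c_{n,T} - \sqrt{K/T})$: the first summand contributes another $O(\sqrt{T/K})$ via Corollary~\ref{prop:hoeffdingintegrated} at $\varepsilon=0$, while the deterministic remainder is bounded, by monotonicity, through the integral $\sum_{n<n_0} c_{n,T} \leq \int_0^{T/K}\sqrt{\varphi(T/(Kx))/(2x)}\,\d x$, which the change of variable $u = T/(Kx)$ transforms into $\sqrt{T/(2K)}\int_1^{+\infty} u^{-3/2}\sqrt{\varphi(u)}\,\d u$. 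Summing over the $K$ arms then delivers the $\tfrac{1}{\sqrt 2}\sqrt{KT}\int$ term.

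The main obstacle will be the bookkeeping of constants: the peeling of the second step must be tuned so that the geometric decay of the Hoeffding tails dominates the linear-in-$N_i$ factors arising from the integrated bound, while the accumulated constant stays at $20\sqrt{\pi/8}$; likewise, the Riemann-comparison arguments in the third step must be performed with some care near $n = n_0$ so that no extra constants leak out beyond $\pi/4$ and $\tfrac{1}{\sqrt 2}$. A minor preliminary check is that for the augmented exploration $\varphi(x)=\ln_+\!\bigl(x(1+\ln_+^2 x)\bigr)$, the integral $\int_1^{+\infty}u^{-3/2}\sqrt{\varphi(u)}\,\d u$ is finite---which it is, since the integrand decays like $u^{-3/2}\sqrt{\ln u}$.
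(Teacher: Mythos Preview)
Your decomposition step and your treatment of the left deviations of the optimal arm are essentially the paper's proof: the Bubeck inequality, the upgrade from $\tau_t$ to $T$ via monotonicity of $\varphi$, optional skipping, and a geometric peeling on $\{n < \tau_t/K\}$ combined with Corollary~\ref{prop:hoeffdingintegrated} is exactly what is done there (the paper uses ratio $\beta=3/2$, which is what produces the constant $19$; the regime $n \geq \tau_t/K$, where the bonus vanishes, contributes the remaining $1$ via a single application of Corollary~\ref{prop:hoeffdingintegrated} with $\varepsilon=0$ and $N=\lceil\tau_t/K\rceil$---your phrase ``only $n\le\tau/K$ matters non-trivially'' should not be read as ignoring that regime).

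Where you diverge is in the right-deviation step, and your route makes life harder without need. Splitting at a threshold $n_0$ and applying Corollary~\ref{prop:hoeffdingintegrated} with $\varepsilon=\sqrt{K/T}-c_{n,T}$ for $n\ge n_0$ leaves an $n$-dependent exponent $e^{-2n(\sqrt{K/T}-c_{n,T})^2}$ that degenerates near $n_0$ and will not collapse cleanly to $\tfrac{\pi}{4}\sqrt{T/K}$; meanwhile your $n<n_0$ range adds an extra $\sum_{n<n_0}\sqrt{\pi/(8n)}$ of order $\sqrt{T/K}$ on top. You would still get the right order, but not the stated constants. The paper's argument is simpler: for \emph{every} $n$, apply $(x+y)^+\le x^++y^+$ with $x=\hat\mu_{a,n}-\mu_a-\sqrt{K/T}$ and $y=c_{n,T}\ge 0$, so that
\[
\bigl(U^{\bothmoss}_{a,T,n}-\mu_a-\sqrt{K/T}\bigr)^+ \le \bigl(\hat\mu_{a,n}-\mu_a-\sqrt{K/T}\bigr)^+ + c_{n,T}\,.
\]
The first piece is handled by Corollary~\ref{prop:hoeffdingintegrated} with the fixed shift $\varepsilon=\sqrt{K/T}$ for all $n$, and the resulting sum $\sqrt{\pi/8}\sum_n n^{-1/2}e^{-2nK/T}$ integrates exactly to $\tfrac{\pi}{4}\sqrt{T/K}$. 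The second piece is deterministic, vanishes for $n\ge T/K$, and its sum is bounded by $\sqrt{T/(2K)}\int_1^{\infty}u^{-3/2}\sqrt{\varphi(u)}\,\d u$ via your own change of variables. No threshold $n_0$ is needed, and the constants come out on the nose.
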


The bounds of Propositions~\ref{prop:MOSS} and~\ref{prop:MOSS-adaptive}, including the intermediary bounds~\eqref{eq:prop:MOSS}
and~\eqref{eq:prop:MOSS-adaptive},
follow from this general result, up to the following straightforward calculation.
On the one hand, in the known horizon case $\sum 1/\sqrt{\tau_t} \leq T/\sqrt{T} = \sqrt{T}$,
whereas in the anytime case,
\begin{equation}
\label{eq:suminvsqrtt}
\sum_{t = K}^{T-1} 1/\sqrt{\tau_t} = \sum_{t = K}^{T-1} 1/\sqrt{t} \leq \bigintsss_0^T \frac{1}{\sqrt{u}} \d u
= 2\sqrt{T}\,.
\end{equation}
On the other hand, by the change of variable $u = \e^{v^2}$,
\[
\int_{1}^{+ \infty} u^{-3/2}\sqrt{\ln(u)} \d u = 2 \int_{0}^{+ \infty} v^2 \, \e^{-v^2/2} \d v = \sqrt{2\pi}
\]
and, using well-known inequalities like $\sqrt{x+x'} \leq \sqrt{x} + \sqrt{x'}$
and $\ln(1+x) \leq x$ for $x,x' \geq 0$,
\begin{align*}
\int_{1}^{+ \infty} \!\! \sqrt{u^{-3} \ln\!\big(u (1+\ln^2(u)\big)} \,\mathrm{d} u
& \leq
\int_{1}^{+ \infty} \!\! \sqrt{u^{-3} \ln(u)} \,\mathrm{d} u
+ \int_{1}^{+ \infty} \!\! \sqrt{u^{-3} \ln\!\big(1+\ln^2(u)\big)} \,\mathrm{d} u \\
& \leq
\int_{1}^{+ \infty} \!\! \sqrt{u^{-3} \ln(u)} \,\mathrm{d} u
+ \int_{1}^{+ \infty} \!\! \sqrt{u^{-3} \ln^2(u)} \,\mathrm{d} u \\
& = 2 \int_{0}^{+ \infty} v^2 \, \e^{-v^2/2} \d v
+ 2 \int_{0}^{+ \infty} v^3 \, \e^{-v^2/2} \d v
= \sqrt{2\pi} + 4\,.
\end{align*}
The constant 17 of Proposition~\ref{prop:MOSS} (where $\tau_t \equiv T$ and $\varphi = \ln_+$) is obtained as an upper bound
on the sum of $12.6 \leq 13$ and $1+\pi/4+\sqrt{\pi} \leq 3.6 \leq 4$.
The constants 30 and 33 of Proposition~\ref{prop:MOSS-adaptive}
correspond to the cases where $\varphi = \ln_+$ and
$\varphi : x \mapsto \log_+ \bigl( x  (1 + \log_+^2 x) \bigr)$, respectively, together with
$\tau_t = t$; they are obtained as upper bounds on
the sum of $2 \times 12.6 \leq 26$ and $1+\pi/4+\sqrt{\pi} \leq 4$, and
on the sum of $2 \times 12.6 \leq 26$ and $1+\pi/4+\sqrt{\pi}+4/\sqrt{2} \leq
6.4 \leq 7$, respectively. \medskip

\begin{proof}
The beginning of this proof is completely similar to the beginning of the proof provided
in Section~\ref{sec:proofs:distfree}.

The first step is standard, see \citet{bubeck_prior-free_2013}.
By definition of the index policy, for $t \geq K$,
\[
U^{\bothmoss}_{a^\star,\tau_t}(t) \leq \max_{a = 1, \dots, K} U^{\bothmoss}_{a,\tau_t}(t)
= U^{\bothmoss}_{A_{t+1}^{\bothmoss},\tau_t}(t)\,,
\]
so that the regret of the strategy is smaller than
	\begin{multline}\label{eq:MOSSregret}
	R_T  = \sum_{t= 1}^T \E \big[ \mu^\star - \mu_{A_t^{\bothmoss}}  \big] \\
	\leq (K-1) +  \sum_{t= K+1}^T \E \big[ \mu^\star - U^{\bothmoss}_{a^\star,\tau_{t-1}}(t-1) \big]
+ \sum_{t= K+1}^T \E \Big[ U^{\bothmoss}_{A_t^{\bothmoss},\tau_{t-1}}(t-1) - \mu_{A_t^{\bothmoss}} \Big]\,.
	\end{multline}
The term $K-1$ above accounts for the initial $K$ rounds, when each arm is played once.
\medskip

\noindent
\emph{A preliminary transformation of the right-hand side of~\eqref{eq:MOSSregret}}.
We successively use
the fact that the index $U^{\bothmoss}_{a,\tau}(t-1)$ increases with $\tau$ since $\varphi$ is increasing (for the first inequality below),
$x \leq \delta + (x-\delta)^+$ for all $x$ and $\delta$ (for the second inequality),
and optional skipping (Section~\ref{sec:optional}, Example~\ref{ex:2:optsk}, for the third inequality),
keeping in mind that  pairs $(a,n)$ such $A_t^{\bothmoss} = a$ and $N_a(t-1) = n$
correspond to at most one round $t \in \{K+1,\ldots,T\}$:
	\begin{align*}
	\sum_{t = K+1}^{T} \E \Big[ U^{\bothmoss}_{A_t^{\bothmoss},\tau_{t-1}}(t-1) - \mu_{A_t^{\bothmoss}}  \Big]
	& \leq \sum_{t = K+1}^{T} \E \Big[ U^{\bothmoss}_{A_t^{\bothmoss},T}(t-1) - \mu_{A_t^{\bothmoss}}  \Big] \\
	& \leq \sqrt{KT} +	\sum_{t = K+1}^{T}  \E \Bigg[ \bigg(U^{\bothmoss}_{A_t^{\bothmoss},T}(t-1) - \mu_{A_t^{\bothmoss}} - \sqrt{\frac{{K}}{{T}}} \bigg)^{\!\! +} \Bigg] \\
	& \leq \sqrt{KT} +	\sum_{a=1}^K \sum_{n=1}^{T}  \E \Bigg[ \bigg(U^{\bothmoss}_{a,T,n} - \mu_{a} - \sqrt{\frac{{K}}{{T}}} \bigg)^{\!\! +} \Bigg]\,.
	\end{align*}
While the last two inequalities may seem very crude, it turns out they are sharp enough to obtain the claimed distribution-free bounds. Moreover, they get rid of the bothersome dependencies among the arms that are contained in the choice of the arms $A_t^{\bothmoss}$.
Therefore, we have shown that the right-hand side of~\eqref{eq:MOSSregret} is bounded by
	\begin{align}
\nonumber
& (K-1) +  \sum_{t= K+1}^T \E \big[ \mu^\star - U^{\bothmoss}_{a^\star,\tau_{t-1}}(t-1) \big]
+ \sum_{t= K+1}^T \E \Big[ U^{\bothmoss}_{A_t^{\bothmoss},\tau_{t-1}}(t-1) - \mu_{A_t^{\bothmoss}} \Big] \\
\nonumber
\leq &
(K-1) + \sum_{t= K+1}^T \E \Big[ \big(\mu^\star - U^{\bothmoss}_{a^\star,\tau_{t-1}}(t-1)\big)^+  \Big] \\
\label{eq:firsteqprop7}
& \qquad \qquad \qquad + \sqrt{KT} + \sum_{a = 1}^K\sum_{n= 1}^T \E \Big[ \big(U^{\bothmoss}_{a,T,n} - \mu_{a}  - \sqrt{K/T} \big)^+\Big]\,.
	\end{align}
This inequality actually holds for all choices of sequences $(\tau_t)_{K \leq t \leq T-1}$ with $\tau_t \leq T$.
The first sum in the right-hand side of~\eqref{eq:firsteqprop7} depends on the specific
value of $(\tau_t)_{K \leq t \leq T-1}$, and thus, on the specific MOSS algorithm considered,
but the second sum only depends on~$T$.

This proves the first part of Proposition~\ref{prop:7}. We now bound each of the two sums
in~\eqref{eq:MOSSregret} and~\eqref{eq:firsteqprop7}.
\medskip

\noindent
\emph{Control of the left deviations of the best arm}, that is, of the first sum in~\eqref{eq:MOSSregret} and~\eqref{eq:firsteqprop7}.
For each given round $t \in \{K,\ldots,T-1\}$, we decompose
	\begin{multline*}
	\E \Big[\big( \mu^\star - U^{\bothmoss}_{a^\star,\tau_t}(t) \big)^+\Big] \\
	= \E \Big[ \big( \mu^\star - U^{\bothmoss}_{a^\star,\tau_t}(t)\big)^+ \1{N_{a^\star}(t) < \tau_t/K} \Big] + \E \Big[ \big( \mu^\star - U^{\bothmoss}_{a^\star,\tau_t}(t) \big)^+ \1{N_{a^\star}(t) \geq \tau_t/K} \Big]\,.
	\end{multline*}
	The two pieces are handled differently. The second one is dealt with
by using $U^{\bothmoss}_{a^\star,\tau_t}(t) \geq \hat{\mu}_{a^\star}(t)$, which actually
holds with equality given $N_{a^\star}(t) \geq \tau_t/K$, and
by optional skipping (Section~\ref{sec:optional}, comments after Example~\ref{ex:1:optsk}) and by the integrated version of Hoeffding's inequality (Corollary~\ref{prop:hoeffdingintegrated}):
	\begin{align}
	\nonumber
    \E \Big[ \big( \mu^\star - U^{\bothmoss}_{a^\star,\tau_t}(t) \big)^+ \1{N_{a^\star}(t) \geq \tau_t/K} \Big]
	& \leq \E \Big[ \big( \mu^\star - \hat{\mu}_{a^\star}(t) \big)^+ \1{N_{a^\star}(t) \geq \tau_t/K} \Big] \\
    \nonumber
    & = \sum_{n= \lceil \tau_t/K \rceil}^T \E \Big[ \big( \mu^\star - \hat{\mu}_{a^\star,n} \big)^+ \1{N_{a^\star}(t) = n} \Big] \\
    \label{eq:mossA}
	& \leq \E\biggl[ \max_{n \geq \tau_t/K} \big( \mu^\star - \hat{\mu}_{a^\star,n} \big)^+ \biggr]
	\leq \sqrt{\frac{\pi}{8}} \sqrt{\frac{K}{\tau_t}}\,.
	\end{align}
When the arm has not been pulled often enough, we resort to a ``peeling trick''. We consider a real number $\beta >1$ and further decompose the event $\bigl\{ N_{a^\star}(t) < \tau_t/K \bigr\}$ along the geometric grid $x_\ell = \beta^{-\ell} \, \tau_t/K$, where $\ell = 0, 1, 2, \ldots$ (the endpoints $x_\ell$ are not necessarily integers, and some intervals $[x_{\ell+1},x_\ell)$ may contain no integer, but none of these facts is an issue):
	\begin{align*}
	\E \Big[ \big(\mu^\star - U^{\bothmoss}_{a^\star,\tau_t}(t) \big)^+\1{N_{a^\star}(t) < \tau_t/K}  \Big]
	&= \sum_{\ell = 0}^{+ \infty} \E \Big[ \big(\mu^\star - U^{\bothmoss}_{a^\star,\tau_t}(t) \big)^+ \1{x_{\ell+1} \leq N_{a^\star}(t) < x_\ell} \Big] \\
	& \leq  \sum_{\ell = 0}^{+ \infty} \E \bigg[ \max_{ x_{\ell+1} \leq n < x_\ell} \big( \mu^\star - U^{\bothmoss}_{a^\star,\tau_t,n} \big)^+ \bigg]\,,
	\end{align*}
where in the second inequality, we applied optional skipping (Section~\ref{sec:optional}, comments after Example~\ref{ex:1:optsk}) once again,
as to get~\eqref{eq:mossA}.
Now for any~$\ell$, the summand can be controlled as follows,
first, by $\varphi \geq \ln_+ = \ln$ on $[1,+\infty)$, second, by using $n < x_\ell$ and third, by Corollary~\ref{prop:hoeffdingintegrated}:
	\begin{align*}
	\E \bigg[ \max_{ x_{\ell+1} \leq n < x_\ell} \big( \mu^\star - U^{\bothmoss}_{a^\star,\tau_t,n} \big)^+ \bigg]
	&=  \E \Bigg[ \max_{ x_{\ell+1} \leq n < x_\ell} \bigg( \mu^\star - \hat{\mu}_{a^\star,n}  - \sqrt{\frac{1}{2n} \varphi \Big( \frac{\tau_t}{K n} \Big)}  \bigg)^+ \Bigg] \\
	&\leq  \E \Bigg[ \max_{ x_{\ell+1} \leq n < x_\ell} \bigg( \mu^\star - \hat{\mu}_{a^\star,n}  - \sqrt{\frac{1}{2n}  \ln \! \Big( \frac{\tau_t}{K n} \Big)}  \bigg)^+ \Bigg] \\
	&\leq  \E \Bigg[ \max_{ x_{\ell+1} \leq n < x_\ell} \bigg( \mu^\star - \hat{\mu}_{a^\star,n}  - \sqrt{\frac{1}{2x_\ell}  \ln \! \Big( \frac{\tau_t}{K x_\ell} \Big)}  \bigg)^+ \Bigg] \\
	& \leq \sqrt{\frac{\pi}{8}} \sqrt{\frac{1}{x_{\ell+1}}} \exp \Bigg(- \frac{x_{\ell + 1}}{x_\ell} \log \! \bigg(\frac{\tau_t}{K x_\ell} \bigg) \Bigg) \\
	&= \sqrt{\frac{\pi}{8}}\sqrt{\frac{1}{x_{\ell+1}}}  \big( \beta^{- \ell } \big)^{1 / \beta}
	= \sqrt{\frac{\pi}{8}} \sqrt{\frac{K}{\tau_t}} \, \beta^{1/2 + \ell (1/ 2 - 1/\beta)}\,.
	\end{align*}
	The above series is summable whenever $\beta \in (1,2)$. For instance we may choose $\beta = 3/2$, for which
\begin{multline*}
\sum_{\ell = 0}^{+\infty} \biggl( \frac{3}{2} \biggr)^{\!\! 1/2 + \ell(1/2 - 2/3)} =
\sqrt{\frac{3}{2}} \, \sum_{\ell = 0}^{+\infty} \alpha^{\ell} = \frac{1}{1 - \alpha} \sqrt{\frac{3}{2}}
\leq 19\,, \\
\mbox{where} \qquad
\alpha = \biggl( \frac{3}{2} \biggr)^{\!\! (1/2 - 2/3)} \in (0,1)\,.
\end{multline*}
Therefore, we have shown that
	\begin{equation}
		\E \Big[ \big(\mu^\star - U^{\bothmoss}_{a^\star,\tau_t}(t) \big)^+\1{N_{a^\star}(t) < \tau_t/K}  \Big] \leq 19 \sqrt{\frac{\pi}{8}} \sqrt{\frac{K}{\tau_t}} \,.	\end{equation}
Combining this bound with \eqref{eq:mossA} and summing over $t$, we proved
that the first sum in~\eqref{eq:firsteqprop7} is bounded as
	\begin{equation}\label{eq:mossfirstsum}
\sum_{t= K+1}^T \E \Big[ \big(\mu^\star - U^{\bothmoss}_{a^\star,\tau_{t-1}}(t-1)\big)^+  \Big]
\leq 20 \sqrt{\frac{\pi}{8}} \, \sum_{t = K}^{T-1} \sqrt{\frac{K}{\tau_t}}\,.
	\end{equation}

\begin{remark}
\label{rm:boundforallalgo}
The proof technique reveals that the bound~\eqref{eq:mossfirstsum} obtained in this step of the proof
actually holds even if the arms are pulled according to a strategy
that is not a generic MOSS strategy.
This is because we never used which specific arms $A_t^{\bothmoss}$ were pulled:
we only distinguished according to how many times $a^\star$ was
pulled and resorted to optional skipping.
\end{remark}

\smallskip

\noindent
\emph{Control of the right deviations of all arms}, that is, of the second sum
in~\eqref{eq:MOSSregret} and~\eqref{eq:firsteqprop7}.
We use $(x+y)^+ \leq x^+ + y^+$ for all real numbers $x,\,y$,
and the fact that $\varphi$ vanishes on $(0,1]$ to get,
for all $a$ and $n \geq 1$,
\begin{align*}
\Big(U^{\bothmoss}_{a,T,n} - \mu_{a}  - \sqrt{K/T} \Big)^+
& \leq \Big( \hat{\mu}_{a, n} - \mu_{a} - \sqrt{K/T} \Big)^+ + \sqrt{\frac{1}{2n} \, \varphi \bigg( \frac{T}{K n} \bigg)} \\[.15cm]
& = \Big( \hat{\mu}_{a, n} - \mu_{a} - \sqrt{K/T} \Big)^+ +
\left\{
\begin{aligned}
\nonumber
& \qquad 0 & \mbox{if } n \geq T/K, \\[.2cm]
& \sqrt{\frac{1}{2n} \, \varphi \bigg( \frac{T}{K n} \bigg)} & \mbox{if } n < T/K\,.
\end{aligned}
\right.
\end{align*}
	Therefore, for each arm $a$,
	\begin{multline}
\label{eq:rightdevall}
	\sum_{n = 1}^T \E \Bigl[ \big(U^{\bothmoss}_{a,T,n} - \mu_{a}  - \sqrt{K/T} \big)^+ \Bigr] \\
\leq \sum_{n =1}^{T} \E \Bigl[ \big( \hat{\mu}_{a, n} - \mu_{a} -  \sqrt{K/T} \big)^+ \Bigr]  +  \sum_{n = 1}^{\lfloor T/K \rfloor } \sqrt{\frac{1}{2n} \, \varphi \bigg( \frac{T}{Kn} \bigg)}\,.
	\end{multline}
We are left with two pieces to deal with separately. For the first sum in~\eqref{eq:rightdevall}, we exploit the integrated version of Hoeffding's inequality (Corollary~\ref{prop:hoeffdingintegrated}),
	\begin{align}
\nonumber
	\sum_{n = 1}^T \E \Bigl[ \big( \hat{\mu}_{a, n} - \mu_{a} - \sqrt{K/T} \big)^+  \Bigr] &\leq \sqrt{\frac{\pi}{8}} \sum_{n = 1}^T \sqrt{\frac{1}{n}} \e^{-2 n \bigl( \sqrt{K/T} \bigr)^2} \leq \sqrt{\frac{\pi}{8}} \bigintss_0^{T} \!\! \sqrt{\frac{1}{x}} \, \e^{-2x K/T} \d x  \\
\label{eq:moss:meandev}	&
	=  \sqrt{\frac{\pi}{8}} \, \sqrt{\frac{T}{2K}} \bigintsss_{0}^{+\infty} \frac{\e^{-u}}{\sqrt{u}} \d u = \frac{\pi}{4} \sqrt{\frac{T}{K}} \,,
	\end{align}
	where we used the equalities $\displaystyle{\int_{0}^{+\infty} \bigl( \e^{-u}/\sqrt{u} \bigr) \d u =
2 \int_{0}^{+\infty} \e^{-v^2} \d v = \sqrt{\pi}}$.

For the second sum in~\eqref{eq:rightdevall}, we also resort to a sum--integral comparison,
which exploits the fact that $n \mapsto \varphi(T / Kn)$ is decreasing,
and perform the change of variable $u = T/(Kx)$:
	\begin{equation*}
\sum_{n = 1}^{\lfloor T/K \rfloor } \sqrt{\frac{1}{2n} \, \varphi \bigg( \frac{T}{Kn} \bigg)}
	\leq \bigintss_{0}^{T/K} \!\!\!\! \sqrt{\frac{1}{2x} \, \varphi \bigg( \frac{T}{Kx} \bigg)} \d x
	= \sqrt{\frac{ T }{2K}} \int_{1}^{+ \infty} u^{-3/2}\sqrt{\varphi(u)} \d u\,.
	\end{equation*}
Collecting the bounds above, we showed, as desired,
\begin{align*}
\sum_{t= K+1}^T \E \Big[ U^{\bothmoss}_{A_t^{\bothmoss},\tau_{t-1}}(t-1) - \mu_{A_t^{\bothmoss}} \Big] &
\leq \sqrt{KT} + \sum_{a = 1}^K\sum_{n= 1}^T \E \Big[ \big(U^{\bothmoss}_{a,T,n} - \mu_{a}  - \sqrt{K/T} \big)^+\Big] \\
& \leq \sqrt{KT} \left( 1 + \frac{\pi}{4} + \frac{1}{\sqrt{2}} \int_{1}^{+ \infty} u^{-3/2}\sqrt{\varphi(u)} \d u \right).
\end{align*}
\vspace{-1cm}

\end{proof}

\section{Proofs of the Regularity and Deviation/Concentration Results~on~$\Kinf$}
\label{sec:Kinf-proofs}

We provide here the proofs of all claims made in Section~\ref{sec:Kinf} about the $\Kinf$ function.
These proofs are all standard but we occasionally provide simpler or more direct arguments (or slightly refined bounds).

\subsection{Proof of the Regularity Lemma (Lemma~\ref{lem:regularity_kinf})}

The proof below is a variation on the proofs that can be found
in~\citet{honda_non-asymptotic_2015} or earlier references of the same authors.
\medskip

\begin{proof}
To prove \eqref{eq:regularity_kinf_up} we lower bound $\Kinf(\nu,\mu-\epsilon)$.
To that end, given the definition~\eqref{eq:defKinfll},
we lower bound $\KL(\nu,\nu')$
for any fixed probability distribution $\nu'\in\pset$ such that
\[
\Ed(\nu')>\mu-\epsilon\quad \text{and}\quad \nu'\gg\nu\,.
\]
Since $\nu'$ is a probability distribution, it has a countable number of atoms, and one can pick a real number $x>\mu$, arbitrary close to $1$,
such that $\delta_x\perp\nu'$ (such that the two probability
measures $\delta_x$ and $\nu'$ are singular), where $\delta_x$ is the Dirac distribution at $x$.
We define
\[
\nu'_\alpha=(1-\alpha)\nu'+\alpha\delta_x\,, \qquad \mbox{where} \qquad
\alpha=\frac{\epsilon}{\epsilon+(x-\mu)} \in (0,1)\,.
\]
The expectation of $\nu'_\alpha$ satisfies
\begin{align*}
\Ed(\nu'_\alpha) = (1-\alpha)\Ed(\nu')+\alpha x >(1-\alpha)(\mu-\epsilon)+\alpha x
= \frac{(x-\mu)(\mu-\varepsilon)}{\epsilon+(x-\mu)} + \frac{\epsilon x}{\epsilon+(x-\mu)}
= \mu\,.
\end{align*}
Since $\alpha \in (0,1)$, we have $\nu'_\alpha\gg\nu'$;
therefore, $\nu'_\alpha\gg\nu'\gg\nu$ and $\delta_x\perp \nu'$,
which imply the following equalities involving densities (Radon-Nikodym derivatives):
$\nu'_\alpha$--almost surely (and therefore also $\nu'$-- and $\nu$--almost surely),
\begin{equation}
\label{eq:eqRNderv}
\frac{\d\nu'}{\d\nu'_\alpha} = \frac{1}{1-\alpha}\,, \qquad \mbox{thus} \qquad
\frac{\d\nu}{\d\nu'_\alpha} = \frac{\d\nu'}{\d\nu'_\alpha} \, \frac{\d\nu}{\d\nu'} = \frac{1}{1-\alpha} \, \frac{\d\nu}{\d\nu'}\,.
\end{equation}
This allows to compute explicitly the following Kullback-Leibler divergence:
\begin{equation*}
\KL(\nu,\nu'_\alpha) = \bigintsss_{[0,1]} \ln \! \bigg(\frac{\d \nu}{\d \nu'_\alpha}\bigg) \d\nu = \KL(\nu,\nu')+\ln\frac{1}{1-\alpha}\,.
\end{equation*}
Since $\Ed(\nu'_\alpha) > \mu$ and by the definition of $\Kinf$ as an infimum,
\[
\Kinf(\nu,\mu)\leq \KL(\nu,\nu'_\alpha) = \KL(\nu,\nu')+\ln\frac{1}{1-\alpha}\,.
\]
Letting $x$ go to $1$, which implies that $\alpha$ goes to $\epsilon/(1-\mu+\epsilon)$, yields
\[
\Kinf(\nu,\mu)\leq \KL(\nu,\nu')+\ln\frac{1-\mu+\epsilon}{1-\mu} =
\KL(\nu,\nu')+\ln \!\left(1+\frac{\epsilon}{1-\mu}\right) \leq \KL(\nu,\nu')+\frac{\epsilon}{1-\mu}\,,
\]
where we also used $\ln(1+u) \leq u$ for all $u > {-1}$.
Finally, by taking the infimum in the right-most equation above
over all probability distributions $\nu'$ such that $\Ed(\nu') >\mu-\epsilon$ and $\nu' \gg \nu$,
we obtain the desired inequality:
\[
\Kinf(\nu,\mu)\leq \Kinf(\nu,\mu-\epsilon)+\frac{\epsilon}{1-\mu}\,.
\]
\medskip

To prove the second part~\eqref{eq:regularity_kinf_down} of Lemma~\ref{lem:regularity_kinf},
we follow a similar path as above.
We lower bound $\KL(\nu,\nu')$
for any fixed probability distribution $\nu'\in\pset$ such that
\[
\Ed(\nu')>\mu \qquad \text{and} \qquad \nu'\gg\nu\,.
\]
To that end, we introduce
\[
\nu'_\alpha=(1-\alpha)\nu'+\alpha\nu
\qquad \mbox{for} \qquad
\alpha=\frac{\epsilon}{\big(\Ed(\nu')-\Ed(\nu)\big)}\in(0,1)\,,
\]
where $\alpha \in (0,1)$ since $\Ed(\nu) \leq \mu-\epsilon$ by assumption
and $\Ed(\nu')>\mu$.
These two inequalities also indicate that
\begin{equation}
\label{eq:ednu-ednu}
\Ed(\nu')-\Ed(\nu) > \varepsilon\,,
\qquad \mbox{thus} \qquad
\Ed(\nu'_\alpha)=\Ed(\nu')-\alpha\big(\Ed(\nu')-\Ed(\nu)\big) > \mu-\epsilon\,,
\end{equation}
so that $\KL(\nu,\nu'_\alpha) \geq \Kinf(\nu,\mu-\varepsilon)$. Now,
thanks to the absolute continuities $\nu'\gg \nu'_\alpha\gg \nu$, we have
\[
\frac{\d\nu}{\d\nu'}=\frac{\d\nu}{\d\nu'_\alpha}\frac{\d\nu'_\alpha}{\d\nu'}
= \frac{\d\nu}{\d\nu'_\alpha}\bigg((1-\alpha) + \alpha\frac{\d\nu}{\d\nu'}\bigg)\,.
\]
Therefore, by Fubini's theorem,
the Kullback-Leibler divergence between $\nu$ and $\nu'$ equals
\begin{align*}
\KL(\nu,\nu') = \bigintsss_{[0,1]} \ln \! \bigg(\frac{\d \nu}{\d \nu'}\bigg) \d\nu
&=\bigintsss_{[0,1]} \ln\!\bigg(\frac{\d\nu}{\d\nu'_\alpha}\bigg)\d\nu
+ \bigintsss_{[0,1]} \ln\!\bigg((1-\alpha)+\alpha\frac{\d\nu}{\d\nu'}\bigg)\d\nu \\
&\geq \bigintsss_{[0,1]}\ln\!\bigg(\frac{\d\nu}{\d\nu'_\alpha}\bigg)\d\nu
+ \alpha \bigintsss_{[0,1]} \ln\!\bigg(\frac{\d\nu}{\d\nu'}\bigg)\d\nu\\
&=\KL(\nu,\nu'_\alpha)+\alpha \, \KL(\nu,\nu')\,,
\end{align*}
where we used the concavity of logarithm for the inequality.
By Pinsker's inequality together with the data-processing inequality for Kullback-Leibler divergences
(see, e.g., \citealp[Lemma~1]{garivier_explore_2016}),
\[
\KL(\nu,\nu') \geq
\KL \Bigl( \Ber \bigl( \Ed(\nu) \bigr), \, \Ber \bigl( \Ed(\nu') \bigr) \Bigr)
\geq 2\big(\Ed(\nu)-\Ed(\nu')\big)^2\,.
\]
Substituting this inequality above, we proved so far
\begin{multline*}
\KL(\nu,\nu') \geq \KL(\nu,\nu'_\alpha) + \alpha \, \KL(\nu,\nu')
\geq \KL(\nu,\nu'_\alpha) + 2 \alpha  \big(\Ed(\nu)-\Ed(\nu')\big)^2 \\
= \KL(\nu,\nu'_\alpha) + 2 \varepsilon \big(\Ed(\nu)-\Ed(\nu')\big)\,,
\end{multline*}
where we used the definition of $\alpha$ for the last inequality.
By applying the bound~\eqref{eq:ednu-ednu}
and its consequence $\KL(\nu,\nu'_\alpha) \geq \Kinf(\nu,\mu-\varepsilon)$,
we finally get
\[
\KL(\nu,\nu') \geq \Kinf(\nu,\mu-\varepsilon) + 2 \varepsilon^2\,.
\]
The proof of~\eqref{eq:regularity_kinf_down} is concluded by
taking the infimum in the left-hand side over the probability
distributions $\nu'$ such that $\Ed(\nu')>\mu$ (and $\nu' \gg \nu$).
\end{proof}

\subsection{A Useful Tool: a Variational Formula for $\Kinf$ (Statement)}

The variational formula below
appears in \citet{honda_non-asymptotic_2015} as Theorem 2 (and Lemma~6) and is an essential tool for deriving the deviation and concentration results for the $\Kinf$. We state it here (and re-derive it in a direct way in Appendix~\ref{sec:formulevar}) for the sake of completeness.

\begin{lemma}[variational formula for $\Kinf$]\label{lm:variationnal_formula_kinf}
For all $\nu\in\pset$ and all $0<\mu<1$,
\begin{equation}
\label{eq:variationnal_formula_kinf}
\Kinf(\nu,\mu) = \max_{0\leq \lambda\leq 1} \E\Biggl[\ln\biggl(1-\lambda \frac{X-\mu}{1- \mu}\biggr)\Biggr] \qquad
\mbox{where } X \sim \nu\,.
\end{equation}
Moreover, if we denote by $\lambda^\star$ the value at which the above maximum is reached, then
\begin{equation}
\label{eq:varform-leq1}
\E\! \left[ \frac{1}{1 - \lambda^\star (X- \mu)/( 1 - \mu)}  \right] \leq 1\,.
\end{equation}
\end{lemma}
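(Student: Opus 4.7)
The plan is to prove both inequalities by Lagrangian duality and to extract~\eqref{eq:varform-leq1} from the first-order conditions. First define $\phi(\lambda) = \E\bigl[\ln(1-\lambda(X-\mu)/(1-\mu))\bigr]$ for $\lambda \in [0,1]$ and $X \sim \nu$; this function is concave and upper semi-continuous on $[0,1]$ (by Fatou applied to the negative part), with $\phi(0) = 0$, so the supremum is attained at some $\lambda^\star \in [0,1]$. For the easy direction $\Kinf(\nu,\mu) \geq \phi(\lambda^\star)$, I would fix a competitor $\nu'$ with $\nu \ll \nu'$ and $\Ed(\nu') > \mu$ and any $\lambda \in [0,1)$, set $Y = 1 - \lambda(X-\mu)/(1-\mu) > 0$ and $h = \d\nu/\d\nu'$, and combine $\ln u \leq u-1$ with the change-of-measure inequality $\E[Y/h] \leq \E_{\nu'}[Y]$ to get
\[
\phi(\lambda) - \KL(\nu,\nu') = \E[\ln(Y/h)] \leq \E_{\nu'}[Y] - 1 = -\lambda\,\frac{\Ed(\nu')-\mu}{1-\mu} \leq 0.
\]
The boundary value $\lambda = 1$ is handled by upper semi-continuity; taking the infimum over $\nu'$ then yields $\phi(\lambda^\star) \leq \Kinf(\nu,\mu)$.

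For the converse $\Kinf(\nu,\mu) \leq \phi(\lambda^\star)$, I would construct an explicit minimizer from the KKT conditions. In the generic interior case $\lambda^\star \in (0,1)$, the stationarity $\phi'(\lambda^\star) = 0$ reads $\E[(X-\mu)/g(X)] = 0$, where $g(X) = 1 - \lambda^\star(X-\mu)/(1-\mu) > 0$. Integrating the identity $1 = g(X) + \lambda^\star(X-\mu)/(1-\mu)$ against $\nu$ gives
\[
\E[1/g(X)] = 1 + \frac{\lambda^\star}{1-\mu}\,\E[(X-\mu)/g(X)] = 1,
\]
which both establishes~\eqref{eq:varform-leq1} (with equality) and shows that $\d\nu^\star/\d\nu = 1/g(X)$ defines a probability measure $\nu^\star$. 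A short calculation then yields $\Ed(\nu^\star) = \mu$ and $\KL(\nu,\nu^\star) = \E[\ln g(X)] = \phi(\lambda^\star)$; by Corollary~\ref{cor:left-cont} the infimum defining $\Kinf$ may be taken over the closed constraint $\Ed(\nu') \geq \mu$, so $\Kinf(\nu,\mu) \leq \KL(\nu,\nu^\star) = \phi(\lambda^\star)$.

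It remains to treat the two boundary cases. The case $\lambda^\star = 0$ is immediate: concavity forces $\phi'(0^+) \leq 0$, hence $\Ed(\nu) \geq \mu$, so $\Kinf(\nu,\mu) = 0 = \phi(0)$ and~\eqref{eq:varform-leq1} is trivial. The main obstacle is $\lambda^\star = 1$, which already forces $\nu(\{1\}) = 0$ (else $\phi(1) = -\infty$) and gives $g(X) = (1-X)/(1-\mu)$: here the candidate $\nu^\star$ has total mass $C^\star = 1 + \E[(X-\mu)/(1-X)]$, and the boundary optimality $\phi'(1^-) \geq 0$ only provides $C^\star \leq 1$---yielding~\eqref{eq:varform-leq1} but leaving $\nu^\star$ sub-probability with expectation falling below $\mu$, hence not directly admissible in $\Kinf(\nu,\mu)$. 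I would circumvent this by an approximation: pick $\lambda_n \uparrow 1$, apply the interior recipe at $\lambda_n$ to get $\tilde\nu_n$, and if needed mix in a vanishing fraction of $\delta_1$ to enforce $\Ed \geq \mu$; the limit $\KL(\nu,\tilde\nu_n) \to \phi(1)$ is then verified by monotone/dominated convergence, the required integrability of $\ln((1-X)/(1-\mu))$ under $\nu$ being granted precisely by $\phi(1) > -\infty$. This boundary passage---balancing feasibility against preservation of the limiting $\KL$ value---is the delicate point of the proof.
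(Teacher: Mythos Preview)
Your argument for the lower bound $\Kinf(\nu,\mu)\geq\phi(\lambda^\star)$ via $\ln u\le u-1$ is correct and is essentially the paper's alternative proof (their Section~D.3, Jensen's inequality against $\nu''$). Your treatment of the interior case $\lambda^\star\in(0,1)$ and of $\lambda^\star=0$, as well as your derivation of~\eqref{eq:varform-leq1}, also match the paper's.

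The one point where you diverge is the boundary $\lambda^\star=1$, and there the paper has a cleaner construction that avoids any limiting argument. You correctly note that $\nu\{1\}=0$ and that the tilt $\d\nu^\star=(1/g)\,\d\nu$ has total mass $C^\star\le 1$. But precisely because $\nu\{1\}=0$, you can complete $\nu^\star$ to a genuine probability by placing the deficit at the point~$1$:
\[
\d\nu'(x)\;=\;\frac{1}{g(x)}\,\d\nu(x)\;+\;(1-C^\star)\,\d\delta_1(x)\,.
\]
The added $\delta_1$ part is $\nu$-singular, so $\d\nu/\d\nu'=g$ holds $\nu$-a.s.\ unchanged and $\KL(\nu,\nu')=\E[\ln g(X)]=\phi(1)$ exactly. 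A short computation (the paper writes it as $\Ed(\nu')=\mu-(1-\mu)(1-\lambda^\star)H'(\lambda^\star)$, which vanishes at $\lambda^\star=1$) shows $\Ed(\nu')=\mu$, so $\nu'$ is admissible via Corollary~\ref{cor:left-cont} and $\Kinf(\nu,\mu)\le\phi(1)$ follows directly. In fact this same formula for $\nu'$ covers the interior case too (then $C^\star=1$ and no mass is added), so the construction is uniform in $\lambda^\star$.

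Your approximation route can be made to work, but it is more delicate than you indicate: at $\lambda_n<1$ the ``interior recipe'' does not apply as stated since $\phi'(\lambda_n)>0$, and if you complete with $\delta_1$ to mass one you land at $\Ed(\nu_n)=\mu-(1-\mu)(1-\lambda_n)H'(\lambda_n)<\mu$, so a further convex mixture with $\delta_1$ is needed, with a mixing weight $\beta_n\asymp(1-\lambda_n)H'(\lambda_n)\to 0$. This all goes through, but the direct $\delta_1$-completion at $\lambda^\star=1$ sidesteps the whole passage to the limit.
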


\subsection{Proof of the Deviation Result (Proposition~\ref{prop:kinfdev})}

The following proof is almost exactly the same as that of \citet[Lemma~6]{cappe_kullbackleibler_2013},
except that we correct a small mistake in the constant. \medskip

\begin{proof}
We first upper bound $\Kinf\big(\hat{\nu}_n,\Ed(\nu)\big)$:
as indicated by the variational formula of Lemma~\ref{lm:variationnal_formula_kinf}, it
is a maximum of random variables indexed by $[0,1]$. We provide an upper bound
that is a finite maximum. To that end, we fix a real number $\gamma\in (0,1)$, to be determined by the analysis,
and let $S_\gamma$ be the set below,
\[
S_\gamma= \Bigg\{ \frac{1}{2}-\Bigg\lfloor\frac{1}{2\gamma}\Bigg\rfloor\gamma, \dots,\frac{1}{2}-\gamma,\,\frac{1}{2},\,\frac{1}{2}+\gamma,\dots,\frac{1}{2}+\Bigg\lfloor\frac{1}{2\gamma}\Bigg\rfloor\gamma \Bigg\}\,,
\]
constructed as a finite grid of step size $\gamma$ centered at $1/2$.
The cardinality of this set $S_\gamma$ is bounded by $1 + 1 / \gamma$.
Lemma~\ref{lem:regularity_ln_lambda} below (together with the consequence mentioned after its statement)
indicates that for all $\lambda\in[0,1]$, there exists a $\lambda'\in S_\gamma$ such that for all $x\in[0,1]$,
\begin{equation}
\label{eq:2gamma}
\ln\!\Bigg(1-\lambda \, \frac{x-\Ed(\nu)}{1-\Ed(\nu)} \Bigg)\leq 2\gamma
+ \ln\!\Bigg(1- \lambda' \frac{x-\Ed(\nu)}{1-\Ed(\nu)} \Bigg)\,.
\end{equation}
(The small correction with respect to the original proof
is the $2\gamma$ factor in the inequality above, instead of the claimed $\gamma$ term therein;
this is due to the constraint $\lambda \leq \lambda'\leq 1/2$ or $1/2\leq \lambda'\leq \lambda$
in the statement of Lemma~\ref{lem:regularity_ln_lambda}.)
Now, a combination of
the variational formula of Lemma~\ref{lm:variationnal_formula_kinf}
and of the inequality~\eqref{eq:2gamma} yields a finite maximum as an upper bound on $\Kinf\big(\hat{\nu}_n,\Ed(\nu)\big)$:
\begin{align*}
\Kinf\big(\hat{\nu}_n,\Ed(\nu)\big)
& = \max_{0\leq \lambda\leq 1} \frac{1}{n}\sum_{k=1}^{n} \ln\!\Bigg(1-\lambda \frac{X_k-\Ed(\nu)}{1-\Ed(\nu)} \Bigg) \\
& \leq 2 \gamma + \max_{\lambda' \in S_\gamma} \frac{1}{n}\sum_{k=1}^{n}\ln\!\Bigg(1-\lambda' \frac{X_k-\Ed(\mu)}{1-\Ed(\mu)} \Bigg)\,.
\end{align*}

In the second part of the proof, we control the deviations of the upper bound obtained.
A union bound yields
\begin{align}
\nonumber
\P\Big[\Kinf\big(\hat{\nu}_n,\Ed(\nu)\big)\geq u \Big]
& \leq
\P \! \left[ \max_{\lambda' \in S_\gamma} \frac{1}{n}\sum_{k=1}^{n}\ln\!\Bigg(1-\lambda' \frac{X_k-\Ed(\mu)}{1-\Ed(\mu)} \Bigg)
\geq u - 2\gamma \right] \\
& \leq \sum_{\lambda' \in S_\gamma} \P\Bigg[\frac{1}{n}\sum_{k=1}^{n}\ln\!\Bigg(1-\lambda' \frac{X_k-\Ed(\nu)}{1-\Ed(\nu)} \Bigg) \geq u-2\gamma\Bigg]\,.
\label{eq:union_bound_conc_ineq_sup}
\end{align}
By the Markov--Chernov inequality, for all $\lambda' \in [0,1]$, we have
\begin{align*}
\lefteqn{\P\Bigg[\frac{1}{n}\sum_{k=1}^{n}\ln\!\Bigg(1-\lambda' \frac{X_k-\Ed(\nu)}{1-\Ed(\nu)} \Bigg)\geq u-2\gamma\Bigg]} \\
& \leq \e^{-n(u-2\gamma)}\,\, \E \!\left[ \prod_{k=1}^n\Bigg(1-\lambda' \frac{X_k-\Ed(\nu)}{1-\Ed(\nu)} \Bigg)\right]\\
&=\e^{-n(u-2\gamma)}\, \prod_{k=1}^n \underbrace{\E \Bigg[ 1-\lambda' \frac{X_k-\Ed(\nu)}{1-\Ed(\nu)} \Bigg]}_{= 1} = \e^{-n(u-2\gamma)}\,,
\end{align*}
where we used the independence of the $X_k$. Substituting in~\eqref{eq:union_bound_conc_ineq_sup}
and using the bound $1 + 1 / \gamma$ on the cardinality of $S_\gamma$, we get
\begin{equation*}
\P\Big[\Kinf\big(\hat{\nu}_n,\Ed(\nu)\big)\geq u \Big]\leq \sum_{\lambda' \in S_\gamma} \e^{-n(u-2\gamma)} \leq (1+1/\gamma) \, \e^{-n(u-2\gamma)}\,.
\end{equation*}
Taking $\gamma=1/(2n)$ concludes the proof.
\end{proof}

The proof above relies on the following lemma,
which is extracted from \citet[Lemma~7]{cappe_kullbackleibler_2013}.
Its elementary proof (not copied here) consists in bounding of derivative of $\lambda \mapsto \ln(1-\lambda c)$
and using a convexity argument.

\begin{lemma}
For all $\lambda, \lambda'\in[0,1)$ such that either $\lambda \leq \lambda'\leq 1/2$ or $1/2\leq \lambda'\leq \lambda$, for all real numbers $c\leq 1$,
\begin{equation*}
\ln(1-\lambda c)-\ln(1-\lambda' c)\leq 2|\lambda-\lambda'|\,.
\end{equation*}
\label{lem:regularity_ln_lambda}
\end{lemma}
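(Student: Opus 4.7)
The plan is to prove the inequality via a short case analysis on the sign of $c$ combined with the two hypothesis regimes on $(\lambda,\lambda')$. The central object is the function $\phi : t \mapsto \ln(1 - tc)$, which is well-defined and smooth on $[0,1)$ (since $c \leq 1$ forces $tc < 1$ for $t < 1$), with derivative $\phi'(t) = -c/(1 - tc)$. The inequality to prove then reads $\phi(\lambda) - \phi(\lambda') \leq 2|\lambda - \lambda'|$.

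Two subcases are trivial because the left-hand side is already non-positive. In Case~1 ($\lambda \leq \lambda' \leq 1/2$) with $c \leq 0$, the map $\phi$ is non-decreasing, so $\phi(\lambda) \leq \phi(\lambda')$; symmetrically, in Case~2 ($1/2 \leq \lambda' \leq \lambda$) with $c \geq 0$, the map $\phi$ is non-increasing and again $\phi(\lambda) \leq \phi(\lambda')$. In both of these situations the inequality is automatic.

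The two remaining subcases are handled by establishing a uniform bound $|\phi'(t)| \leq 2$ on the relevant range of $t$, after which the mean value theorem applied on the segment between $\lambda$ and $\lambda'$ gives $|\phi(\lambda) - \phi(\lambda')| \leq 2|\lambda - \lambda'|$. In Case~1 with $c \geq 0$, the constraints $t \leq 1/2$ and $c \leq 1$ give $tc \leq 1/2$, hence $1 - tc \geq 1/2$ and $|\phi'(t)| = c/(1 - tc) \leq 2c \leq 2$. In Case~2 with $c < 0$, writing $|c| = -c$, one has $|\phi'(t)| = |c|/(1 + t|c|)$, and from $t \geq 1/2$ one deduces $|\phi'(t)| \leq |c|/(1 + |c|/2) = 2|c|/(2 + |c|) \leq 2$.

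The main obstacle---and the reason for the apparently asymmetric hypotheses---is securing this uniform derivative bound in Case~2. Since $c$ is only constrained above by $c \leq 1$, its absolute value can be arbitrarily large, and without the restriction $t \geq 1/2$ the ratio $|c|/(1 + t|c|)$ would behave like $1/t$ for large $|c|$, failing to be uniformly bounded. The hypothesis that both $\lambda$ and $\lambda'$ lie in $[1/2,1)$ is precisely what keeps the denominator away from~$1$. In the spirit of the ``convexity argument'' alluded to in the paper, one could equivalently replace the mean value theorem by the tangent inequality from the concavity of $\phi$, namely $\phi(\lambda) \leq \phi(\lambda') + \phi'(\lambda')(\lambda - \lambda')$, evaluated at the endpoint $\lambda'$ where $|\phi'|$ is already controlled; the derivative bound and the final estimate are identical.
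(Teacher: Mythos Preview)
Your proof is correct and follows essentially the same approach the paper sketches (it only indicates that the proof ``consists in bounding the derivative of $\lambda \mapsto \ln(1-\lambda c)$ and using a convexity argument'' without spelling out details). Your case analysis and derivative bound $|\phi'(t)|\leq 2$ on the relevant intervals are exactly what is needed, and your closing remark that the concavity/tangent inequality at $\lambda'$ yields the same bound matches the paper's hinted ``convexity argument''.
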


A consequence not drawn by \citet{cappe_kullbackleibler_2013} is that
the lemma above actually also holds for $\lambda = 1$ and $\lambda' \in [1/2,1)$. Indeed,
by continuity and by letting $\lambda \to 1$, we get from this lemma
that for all $\lambda' \in [1/2,1)$ and for all real numbers $c < 1$,
\[
\ln(1-c)-\ln(1-\lambda' c)\leq 2(1-\lambda')\,.
\]
The above inequality is also valid for $c=1$ as the left-hand side equals $-\infty$.

\subsection{Proof of the Concentration Result (Proposition~\ref{prop:kinfconcentration})}
\label{sec:prop:kinfconcentration}

We recall that Proposition~\ref{prop:kinfconcentration}---and actually most of its proof below---are similar in spirit
to \citet[Proposition~11]{honda_non-asymptotic_2015}. However, they are tailored to our needs.
The key ingredients in the proof will be the variational formula~\eqref{eq:variationnal_formula_kinf}---again---and
Lemma~\ref{lemma:genericconcentration} below. This lemma is a concentration result for random variables that are essentially
bounded from one side only; it holds
for possibly negative $u$ (there is no lower bound on the $u$ that can be considered).

\begin{lemma}\label{lemma:genericconcentration}
	Let $Z_1, \dots, Z_n$ be i.i.d.\ random variables such that there exist $a, \, b \geq 0$ with
	\begin{equation*}
			Z_1 \leq a \quad \text{a.s.} \qquad \text{and} \qquad
			\E\big[\e^{-Z_1}\big] \leq b\,.
	\end{equation*}
Denote $\gamma = \sqrt{\e^a} \big( 16\,\e^{-2}b + a^2 \big) $. Then $Z_1$ in integrable and for all
$u \in \bigl( - \infty, \, \E[Z_1] \bigr)$,
		\[
		\P\Bigg[ \sum_{i = 1}^n Z_i \leq n u \Bigg]
		\leq \left\{
		\begin{aligned}
		\nonumber
		& \exp (- n \gamma / 8)  & \mbox{if } u \leq \E[Z_1] - \gamma / 2, \\
		& \exp \Bigl(- n \big( \E[Z_1]  - u \big)^2 / (2 \gamma) \Bigr) & \mbox{if }
		u> \E[Z_1] - \gamma / 2.
		\end{aligned} \right.
		\]
\end{lemma}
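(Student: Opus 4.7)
The plan is to use a classical Chernoff argument, combined with a tailored moment generating function (MGF) bound that correctly weaves the two hypotheses ($Z_1 \leq a$ almost surely and $\E[\e^{-Z_1}] \leq b$) into the single quantity $\gamma$. First I would verify integrability of $Z_1$: since $Z_1 \leq a$, only the negative tail needs control, and this follows at once from the inequality $|z| \leq \e^{-z}$ valid for all $z \leq 0$ and the bound $\E[\e^{-Z_1}] \leq b$.

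Next, for any $\lambda \in [0,1]$ the Chernoff–Markov bound gives
\[
\P\Bigl[\sum_{i=1}^n Z_i \leq nu\Bigr]
= \P\Bigl[\e^{-\lambda \sum_i Z_i} \geq \e^{-n\lambda u}\Bigr]
\leq \exp\bigl(n \lambda u\bigr)\, \E\bigl[\e^{-\lambda Z_1}\bigr]^n\,.
\]
The whole proof then reduces to establishing the quadratic MGF control
\[
\log \E\bigl[\e^{-\lambda Z_1}\bigr]
\leq -\lambda \E[Z_1] + \frac{\lambda^2 \gamma}{2}
\qquad \text{for all } \lambda \in [0, 1],
\]
with $\gamma = \sqrt{\e^a}\,(16\e^{-2} b + a^2)$. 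This is the main obstacle, and it is the step where both hypotheses must be simultaneously exploited. My plan for this step is to write $\e^{-\lambda z} - 1 + \lambda z = \psi(\lambda z)$ with $\psi(z) = \e^{-z} - 1 + z \geq 0$, split according to the sign of $Z_1$, and apply the pointwise inequalities $\psi(\lambda z) \leq (\lambda z)^2/2$ for $z \in [0,a]$ and $\psi(\lambda z) \leq (\lambda z)^2 \e^{-\lambda z}/2$ for $z \leq 0$ (both follow from Taylor with second derivative $\e^{-z}$). The contribution from $\{Z_1 \in [0,a]\}$ is then trivially bounded by $\lambda^2 a^2/2$. For the contribution of $\{Z_1 \leq 0\}$, I would combine the elementary bound $z^2 \leq (4/\e^2)\e^{-z}$ for $z \leq 0$ (obtained by maximising $z \mapsto z^2 \e^z$ on $(-\infty,0]$) with a Cauchy–Schwarz step that introduces the $\sqrt{\e^a}$ factor by extracting the boundedness-from-above hypothesis; concretely, a Cauchy–Schwarz of the form $\E[X \e^{-\lambda Z_1}]^2 \leq \E[X^2] \cdot \E[\e^{-2\lambda Z_1}]$, with $\E[\e^{-2\lambda Z_1}]$ controlled through $Z_1 \leq a$ and the shift $\e^{-2\lambda Z_1} = \e^a \cdot \e^{-\lambda Z_1 - \lambda(Z_1 - a)}$.

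The final step is a Chernoff optimisation over $\lambda \in [0,1]$ of the upper bound
\[
\exp\!\Bigl( -n \lambda \bigl(\E[Z_1] - u\bigr) + \tfrac{n \lambda^2 \gamma}{2} \Bigr)\,.
\]
The unconstrained minimum is reached at $\lambda^\star = (\E[Z_1] - u)/\gamma$. I would then distinguish cases:
\begin{itemize}
\item If $u > \E[Z_1] - \gamma/2$, then $\lambda^\star < 1/2$ lies in $[0,1]$, and substituting yields $\exp\!\bigl(-n(\E[Z_1]-u)^2/(2\gamma)\bigr)$ directly.
\item If $\gamma/2 \leq \E[Z_1] - u \leq \gamma$, then $\lambda^\star \in [1/2, 1]$, and the same substitution gives an exponent at most $-n\gamma/8$, using that $(\E[Z_1]-u)^2/(2\gamma) \geq \gamma/8$.
\item If $\E[Z_1] - u > \gamma$, the minimum over $[0,1]$ is attained at $\lambda = 1$ and equals $-n(\E[Z_1] - u - \gamma/2) \leq -n\gamma/2 \leq -n\gamma/8$.
\end{itemize}
Together, these three cases produce exactly the piecewise bound claimed. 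The only non-routine part of the whole argument is the MGF bound in Step 2; once it is secured, the Chernoff optimisation and case split are mechanical.
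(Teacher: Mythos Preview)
Your overall Chernoff strategy and the final case-split optimisation match the paper's proof. The substantive difference lies in how the quadratic MGF bound is obtained, and here your sketch has a gap.

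The paper does \emph{not} bound $\log\E[\e^{-\lambda Z_1}]$ directly via your $\psi$-decomposition. Instead it bounds the second derivative of the cumulant generating function $\Lambda(x) = \log\E[\e^{xZ_1}]$ on the interval $x \in [-1/2,0]$, using
\[
\Lambda''(x) \leq \frac{\E\bigl[Z_1^2\,\e^{xZ_1}\bigr]}{\E\bigl[\e^{xZ_1}\bigr]}\,,
\]
where the numerator is controlled by the pointwise inequality $z^2\e^{xz} \leq 16\e^{-2}\e^{-z} + a^2$ (valid for $z\leq a$, $x\in[-1/2,0]$) and the $\sqrt{\e^a}$ factor arises from the \emph{denominator} lower bound $\E[\e^{xZ_1}] \geq \e^{xa} \geq \e^{-a/2}$. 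A Taylor expansion then gives the quadratic bound, but only for $\lambda = -x \in [0,1/2]$, not on all of $[0,1]$.

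Your Cauchy--Schwarz step, meant to manufacture the $\sqrt{\e^a}$, is the gap: the identity you write, $\e^{-2\lambda Z_1} = \e^a\,\e^{-\lambda Z_1 - \lambda(Z_1-a)}$, is false (the right-hand side equals $\e^{a(1+\lambda)}\e^{-2\lambda Z_1}$), and there is no evident way to control $\E[\e^{-(1+\lambda)Z_1}]$ for $\lambda$ near $1$ using only $\E[\e^{-Z_1}]\leq b$ and $Z_1\leq a$. The fix is simply to work on $\lambda\in[0,1/2]$ from the start. Then for $z\leq 0$ you have $\e^{-\lambda z}\leq \e^{-z/2}$, and combining with $z^2\leq 16\e^{-2}\e^{-z/2}$ gives $\psi(\lambda z) \leq \tfrac{\lambda^2}{2}\,z^2\e^{-\lambda z} \leq 8\lambda^2\e^{-2}\e^{-z}$, so the negative-$z$ contribution is at most $8\lambda^2\e^{-2}b$ with no Cauchy--Schwarz needed. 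This actually yields the quadratic bound with the smaller constant $16\e^{-2}b+a^2 = \gamma/\sqrt{\e^a}$, which certainly implies the lemma. Your three-case optimisation then collapses to the paper's two cases (the boundary choice becomes $\lambda=1/2$ when $\E[Z_1]-u\geq\gamma/2$, giving exponent $-\tfrac{1}{2}(\E[Z_1]-u)+\gamma/8\leq -\gamma/8$).
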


\subsubsection{Proof of Proposition~\ref{prop:kinfconcentration} Based on Lemma~\ref{lemma:genericconcentration}}

We apply Lemma~\ref{lm:variationnal_formula_kinf}. We denote by
$\lambda^\star \in [0,1]$ a real number
achieving the maximum in the variational formula~\eqref{eq:variationnal_formula_kinf}
for $\Kinf(\nu, \mu)$. We then introduce the random variable
\[
Z = \ln\biggl(1-\lambda^\star \frac{X-\mu}{1- \mu}\biggr)\,,
\qquad \mbox{where} \qquad X \sim \nu\,,
\]
and i.i.d.\ copies $Z_1,\ldots,Z_n$ of $Z$. Then,
$\Kinf(\nu, \mu) = \E \bigl[Z \bigr]$
and by the variational formula~\eqref{eq:variationnal_formula_kinf} again,
\[
\Kinf\bigl( \hat{\nu}_n, \mu \bigr)
\geq \frac{1}{n} \sum_{i=1}^n Z_i\,,
\qquad
\mbox{therefore,}
\qquad
\P\bigl[\Kinf(\hat{\nu}_n, \mu) \leq x \bigr]
\leq
\P\Bigg[ \sum_{i = 1}^n Z_i \leq n x \Bigg]
\]
for all real numbers $x$.
Now, $X \geq 0$ and $\lambda^\star \leq 1$, thus
\[
Z \leq \ln\biggl(1+\lambda^\star \frac{\mu}{1- \mu}\biggr) \leq
\ln\biggl(\frac{1}{1- \mu}\biggr) \eqdef a\,.
\]
On the other hand,
\[
\E\bigl[\e^{-Z}\bigr]
= \E\! \left[ \frac{1}{1 - \lambda^\star ( X- \mu)/( 1 - \mu)} \right]
\leq 1 \,,
\]
where the upper bound by $1$ follows from~\eqref{eq:varform-leq1}.
Using $b=1$ and the value of $a$ specified above, this proves Proposition~\ref{prop:kinfconcentration} via Lemma~\ref{lemma:genericconcentration}, except for the inequality
$\e^{- n \gamma / 8} \leq \e^{-n/4}$ claimed therein.
The latter is a consequence of $\gamma \geq 2$;
indeed, as $\gamma$ is an increasing function of $\mu > 0$,
\[
\gamma = \frac{1}{\sqrt{1 - \mu}}  \Biggl( 16 \e^{-2} + \ln^2 \! \bigg(\frac{1}{1 - \mu} \bigg) \Biggr) >
16\e^{-2} > 2\,.
\]

\begin{remark}
\label{rm:prop:kinfconcentration}
In the proof of Theorem~\ref{th:distribdependentloglog}
provided in Section~\ref{sec:proof-lnln} we will not use
Proposition~\ref{prop:kinfconcentration} as stated but a stronger result:
the fact that for all $x < \Kinf(\nu, \mu)$,
\[
\P\Bigg[ \sum_{i = 1}^n Z_i \leq n x \Bigg]
\leq \left\{
\begin{aligned}
\nonumber
& \exp (- n \gamma / 8) \leq \exp(- n /4) & \mbox{if } x \leq \Kinf(\nu, \mu) - \gamma / 2, \\
& \exp \Bigl(- n \big(\Kinf(\nu, \mu)  - x \big)^2 / (2 \gamma) \Bigr) & \mbox{if }
x > \Kinf(\nu, \mu) - \gamma / 2,
\end{aligned} \right.
\]
with the notation of Proposition~\ref{prop:kinfconcentration}.
This is indeed what we proved above;
Proposition~\ref{prop:kinfconcentration} then followed from the inequality (also established above)
\[
\P\bigl[\Kinf(\hat{\nu}_n, \mu) \leq x \bigr]
\leq
\P\Bigg[ \sum_{i = 1}^n Z_i \leq n x \Bigg]\,.
\]
\end{remark}

\subsubsection{Proof of Lemma~\ref{lemma:genericconcentration}}

This lemma is a direct application of the Cr{\'a}mer--Chernov method.
We introduce the log-moment generating function $\Lambda$ of $Z_1$:
\[
\Lambda : x \longmapsto \log \E\big[ \e^{x Z_1}\big]\,.
\]

\begin{lemma}
\label{lm:Lambda}
Under the same assumptions $Z_1 \leq a$ and $\E\big[\e^{-Z_1}\big] \leq b$
as in Lemma~\ref{lemma:genericconcentration},
the log-moment generating function $\Lambda$
is well-defined at least on the interval $[-1,1]$ and
twice differentiable at least on $(-1,1)$, with
$\Lambda'(0) = \E[Z_1]$ and
$\Lambda''(x) \leq \gamma$ for $x \in [-1/2, \, 0]$,
where $\gamma = \sqrt{\e^a} \big( 16\,\e^{-2}b + a^2 \big)$
denotes the same constant as in Lemma~\ref{lemma:genericconcentration}.
\end{lemma}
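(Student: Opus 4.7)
\bigskip

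\textbf{Proof proposal (plan).} The plan is to establish the four claims in order, using only the two assumptions $Z_1 \leq a$ a.s.\ and $\E[\e^{-Z_1}] \leq b$, and standard facts about log-moment generating functions.

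For well-definedness on $[-1,1]$, I would split the argument according to the sign of $x$. For $x \in [0,1]$, the bound $Z_1 \leq a$ gives $\e^{xZ_1} \leq \e^{xa} \leq \e^a$, so $\E[\e^{xZ_1}]$ is finite. For $x \in [-1,0)$, writing $\e^{xZ_1} = (\e^{-Z_1})^{|x|}$ and applying Jensen's inequality to the concave map $y \mapsto y^{|x|}$ yields $\E[\e^{xZ_1}] \leq b^{|x|} \leq \max(1,b)$. Once $\Lambda$ is finite on $[-1,1]$, a standard domination argument (dominate $Z_1^k \, \e^{xZ_1}$ for $k=1,2$ by an integrable envelope obtained from a slightly larger $x'$ in $(-1,1)$) shows that $x \mapsto \E[\e^{xZ_1}]$ is $C^2$ on $(-1,1)$ with derivatives obtained by differentiating under the expectation; composing with $\log$ gives twice differentiability of $\Lambda$ on $(-1,1)$. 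The identity $\Lambda'(0) = \E[Z_1]$ then follows provided $Z_1$ is integrable, which is the case since $Z_1^+ \leq a$ and $(-Z_1)^+ \leq \e^{-Z_1}$ (using $u \leq \e^u$ for $u \geq 0$), so $\E[(-Z_1)^+] \leq b$.

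For the bound $\Lambda''(x) \leq \gamma$ on $[-1/2,\,0]$, I would exploit the classical identity
\[
\Lambda''(x) \;=\; \mathrm{Var}_{Q_x}(Z_1) \;\leq\; \E_{Q_x}[Z_1^2] \;=\; \frac{\E[Z_1^2 \, \e^{xZ_1}]}{\E[\e^{xZ_1}]}\,,
\]
where $Q_x$ is the tilted law with density $\e^{xZ_1}/\E[\e^{xZ_1}]$. The denominator is lower-bounded by noting that $x \leq 0$ and $Z_1 \leq a$ imply $xZ_1 \geq xa$, hence $\E[\e^{xZ_1}] \geq \e^{xa} \geq \e^{-a/2}$, i.e., $1/\E[\e^{xZ_1}] \leq \sqrt{\e^a}$. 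For the numerator, I split the expectation according to the sign of $Z_1$. On $\{Z_1 \geq 0\}$, the bound $Z_1 \leq a$ and $\e^{xZ_1} \leq 1$ give $Z_1^2 \, \e^{xZ_1} \leq a^2$. On $\{Z_1 < 0\}$, setting $y = -Z_1 > 0$, I need to compare $y^2 \, \e^{|x|y}$ with $\e^y$; the key elementary inequality is $y^2 \leq 16 \e^{-2} \, \e^{y/2}$ for all $y \geq 0$ (the maximum of $y^2 \e^{-y/2}$ is attained at $y=4$), which combined with $|x| \leq 1/2$ yields $Z_1^2 \, \e^{xZ_1} \leq 16 \e^{-2} \, \e^{-Z_1}$. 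Summing the two contributions gives $\E[Z_1^2 \, \e^{xZ_1}] \leq a^2 + 16 \e^{-2} b$, and multiplying by the $\sqrt{\e^a}$ bound on the reciprocal of the denominator produces exactly $\gamma$.

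The main technical obstacle is likely the numerator bound on $\{Z_1 < 0\}$: one has to exchange a factor $\e^{|x|y}$ (which is unbounded) for a factor $\e^y$ (which is integrable by hypothesis), and the cleanest way I see is the two-step trick $y^2 \, \e^{|x|y} = y^2 \, \e^{-(1-|x|)y} \cdot \e^y$ followed by the elementary extremum computation giving the explicit constant $16\e^{-2}$. Everything else is either routine calculus (differentiation under the integral) or a direct consequence of $Z_1 \leq a$.
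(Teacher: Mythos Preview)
Your proposal is correct and follows essentially the same approach as the paper: the core step is the bound $\Lambda''(x) \leq \E[Z_1^2\e^{xZ_1}]/\E[\e^{xZ_1}]$, with the denominator lower-bounded by $\e^{-a/2}$ via $xZ_1 \geq xa$, and the numerator split according to the sign of $Z_1$ using the elementary inequality $y^2 \leq 16\e^{-2}\e^{y/2}$ on the negative part. The only cosmetic differences are that the paper handles well-definedness via the single envelope $\e^{xZ_1} \leq \e^{Z_1}+\e^{-Z_1}$ rather than your Jensen argument, and establishes integrability of $Z_1$ via $\E[|Z_1|] \leq \ln \E[\e^{|Z_1|}]$ rather than your direct bound $(-Z_1)^+ \leq \e^{-Z_1}$; both routes are equally valid.
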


Based on this lemma (proved below),
we may resort to a Taylor expansion with a Lagrange remainder and get the bound:
\begin{equation*}
\forall \, x \in [-1/2, \, 0], \qquad \Lambda(x) \leq \Lambda(0) + x\,\Lambda'(0)
+ \frac{x^2}{2} \, \sup_{y \in [-1/2,\,0]} \Lambda''(y)
\leq x\,\E[Z_1] + \frac{\gamma}{2} x^2\,.
\end{equation*}
Therefore, by the Cr{\'a}mer--Chernov method, for all $x \in [-1/2, \, 0]$,
the probability of interest is bounded by
\begin{align}
\nonumber
\P \! \left[ \sum_{i = 1}^n Z_i \leq nu \right]
= \P \! \left[ \prod_{i = 1}^n \e^{x Z_i} \geq \e^{nux} \right]
& \leq \e^{- nux} \, \Bigl( \E\big[ \e^{x Z_1}\big] \Bigr)^n
= \exp \Bigl( - n \bigl( ux - \Lambda(x) \bigr) \Bigr) \\
\label{eq:mintocompute}
& \leq \exp \biggl( n \Bigl( x^2 \, \gamma/2 - x\, \bigl( u - \E[Z_1] \bigr) \Bigr) \biggr)\,.
\end{align}
That is,
\[
\P \! \left[ \sum_{i = 1}^n Z_i \leq nu \right]
\leq \exp \Bigl( n \min_{x \in [1/2,\,0]} P(x) \Bigr)\,,
\]
where we introduced the second-order polynomial function
\[
P(x) = x^2 \, \gamma/2 - x\, \bigl( u - \E[Z_1] \bigr)
= \frac{\gamma x}{2} \left( x - 2 \frac{u  - \E[Z_1] }{\gamma} \right)\,.
\]
The claimed bound is obtained by minimizing $P$ over $[-1/2,\,0]$
depending on whether $u> \E[Z_1] - \gamma / 2$
or $u \leq  \E[Z_1] - \gamma / 2$, which we do now.

We recall that by assumption, $u < \E[Z_1]$.
We note that $P$ is a second-order polynomial function with positive leading coefficient
and roots $0$ and $2\bigl( u  - \E[Z_1] \bigr)/\gamma < 0$. Its minimum
over the entire real line $(-\infty,+\infty)$ is thus achieved
at the midpoint $x^\star = \bigl( u  - \E[Z_1] \bigr)/\gamma < 0$ between these roots.
But $P$ is to be minimized over $[-1/2,\,0]$ only.
In the case where $u > \E[Z_1] - \gamma / 2$, the midpoint $x^\star$ belongs to the interval
of interest and
\[
\min_{[-1/2,0]} P
= \frac{\gamma x^\star}{2} \left( x^\star - 2 \frac{u  - \E[Z_1] }{\gamma} \right)
= - \frac{ \bigl( u  - \E[Z_1] \bigr)^2 }{2\gamma}\,.
\]
Otherwise, $u - \E[Z_1] \leq - \gamma / 2$ and the midpoint $x^\star$ is to the left of $-1/2$.
Therefore, $P$ is increasing on $[-1/2,0]$, so that its minimum on this interval is achieved
at $-1/2$, that is,
\[
\min_{[-1/2,0]} P = P(-1/2)
= \frac{\gamma}{8} + \frac{1}{2} \bigl( u - \E[Z_1] \bigr) \leq \frac{\gamma}{8} - \frac{\gamma}{4} = - \frac{\gamma}{8}\,.
\]
This concludes the proof of Lemma~\ref{lemma:genericconcentration}.
We end this section by proving Lemma~\ref{lm:Lambda}, which stated some properties of
the $\Lambda$ function. \medskip

\begin{proof}{\textbf{of Lemma~\ref{lm:Lambda}}}
We will make repeated uses of the fact that $\e^{-Z_1}$ is integrable (by the assumption on $b$),
and that so is $\e^{Z_1}$, as $\e^{Z_1}$ takes bounded values in $(0,\e^a]$. In particular,
$Z_1$ is integrable, as by Jensen's inequality,
\[
\E \bigl[ |Z_1| \bigr] \leq \ln \E \Bigl[ \e^{|Z_1|} \Bigr] \leq
\ln \Bigl( \E \bigl[ \e^{-Z_1} \bigr] + \E \bigl[ \e^{Z_1} \bigr] \Bigr) < +\infty\,.
\]

First, that $\Lambda$ is well-defined over $[-1,1]$ follows from the inequality
$\e^{x Z_1} \leq \e^{Z_1} + \e^{-Z_1}$, which is valid for all $x \in [-1,1]$
and whose right-hand side is integrable as already noted above.

Second, that $\psi : x \mapsto \E\big[ \e^{x Z_1}\big]$ is differentiable
at least on $(-1,1)$ follows from the
fact that $x \in (-1,1) \mapsto Z_1\,\e^{x Z_1}$ is locally dominated by
an integrable random variable; indeed, for $x \in (-1,1)$, using $y\leq \e^y$
for $y \geq 0$,
\begin{align*}
\bigl| Z_1\,\e^{x Z_1} \bigr|
& = Z_1\,\e^{x Z_1} \, \1{Z_1 \geq 0} +
\frac{1}{x+1} \, \bigl( - Z_1(x+1) \bigr) \,\e^{x Z_1} \, \1{Z_1 < 0} \\
& \leq a \, \e^{a}+ \frac{1}{x+1} \e^{-Z_1(x+1)} \e^{x Z_1}
= a \, \e^{a}+ \frac{1}{x+1} \e^{-Z_1}\,.
\end{align*}
Given that $y^2 \leq \e^y$ for $y \geq 0$, we show similarly that
$x \in (-1,1) \mapsto Z_1^2\,\e^{x Z_1}$ is also locally dominated by
an integrable random variable.

Thus, $\psi$ is twice differentiable at least on $(-1,1)$,
with first and second derivatives
\[
\psi'(x) = \E \bigl[ Z_1\,\e^{x Z_1} \bigr]
\qquad \mbox{and} \qquad
\psi''(x) = \E \bigl[ Z_1^2\,\e^{x Z_1} \bigr]\,.
\]
Therefore, so is $\Lambda = \ln \psi$, with derivatives
\begin{align*}
\Lambda'(x) & = \frac{\psi'(x)}{\psi(x)} = \frac{\E \bigl[ Z_1\,\e^{x Z_1} \bigr]}{\E \bigl[ \e^{x Z_1} \bigr]} \\
\mbox{and} \qquad
\Lambda''(x) & = \frac{\psi''(x) \, \psi(x) - \bigl( \psi'(x) \bigr)^2}{\psi(x)^2}
\leq \frac{\psi''(x)}{\psi(x)} = \frac{\E \bigl[ Z_1^2\,\e^{x Z_1} \bigr]}{\E \bigl[ \e^{x Z_1} \bigr]}\,.
\end{align*}
In particular, $\Lambda'(0) = \E[Z_1]$.

Finally, for the bound on $\Lambda''(x)$, we note first that
$Z_1 \leq a$ (with $a \geq 0$) and $x \in [-1/2,\,0]$
entail that $\e^{x Z_1} \geq \e^{x a} \geq 1/\sqrt{\e^a}$. Second,
$\E \bigl[ Z_1^2\,\e^{x Z_1} \bigr] \leq 16\,\e^{-2} b + a^2$
follows from replacing $z$ by $Z_1$ and taking expectations in the inequality
(proved below)
\begin{equation}
\label{eq:denominator_Z}
\forall\,x \in [-1/2,\,0], \ z \in (-\infty,a], \qquad
z^2\,\e^{xz} \leq 16\,\e^{-2}\e^{-z} + a^2\,.
\end{equation}
Collecting all elements together, we proved
\[
\Lambda''(x) \leq \frac{\E \bigl[ Z_1^2\,\e^{x Z_1} \bigr]}{\E \bigl[ \e^{x Z_1} \bigr]}
\leq \sqrt{\e^a} \big( 16\,\e^{-2}b + a^2 \big) = \gamma\,.
\]
To see why~\eqref{eq:denominator_Z} holds, note that
in the case $z\geq 0$, since $x \leq 0$, we have the chain of inequalities
$z^2 \, \e^{xz} \leq z^2 \leq a^2$.
In the case $z\leq 0$, we have (by function study) $z^2\leq 16\e^{-2 -z/2}$,
so that
$z^2 \, \e^{xz} \leq 16\e^{-2} \, \e^{(x-1/2)z}\leq 16\e^{-2} \e^{-z}$,
where we used $x \geq -1/2$ for the final inequality.
\end{proof}

\section{Proof of Theorem~\ref{th:distribdependentloglog} (with the $-\ln \ln T$ Term in the Regret Bound)}
\label{sec:proof-lnln}

We incorporate two refinements to the proof of Theorem~\ref{th:asymptoticanytime} in Section~\ref{sec:proofs:distdep}
to obtain Theorem~\ref{th:distribdependentloglog} with this improved $- \log \log T$ term.,
with occasional simplifications due to not having to deal with varying values of $t$ (e.g.,
the initial manipulations in Part~2 of the proof of Theorem~\ref{th:asymptoticanytime} are unnecessary).
The first refinement is that the left deviations of the index
are controlled with an additional cut on the value of $U_a(t)$ \emph{before} using the bound $U_a(t) \geq U_{a^\star}(t)$ that holds when $A_{t+1} = a$.
This improves the dependency on the parameter $\delta$ used in the proof; as a consequence, $\delta = T^{-1/8}$ will be set instead of $\delta = (\ln T)^{-1/3}$,
which will improve the order of magnitude of second-order terms. Second, to sharpen the bound on the quantity~\eqref{eq:todo-lnln-proof}, which contains the main logarithmic term, we use a trick introduced in the analysis of the IMED policy by~\citet[Theorem~5]{honda_non-asymptotic_2015}. Their idea was to deal with the deviations in a more careful way and relate the sum~\eqref{eq:todo-lnln-proof} to the behaviour of a biased random walk. Doing so, we obtain a bound of the form $\kappa \, W(cT)$, where $W$ is Lambert's function, instead of the bound of the form $\kappa \log(cT)$ stated in Theorem~\ref{th:asymptoticanytime}.

We recall that Lambert's function $W$ is defined, for $x > 0$,
as the unique solution $W(x)$ of the equation $w \, \e^w = x$, with unknown $w > 0$.
It is an increasing function satisfying
(see, e.g., \citealp[Corollary~2.4]{W})
\begin{equation}
\label{eq:asymptW}
\forall \, x > \e, \qquad
\ln x - \ln \ln x \leq W(x) \leq \ln x - \ln \ln x + \log\big( 1 + \e^{-1}\big)\,.
\end{equation}
In particular, $W(x) = \ln x - \ln \ln x + \O(1)$ as $x \to +\infty$.
\medskip

What we will exactly prove below is the following. We recall that we assume here $\mu^\star \in (0,1)$.
Given $T \geq K/\min\big\{1-\mu^\star, \, (\Delta_a/9)^{12}\big\}$,
the KL-UCB-Switch algorithm, tuned with the knowledge of $T$ and
the switch function $f(T, K) = \lfloor (T/K)^{1/5} \rfloor$, ensures that for all bandit problems $\unu$ over $[0,1]$,
for all sub-optimal arms $a$, and for all $\delta > 0$ satisfying
\[
\delta < \min \left\{ \mu^\star, \,\, \frac{\Delta_a}{2}, \,\, \frac{1-\mu^\star}{2} \, \Kinf(\nu_a,\mu^\star) \right\},
\]
we have
\begin{align}
\label{eq:detailedboundTh3}
\E[N_a(T)] \leq & \ 1\\
&+\frac{5 \e K}{\bigl(1 - \e^{-\Delta_a^2/2} \bigr)^3} +  T \e^{-\Delta_a^2 T / (2K)} \nonumber\\   
\nonumber
& + \frac{K/T}{1 - \e^{-\Delta_a^2 / 8} } \\   
\nonumber
& + \left\lceil \frac{8}{\Delta_a^2} \, \ln \biggl( \frac{T}{K} \biggr) \right\rceil \bigg(\frac{5 \e K / T}{(1 - \e^{-2\delta^2})^3}
+ \e^{-2 \delta^2 T / K} \bigg) \\   
\nonumber
& + \frac{1}{\Kinf (\nu_a, \mu^\star)  - \delta/(1-\mu^\star)} \Biggl( W \biggl( \frac{\ln\bigl(1/(1-\mu^\star)\bigr)}{K}\,T\biggr) + \ln\bigl(2/(1-\mu^\star)\bigr) \Biggr) \\   
\nonumber
& ~ \hspace{5cm} + 5 + \frac{1}{1 - \e^{-\Kinf(\nu, \mu^\star)^2 / (8 \gamma_\star)}} \\   
\nonumber
& + \frac{1}{1 - \e^{-\Delta_a^2 / 8} } \,.  
\end{align}
We write the bound in this way to match the decomposition of $\E[N_a(T)]$ appearing in the proof (see page~\pageref{page:S1-S5}).
For a choice $\delta \to 0$ as $T \to +\infty$,
the previous bound is of the form
\[
\E[N_a(T)] \leq
\frac{W\bigl(c_{\mu^\star} T \bigr)}{\Kinf (\nu_a, \mu^\star) - \delta/(1-\mu^\star)}
+ \O_T \biggl( \frac{\ln T}{\delta^6 T} \biggr)
+ \O_T \bigl( (\ln T) \, \e^{-2 \delta^2 T / K} \bigr)
+ \O_T(1)\,,
\]
where $c_{\mu^\star} = \ln\bigl(1/(1-\mu^\star)\bigr)/K$.
Based on the inequalities~\eqref{eq:asymptW} and on the first-order approximation $1/(1-\varepsilon) = 1+\varepsilon+\O(\varepsilon)$
as $\varepsilon \to 0$, we get
\[
\E[N_a(T)] \leq
\frac{\ln T - \ln \ln T}{\Kinf (\nu_a, \mu^\star)} \bigl( 1 + \O_T(\delta) \bigr)
+ \O_T \biggl( \frac{\ln T}{\delta^6 T} \biggr)
+ \O_T \bigl( (\ln T) \, \e^{-2 \delta^2 T / K} \bigr)
+ \O_T(1)\,.
\]
The choice $\delta = T^{-1/8}$ leads to the bound stated in Theorem~\ref{th:distribdependentloglog}, namely,
\[
\E[N_a(T)] \leq \frac{\ln T - \ln \ln T}{\Kinf(\nu_a, \mu^\star)} + \O_T(1)\,.
\]
\medskip

\begin{proof}{\textbf{structure of the closed-form bound~\eqref{eq:detailedboundTh3}}}
As in the proof of Theorem~\ref{th:asymptoticanytime},
given $\delta > 0$ sufficiently small, we decompose $\E\big[N_a(T)\big]$. However, this time we refine the decomposition quite a bit. Instead of simply distinguishing whether $U_a(t)$ is greater or smaller than $\mu^\star - \delta$, we add a cutting point at $(\mu^\star+\mu_a)/2$.
In addition, we set a threshold $n_0 \geq 1$ (to be determined by the analysis) and distinguish whether $N_a(t) \geq n_0$ or $N_a(t) \leq n_0-1$
when $U_{a}(t) < \mu^\star - \delta$, while we keep the integer threshold $f(T,K)$ in the case $U_{a}(t) \geq \mu^\star - \delta$.
More precisely,
\begin{align*}
\big\{ U_{a}(t) < \mu^\star - \delta \big\} \cup \big\{ U_{a}(t) & \geq \mu^\star - \delta \} \\
=  \quad
&\phantom{\cup} \ \, \big\{ U_{a}(t) < \mu^\star - \delta \pand N_a(t) \geq n_0\big\} \\
& \cup \big\{ U_{a}(t) < \mu^\star - \delta \pand N_a(t) \leq n_0 -1 \big\} \\
& \cup \big\{ U_{a}(t) \geq \mu^\star - \delta \pand N_a(t) \leq f(T, K) \} \\
&	 \cup \big\{ U_{a}(t) \geq \mu^\star - \delta \pand N_a(t) \geq f(T, K)+1 \} \\[.06cm]
\subseteq \quad &\phantom{\cup} \ \, \big\{ U_{a}(t) < (\mu^\star + \mu_a)/2 \pand N_a(t) \geq n_0 \big\} \\
& \cup \big\{ (\mu^\star +\mu_a)/2 \leq U_{a}(t) < \mu^\star - \delta \pand N_a(t) \geq n_0 \big\} \\
& \cup \big\{ U_{a}(t) < \mu^\star - \delta \pand N_a(t) \leq n_0 -1 \big\} \\
& \cup \big\{ U^{\KLind}_{a}(t) \geq \mu^\star - \delta \pand N_a(t) \leq f(T, K) \big\} \\
&	 \cup \big\{ U^{\moss}_{a}(t) \geq \mu^\star - \delta \pand N_a(t) \geq f(T, K) +1 \big\}\,,
\end{align*}
where, to get the inclusion, we further cut the first event into two events
and we used the definition of the index $U_{a}(t)$ to replace it by
$U^{\KLind}_{a}(t)$ or $U^{\moss}_{a}(t)$ in the last two events.

Hence, by intersecting this partition of the space with the event $\{ A_{t+1} = a \}$
and by slightly simplifying the first and second events of the partition:
\begin{align*}
\{ A_{t+1} = a \}
\ \subseteq \ \
& \phantom{\cup} \ \, \big\{ U_{a}(t) < (\mu^\star+\mu_a)/2 \pand A_{t+1}=a \big\} \\
& \cup
\big\{ U_{a}(t) \geq (\mu^\star+\mu_a)/2 \pand A_{t+1}=a \pand N_a(t) \geq n_0 \big\} \\
& \cup
\big\{ U_{a}(t) < \mu^\star - \delta  \pand A_{t+1}=a \pand N_a(t) \leq n_0-1 \big\} \\
&  \cup
\big\{ U^{\KLind}_a(t) \geq \mu^\star - \delta \pand A_{t+1} = a \pand N_a(t) \leq f(T,K)  \big\} \\
& \cup
\big\{ U^{\moss}_a(t) \geq \mu^\star - \delta \pand A_{t+1} = a \pand N_a(t) \geq f(T,K)+1 \big\}\,.
\end{align*}

Only now do we inject the bound $U_{a^\star}(t) \leq U_{a}(t)$, valid when $A_{t+1} = a$, as well
as a union bound, to obtain our working decomposition of $\E[N_a(t)]$:
\begin{align}
\label{page:S1-S5}
\nonumber
\E\big[N_a(T)\big] & \leq \ 1 \\
& \phantom{\leq 1}
+ \sum_{t = K}^{T-1}  \P\big[ U_{a^\star}(t) < (\mu^\star+\mu_a)/2  \big] \tag{$S_1$} \\
 & \phantom{\leq 1 }+ \sum_{t = K}^{T-1}  \P \big[  U_{a}(t) \geq (\mu^\star+\mu_a)/2 \pand A_{t+1}=a \pand N_a(t) \geq n_0\big] \tag{$S_2$} \\
 & \phantom{\leq 1 }+  \sum_{t = K}^{T-1} \P \big[  U_{a^\star}(t) < \mu^\star - \delta  \pand A_{t+1}=a \pand N_a(t) \leq n_0-1\big] \tag{$S_3$} \\
 & \phantom{\leq 1 }+  \sum_{t = K}^{T-1} \P \big[  U^{\KLind}_a(t) \geq \mu^\star - \delta \pand A_{t+1} = a \pand N_a(t) \leq f(T,K)\big] \tag{$S_4$} \\
 & \phantom{\leq 1 }+ \sum_{t = K}^{T-1}  \P \big[  U^{\moss}_a(t) \geq \mu^\star - \delta \pand A_{t+1} = a \pand N_a(t) \geq f(T,K)+1\big] \tag{$S_5$}\,.
\end{align}
We call $S_1,S_2,S_3,S_4,S_5$ the five sums appearing in the right-hand side of the display above,
and will now bound them separately.
Most of the efforts will be dedicated to bounding the sum $S_4$.
\end{proof}

\subsection{Bound on $S_5$}
The sum $S_5$ involves the indexes
$U^{\moss}_a(t)$ only under the condition $N_a(t) \geq f(T,K)+1$, in which case
$N_a(t) \geq (T/K)^{1/5}$ and
\[
U^{\moss}_a(t) \defeq \hat{\mu}_a(t) + \sqrt{ \frac{1}{2N_a(t)} \, \ln_+ \bigg( \frac{T}{K N_a(t)} \bigg)}
\leq \hat{\mu}_a(t) + \sqrt{ \frac{1}{2 \, (T/K)^{1/5}} \, \ln_+ \big( (T/K)^{4/5} \big)}\,.
\]
We mimic the proof scheme of Part~3 of the proof
of Theorem~\ref{th:asymptoticanytime} (see around page~\pageref{page:19}).
Since $T \geq K/(1-\mu^\star)$ by assumption, it holds $T/K\geq 1$. Using that $x \mapsto x^{1/24}/\log(x)$
takes it minimum over $[1,+\infty)$ at $\e^{-24}$, with value larger than $0.113$, and since we assumed $T \geq K (9/\Delta_a)^{12}$, we obtain
\begin{multline*}
\sqrt{ \frac{1}{2 \, (T/K)^{1/5}} \, \ln_+ \big( (T/K)^{4/5} \big)}
\leq \sqrt{ \frac{1}{2 \times 0.113 \, (T/K)^{1/5}} \, (T/K)^{1/30} } \\
= \frac{1}{\sqrt{0.226}} \left(\frac{K}{T}\right)^{1/12} \leq \frac{\Delta_a}{4}\,.
\end{multline*}
Under the same condition $\delta < \Delta_a/4$ as therein,
we get, by a careful application of optional skipping (Section~\ref{sec:optional}, Example~\ref{ex:2:optsk}) using
that the events $\bigl\{ A_{t+1} = a \pand N_a(t) = n \bigr\}$ are disjoint as $t$ varies,
and by Hoeffding's inequality,
\begin{multline*}
S_5 = \sum_{t=K}^{T-1} \P\big[U^{\moss}_a(t) \geq \mu^\star - \delta \pand A_{t+1} = a \pand N_a(t) \geq f(T,K)+1 \big] \\
\leq \sum_{n = f(T,K) + 1}^{T-1} \P \bigl[ \hat{\mu}_{a,n} \geq \mu_a + \Delta_a/2 \bigr]
\leq \sum_{n \geq f(T,K) + 1} \e^{-n \Delta_a^2/2} \leq \frac{1}{1 - \e^{- \Delta_a^2/2}}\,.
\end{multline*}

\subsection{Bound on $S_2$}
Let
\begin{equation}
\label{eq:n0}
n_0 = \Biggl\lceil \frac{8}{\Delta_a^2} \, \ln \biggl( \frac{T}{K} \biggr) \Biggr\rceil\,.
\end{equation}
By Pinsker's inequality~\eqref{eq:Pinsker-U}, by definition of the MOSS index,
and by our choice of $n_0$, we have, when $N_a(t) \geq n_0$,
\begin{equation}
\label{eq:manip-deterministes}
U_{a}(t) \leq U^{\moss}_{a}(t) =
\hat{\mu}_a(t) + \sqrt{ \frac{1}{2N_a(t)} \ln_{+} \! \bigg( \frac{T}{KN_a(t)} \bigg)}
\leq \hat{\mu}_a(t) +
\underbrace{\sqrt{ \frac{1}{2 n_0} \ln_{+} \! \bigg( \frac{T}{K n_0} \bigg)}}_{\leq \Delta_a/4}\,.
\end{equation}
In particular, we get the inclusion
\begin{align*}
\big\{ U_{a}(t) \geq (\mu^\star+\mu_a)/2  \pand N_a(t) \geq n_0\big\}
&=
\big\{ U_{a}(t) \geq \mu_a +\Delta_a/2 \pand N_a(t) \geq n_0\big\}\\
&\subseteq
\bigl\{ \hat{\mu}_a(t) \geq \mu_a + \Delta_a/4 \pand N_a(t) \geq n_0\bigr\}\,.
\end{align*}
Thus
\begin{equation*}
S_2 \leq \sum_{t=K}^{T-1} \P\bigg[ \hat{\mu}_a(t) \geq \mu_a + \frac{\Delta_a}{4}
\pand A_{t+1}=a \pand N_a(t) \geq n_0 \bigg]\,.
\end{equation*}
We now proceed again similarly to what we already did on page~\pageref{page:19}.
By a careful application of optional skipping (see Section~\ref{sec:optional}, Example~\ref{ex:2:optsk}), using the fact that,
as $t$ varies, all the events $\{A_{t+1}=a \pand N_a(t) = n\} $ are disjoint, the sum above may be bounded by
\[
\sum_{t=K}^{T-1} \P\bigg[ \hat{\mu}_a(t) \geq \mu_a + \frac{\Delta_a}{4}
\pand A_{t+1}=a \pand N_a(t) \geq n_0 \bigg]
\leq
\sum_{n \geq n_0} \P\bigg[ \hat{\mu}_{a,n} \geq \mu_a + \frac{\Delta_a}{4} \bigg]\,.
\]
By a final application of Hoeffding's inequality (Proposition~\ref{prop:hoeffding},
actually not using the maximal form):
\begin{equation*}
S_2 \leq
\sum_{n = n_0}^{T} \P\bigg[ \hat{\mu}_{a,n} \geq \mu_a + \frac{\Delta_a}{4} \bigg]
\leq \sum_{n = n_0}^{T} \e^{-n \Delta_a^2 / 8}
= \frac{\e^{-n_0 \Delta_a^2 / 8}}{1 - \e^{-\Delta_a^2 / 8} }
\leq \frac{K/T}{1 - \e^{-\Delta_a^2 / 8} }\,,
\end{equation*}
where we substituted the value~\eqref{eq:n0} of $n_0$.

\subsection{Bounds on $S_1$ and $S_3$}
For $u \in (0,1)$, we introduce the event
\[
\cE_\star(u) = \Bigl\{ \exists \, \tau \in \{ K,\ldots,T-1 \} : \ U_{a^\star}(\tau) < u \Bigr\} \, ,
\]
allowing us to upper bound the probabilities in terms of events that do not depend on $t$:
\begin{equation*}
	 \{U_{a^\star}(t) < (\mu^\star+\mu_a)/2 \}   \subseteq \cE_\star \bigl( (\mu^\star+\mu_a)/2 \bigr)
\qquad
\text{and}
\qquad
	\{U_{a^\star}(t) < \mu^\star- \delta \}  \subseteq \cE_\star \bigl( \mu^\star - \delta \bigr)\,.
\end{equation*}
Summing directly the first inclusion above yields an upper bound on $S_1$:
\[
S_1 \leq  T\,\, \P\Bigl(\cE_\star \bigl( (\mu^\star+\mu_a)/2 \bigr)\Bigr)\,.
\]
Using the deterministic control
\begin{equation*}
\sum_{t=K}^{T-1} \1{A_{t+1}=a \pand N_a(t) \leq n_0-1} \leq n_0
\end{equation*}
together with the second inclusion above,
we get (and this is where it is handy that the $\cE_\star$ do not depend on a particular $t$)
\begin{align*}
	\sum_{t=K}^{T-1} \1{ U_{a^\star}(t) < \mu^\star - \delta \pand A_{t+1}=a \pand N_a(t) \leq n_0-1}
	& \leq  \ind{\cE_\star ( \mu^\star - \delta ) } \, \sum_{t=K}^{T-1} \1{ A_{t+1}=a \pand N_a(t) \leq n_0-1} \\
	& \leq  n_0 \, \ind{\cE_\star ( \mu^\star - \delta ) } \,,
\end{align*}
which in turn yields
\[
S_3 \leq n_0 \,\, \P\bigl(\cE_\star(\mu^\star - \delta) \bigr) \, .
\]
We recall that $n_0$ was defined in~\eqref{eq:n0}.
The lemma right below, respectively with $x = \Delta_a/2$ and $x = \delta$,
yields the final bounds
\begin{equation*}
	S_1 \leq \frac{5 \e K}{\bigl(1 - \e^{-\Delta_a^2/2} \bigr)^3} +  T \e^{-\Delta_a^2 T / (2K)}
\end{equation*}
and
\begin{equation*}
	S_3 \leq  \left\lceil \frac{8}{\Delta_a^2} \,
\ln \biggl( \frac{T}{K} \biggr) \right\rceil \bigg(\frac{5 \e K / T}{(1 - \e^{-2\delta^2})^3} +  \e^{-2 \delta^2 T / K} \bigg)\,.
\end{equation*}

\begin{lemma}
\label{lm:20}
	For all $x \in (0,\mu^\star)$,
	\begin{multline*}
	\P \Big(\cE_\star\big( \mu^\star - x\bigr) \Big) =
	\P\Bigl[  \exists \, \tau \in \{ K,\ldots,T-1 \} : \,\, U_{a^\star}(\tau) < \mu^\star - x \Bigr] \\
	\leq \frac{\e K}{T} \frac{5}{(1 - \e^{-2x^2})^3} +  \e^{-2 x^2 T / K} \, .
	\end{multline*}
\end{lemma}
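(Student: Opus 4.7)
The approach is to mirror the per-step left-deviation bound on $U^{\KLind}_{a^\star}(t)$ obtained in Section~\ref{sec:proof:th:distribdependent}, but to use maximal versions of the tail inequalities so that the existential over $\tau \in \{K, \ldots, T-1\}$ does not cost an extra factor of $T$. The key simplifying observation is Pinsker's inequality~\eqref{eq:Pinsker-U}: since $U^{\KLind}_{a^\star}(\tau) \leq U_{a^\star}(\tau)$, the event $\cE_\star(\mu^\star - x)$ is contained in $\bigl\{\exists\,\tau \in \{K, \ldots, T-1\} : U^{\KLind}_{a^\star}(\tau) < \mu^\star - x\bigr\}$, so it suffices to bound the probability of the latter. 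By the left-continuity of $\Kinf$ (Corollary~\ref{cor:left-cont}) and the definition of $U^{\KLind}$ as a supremum, the condition $U^{\KLind}_{a^\star}(\tau) < \mu^\star - x$ is equivalent to $\Kinf\bigl(\hat{\nu}_{a^\star}(\tau), \mu^\star - x\bigr) > \tfrac{1}{N_{a^\star}(\tau)} \ln_{+}\bigl(T/(K N_{a^\star}(\tau))\bigr)$, and optional skipping (Section~\ref{sec:optional}) then reindexes the resulting probability to
\[
\P\Bigl[\exists\, n \in \{1, \ldots, T-K\} : \Kinf\bigl(\hat{\nu}_{a^\star, n}, \mu^\star - x\bigr) > \tfrac{1}{n} \ln_{+}\bigl(T/(K n)\bigr)\Bigr].
\]

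Next we would split this existential at the threshold $N = \lfloor T/K \rfloor + 1$. For $n \geq N$ the logarithmic exploration vanishes, so the event reduces to $\hat{\mu}_{a^\star, n} \leq \mu^\star - x$; the maximal form of Hoeffding's inequality (Proposition~\ref{prop:hoeffding}, by symmetry) bounds the probability of this ever occurring by $\e^{-2 N x^2} \leq \e^{-2 x^2 T/K}$, and this delivers the second term of the claimed bound. For $1 \leq n \leq \lfloor T/K \rfloor$ we would apply a union bound and, for each $n$, successively invoke Corollary~\ref{cor:inclusionevents} to shift the second argument of $\Kinf$ from $\mu^\star - x$ to $\mu^\star = \Ed(\nu_{a^\star})$, and then Proposition~\ref{prop:kinfdev}, obtaining the term-by-term estimate $\e(2n+1)(K n / T)\,\e^{-2 n x^2}$. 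Summing over $n \geq 1$ and inserting the series identities~\eqref{eq:sumseriesdiff:1}--\eqref{eq:sumseriesdiff:2} with $\theta = 2 x^2$ will produce the first term $\tfrac{5 \e K}{T(1 - \e^{-2 x^2})^3}$.

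The only point requiring genuine care is where the factor of $T$ is saved in the $\e^{-2 x^2 T/K}$ contribution: a crude union bound over $\tau$, as in the $t$-by-$t$ analysis of Section~\ref{sec:proof:th:distribdependent}, would multiply this contribution by $T$, which is exactly what we cannot afford if we want to recover the $-\ln\ln T$ second-order term of Theorem~\ref{th:distribdependentloglog}. It is precisely the maximal form of Hoeffding's inequality, applied after reindexing via optional skipping, that rescues this factor; the remaining pieces are direct re-uses of ingredients already developed in Section~\ref{sec:Kinf} and Section~\ref{sec:proof:th:distribdependent}, so no further technical difficulties are expected.
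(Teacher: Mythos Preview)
Your proposal is correct and follows essentially the same approach as the paper. The only cosmetic difference is that the paper first splits on whether $N_{a^\star}(\tau) < T/K$ or $N_{a^\star}(\tau) \geq T/K$ and, in the latter case, identifies $U_{a^\star}(\tau)$ exactly with $\hat{\mu}_{a^\star}(\tau)$ via the MOSS regime, whereas you apply Pinsker's inequality~\eqref{eq:Pinsker-U} globally and perform the split on $n$ only after optional skipping; both routes lead to the same two terms via the same deviation tools (Corollary~\ref{cor:inclusionevents}, Proposition~\ref{prop:kinfdev}, and the maximal Hoeffding inequality of Proposition~\ref{prop:hoeffding}).
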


\begin{proof}
We first lower bound $U_{a^\star}(\tau)$
depending on whether $N_{a^\star}(\tau) < T/K$ or
$N_{a^\star}(\tau) \geq T/K$. In the first case,
we will simply apply Pinsker's inequality~\eqref{eq:Pinsker-U}
to get $U_{a^\star}^{\KLind}(\tau) \leq U_{a^\star}(\tau)$.
In the second case, since $T \geq K/(1-\mu^\star) \geq K$,
we have, by definition of $f(T,K)$, that
$T/K \geq (T/K)^{1/5} \geq f(T,K)$ and thus,
by definition of the $U_{a^\star}(\tau)$ index,
$U_{a^\star}(\tau) = U_{a^\star}^{\moss}(\tau)$.
Now, the $\ln_{+}$ in the definition of $U^{\moss}_{a^\star}(\tau)$ vanishes
when $N_{a^\star}(\tau) \geq T/K$, so all in all
we have $U_{a^\star}(\tau) = \hat{\mu}_{a^\star}(\tau)$ when $N_{a^\star}(\tau) \geq T/K$.
Therefore, by a careful application of optional skipping (see Section~\ref{sec:optional}, end of Example~\ref{ex:1:optsk}),
\begin{align*}
\P \Big(\cE_\star\big( \mu^\star - x\bigr) \Big) = \,\, &
	\P\Bigl[  \exists \, \tau \in \{ K,\ldots,T-1 \} : \,\, U_{a^\star}(\tau) < \mu^\star - x \Bigr] \\
= \,\, &
\P\Bigl[  \exists \, \tau \in \{ K,\ldots,T-1 \} :  \,\, U_{a^\star}(\tau) < \mu^\star - x \pand N_{a^\star}(\tau) < T/K \Bigr] \\
& + \P\Bigl[  \exists \, \tau \in \{ K,\ldots,T-1 \} :  \,\, U_{a^\star}(\tau) < \mu^\star - x
\pand N_{a^\star}(\tau) \geq T/K \Bigr] \\
\leq \,\, &
\P\Bigl[  \exists \, \tau \in \{ K,\ldots,T-1 \} :  \,\, U^{\KLind}_{a^\star}(\tau) < \mu^\star - x \pand N_{a^\star}(\tau) < T/K \Bigr] \\
& + \P\Bigl[  \exists \, \tau \in \{ K,\ldots,T-1 \} :  \,\, \hat{\mu}_{a^\star}(\tau) < \mu^\star - x
\pand N_{a^\star}(\tau) \geq T/K \Bigr] \\
\leq \,\, &
\P \Bigl[ \exists \, m \in \bigr\{ 1,\ldots, \lfloor T/K \rfloor \bigr\} :  \,\, U^{\KLind}_{a^\star,m} < \mu^\star -x \Bigr] \\
& + \P\Bigl[ \exists \, m \in \bigl\{ \lceil T/K \rceil, \ldots, T \bigr\} :  \,\, \hat{\mu}_{a^\star,m} < \mu^\star -x \Bigr]\,.
\end{align*}
As in the proof of Corollary~\ref{cor:kinfdev},
by the definition of the $U^{\KLind}_{a^\star,m}$ index as some supremum
(together with the left-continuity of $\Kinf$ deriving from
Lemma~\ref{lem:regularity_kinf}), we finally get
\begin{align*}
\P \Big(\cE_\star\big( \mu^\star - x\bigr) \Big) \leq \,\, &
\P\Biggl[ \exists \, m \in \bigr\{ 1,\ldots, \lfloor T/K \rfloor \bigr\} :  \,\,
\Kinf \bigl( \hat{\nu}_{a^\star, m}, \mu^\star - x \bigr) > \frac{1}{m} \log \bigg( \frac{T}{Km}\bigg) \Biggr] \\
& + \P\Bigl[ \exists \, m \in \bigl\{ \lceil T/K \rceil, \ldots, T \bigr\} :  \,\, \hat{\mu}_{a^\star,m} < \mu^\star - x \Bigr]\,.
\end{align*}
The proof continues by bounding each probability separately.
First, again as in the proof of Corollary~\ref{cor:kinfdev},
we apply Corollary~\ref{cor:inclusionevents} (for the first inequality below)
and the deviation inequality of Proposition~\ref{prop:kinfdev} (for the second inequality below),
to see that for all $x \in (0,\mu^\star)$ and $\varepsilon > 0$,
\[
\P \Big[ \Kinf\big(\hat{\nu}_{a^\star, m}, \mu^\star - x \big) > \varepsilon \Big]
\leq \P \Big[ \Kinf\big(\hat{\nu}_{a^\star, m}, \mu^\star \big) > \varepsilon + 2x^2 \Big]
\leq \e (2n+1) \, \e^{-n (\varepsilon + 2x^2)}\,.
\]
Therefore, by a union bound, the above equation, and the calculations on geometric sums~\eqref{eq:sumseriesdiff:1}
and~\eqref{eq:sumseriesdiff:2},
\begin{multline*}
\P\Biggl[ \exists \, m \in \bigr\{ 1,\ldots, \big\lfloor T/K \big\rfloor \bigr\} : \,\,
\Kinf \bigl( \hat{\nu}_{a^\star, m}, \mu^\star -x \bigr) > \frac{1}{m} \log \bigg( \frac{T}{Km}\bigg) \Biggr] \\
\leq \sum_{m =1}^{\lfloor T/K  \rfloor} \e(2m+1) \, \frac{Km}{T} \e^{-2mx^2}
\leq \frac{\e K}{T} \sum_{m  =1}^{+\infty} m(2m+1) \, \e^{-2mx^2}
\leq \frac{\e K}{T} \frac{5}{(1 - \e^{-2x^2})^3}\,.
\end{multline*}
Second, by Hoeffding's maximal inequality (Proposition~\ref{prop:hoeffding}),
\begin{multline*}
\P\Bigl[ \exists \, m \in \bigl\{ \lceil T/K \rceil, \ldots, T \bigr\} : \,\, \hat{\mu}_{a^\star,m} < \mu^\star - x \Bigr] \\
= \P\biggl[ \max_{\lceil T/K \rceil \leq m \leq T} \Bigl( \bigl(1 - \hat{\mu}_{a^\star,m}\bigr) - (1-\mu^\star) \Bigr) > x \biggr]
\leq \e^{-2 \, \lceil T/K \rceil \, x^2}
\leq \e^{-2 x^2 T/K}\,.
\end{multline*}
The proof is concluded by collecting the last two bounds.
\end{proof}

\subsection{Bound on $S_4$}

We begin with a now standard use of optional skipping (see Section~\ref{sec:optional}, Example~\ref{ex:2:optsk}),
relying on the fact that the events $\{A_{t+1}=a \pand N_a(t) = n\} $ are disjoint as $t$ varies:
\begin{equation*}
S_4 = \sum_{t=K}^{T-1} \P\big[U^{\KLind}_a(t) \geq \mu^\star - \delta \pand A_{t+1} = a \pand N_a(t) \leq f(T,K) \big]
\leq  \sum_{n = 1}^{f(T, K)} \P\big[U^{\mathrm{\KLind}}_{a, n} \geq \mu^\star  - \delta \big]\,.
\end{equation*}
We show in this section that
\begin{multline}
\label{eq:todo-lnln-proof}
\sum_{n = 1}^{f(T,K)} \P\big[U^{\KLind}_{a, n} \geq \mu^\star  - \delta\big]
\leq \frac{1}{\Kinf (\nu_a, \mu^\star)  - \displaystyle{\frac{\delta}{1-\mu^\star}}} \Biggl( W \biggl( \frac{\ln\bigl(1/(1-\mu^\star)\bigr)}{K}\,T\biggr) + \ln\bigl(2/(1-\mu^\star)\bigr) \Biggr) \\ + 5 + \frac{1}{1 - \e^{-\Kinf(\nu, \mu^\star)^2 / (8 \gamma_\star)}}\,,
\end{multline}
where, as in the statement of Proposition~\ref{prop:kinfconcentration},
\[
\gamma_\star = \frac{1}{\sqrt{1 - \mu^\star}}  \Biggl( 16 \e^{-2} + \ln^2 \! \bigg(\frac{1}{1 - \mu^\star} \bigg) \Biggr)\,.
\]
To do so, we follow exactly the same method as in the analysis of the IMED policy of~\citet[Theorem~5]{honda_non-asymptotic_2015}:
their idea was to deal with the deviations in a more careful way and relate the sum~\eqref{eq:todo-lnln-proof}
to the behaviour of a biased random walk.

We start by rewriting the events of interest as
\[
\big\{  U_{a,n}^\KLind \geq \mu^\star - \delta  \big\}
= \Biggl\{ \Kinf\big(\hat{\nu}_{a,n}, \mu^\star - \delta\big) \leq \frac{1}{n} \ln\!\bigg( \frac{T}{Kn} \bigg)  \Biggr\}\,,
\]
where, as in one step of the proof of Lemma~\ref{lm:20},
we used the definition of $U_{a,n}^\KLind$ as well as the left-continuity of
$\Kinf$.
We then follow the same steps as in the proof of Proposition~\ref{prop:kinfconcentration} (see Section~\ref{sec:prop:kinfconcentration})
and link the deviations in $\Kinf$ divergence to the ones of a random walk.
The variational formula (Lemma~\ref{lm:variationnal_formula_kinf}) for $\Kinf$ entails the existence of $\lambda_{a,\delta} \in [0,1]$ such that
\begin{equation*}
\Kinf( \nu_a, \mu^\star - \delta) = \E \Biggl[ \ln \biggr(  1 - \lambda_{a,\delta} \frac{X_a - (\mu^\star - \delta)}{1 - (\mu^\star - \delta)} \biggr) \Biggr]\,, \qquad \mbox{where} \qquad X_a \sim \nu_a\,.
\end{equation*}
Note that $\Kinf( \nu_a, \mu^\star - \delta) > 0$ by~\eqref{eq:Pinsker-binf-Kinf} given that we imposed
$\delta \leq \Delta_a/2$.
We consider i.i.d.\ copies $X_{a,1},\ldots,X_{a,n}$ of $X$ and form
the random variables
\begin{equation*}
Z_{a, i} = \ln \! \left(1 - \lambda_{a,\delta} \frac{X_{a, i} - (\mu^\star - \delta)}{1 - (\mu^\star - \delta)} \right).
\end{equation*}
By the variational formula (Lemma~\ref{lm:variationnal_formula_kinf}) again,
applied this time to $\Kinf(\hat{\nu}_{a,n}, \mu^\star - \delta)$, we see
\begin{equation*}
\Kinf\big( \hat{\nu}_{a, n}, \mu^\star - \delta\big) \geq  \frac{1}{n} \sum_{i = 1}^{n} Z_{a,i}\,,
\end{equation*}
which entails, for each $n \geq 1$,
\begin{equation}
\Biggl\{ \Kinf\big(\hat{\nu}_{a,n}, \mu^\star - \delta\big) \leq \frac{1}{n} \ln\!\bigg( \frac{T}{Kn} \bigg)  \Biggr\}
\subseteq  \Bigg\{\sum_{i= 1}^{n} Z_{a, i} \leq \ln\!\bigg(\frac{T}{Kn} \bigg) \Bigg\}\,.
\end{equation}
Collecting all previous bounds and inclusions,
we proved that the sum of interest~\eqref{eq:todo-lnln-proof} is bounded by
\begin{align*}
S_4
& \leq
\sum_{n = 1}^{f(T,K)} \P\big[U^{\KLind}_{a, n} \geq \mu^\star  - \delta\big]
= \sum_{n = 1}^{f(T, K)} \P \Biggl[ \Kinf\big(\hat{\nu}_{a,n}, \mu^\star - \delta\big) \leq \frac{1}{n} \ln\!\bigg( \frac{T}{Kn} \bigg)  \Biggr] \\
& \leq \sum_{n = 1}^{f(T, K)} \P \Bigg[\sum_{i= 1}^{n} Z_{a, i} \leq \ln\!\bigg(\frac{T}{Kn} \bigg) \Bigg]
= \E \Bigg[\sum_{n = 1}^{f(T, K)} \bigun{\sum_{i= 1}^{n} Z_{a, i} \leq \ln(T/(Kn))} \Bigg] \\
& \leq \E \Bigg[\sum_{n = 1}^{T} \bigun{\sum_{i= 1}^{n} Z_{a, i} \leq \ln(T/(Kn))} \Bigg]\,.
\end{align*}
The last upper bound may seem crude but will be good enough for our purpose.

We may reinterpret
\[
\E \Bigg[\sum_{n = 1}^{T} \bigun{\sum_{i= 1}^{n} Z_{a, i} \leq \ln(T/(Kn))} \Bigg]
\]
as the expected number of times a random walk with positive drift stays under a decreasing logarithmic barrier.
We exploit this interpretation to our advantage by decomposing this sum into the expected hitting time of the barrier
and a sum of deviation probabilities for the walk.
In what follows,
$\wedge$ denotes the minimum of two numbers. We define the first hitting time $\uptau_a$ of the barrier, if it exists, as
\[
\uptau_a  = \inf \Bigg\{ n \geq 1 : \ \sum_{i = 1}^{n} Z_{a, i} > \ln \bigg(  \frac{T}{Kn} \bigg) \Bigg\} \wedge T\,.
\]
The time $\uptau_a$ is bounded by $T$ and
is a stopping time with respect to the filtration generated by the family $(Z_{a,i})_{1 \leq i \leq n}$.
By distinguishing according to whether or not the condition in the defining infimum of $\uptau_a$
is met for some $1 \leq n \leq T$, i.e., whether or not the barrier is hit for $1 \leq n \leq T$,
we get
\begin{equation}\label{eq:withtau}
S_4 \leq \E \Bigg[ \sum_{n = 1}^{T} \bigun{\sum_{i= 1}^{n} Z_{a, i} \leq \ln(T/(Kn))} \Bigg] \leq \E[\uptau_a] + \E \Bigg[ \sum_{n = \uptau_a + 1}^{T} \bigun{\sum_{i= 1}^{n} Z_{a, i} \leq \ln(T/(Kn))} \Bigg]\,,
\end{equation}
where the sum from $\uptau_a+1$ to $T$ is void thus null when $\uptau_a = T$ (this is the case, in particular, when the barrier is hit for no $n \leq T$).
We now state a lemma, in the spirit of \citet[Lemma~18]{honda_non-asymptotic_2015}, and will prove it later at the end of this section.

\begin{lemma}\label{lem:hittingtimeexpectation}
Let $(Z_i)_{i \geq 1}$ be a sequence of i.i.d.\ variables with a positive expectation $\E[Z_1] > 0$ and
such that $Z_i \leq \alpha$ for some $\alpha > 0$. For an integer $T \geq 1$, consider the stopping time
\[
\uptau \defeq \inf \! \left\{ n \geq 1 : \ \sum_{i = 1}^{n} Z_i  > \ln\biggl(\frac{T}{Kn}\biggr)
\right\} \wedge T
\]
and denote by $W$ Lambert's function.
Then, for all $T \geq K \e^{\alpha}$,
\[
\E[ \uptau] \leq \frac{W(\alpha T / K) + \alpha + \ln 2}{\E[Z_1]}\,.
\]
\end{lemma}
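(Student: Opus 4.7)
The plan is to combine Wald's identity with an almost sure lower bound on $\uptau$ that exploits the one-sided bound $Z_i \leq \alpha$. Without the latter, Wald alone yields only $\E[Z_1]\E[\uptau] \leq \ln(T/K) + \alpha + \ln 2$ (after controlling the sum at time~$\uptau$), which misses the $\ln\ln$ improvement encoded in $W$. The trick is that the pointwise bound $\sum Z_i \leq n\alpha$ forces $\uptau$ to be deterministically not-too-small, and this cancels exactly against a $-\E[\ln\uptau]$ term through the defining identity of Lambert's~$W$.

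First, I would establish the almost-sure lower bound $\uptau > W(\alpha T/K)/\alpha$. On $\{\uptau < T\}$, the definition of $\uptau$ together with $\sum_{i=1}^{\uptau} Z_i \leq \uptau\alpha$ gives $\uptau\alpha > \ln(T/K) - \ln\uptau$, which exponentiates to $(\uptau\alpha)\e^{\uptau\alpha} > \alpha T/K$; by the monotonicity of $x\mapsto x\e^x$ on $[0,+\infty)$ this is $\uptau\alpha > W(\alpha T/K)$. On $\{\uptau = T\}$ the same inequality holds because $W(y)<y$ for $y>0$, so that $T > W(\alpha T/K)/\alpha$. Under the assumption $T \geq K\e^{\alpha}$ one in particular has $W(\alpha T/K) \geq W(\alpha\e^{\alpha}) = \alpha$, hence $\uptau \geq 2$ almost surely.

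Next, I would obtain a pointwise upper bound on the stopped sum. Since $\uptau \geq 2$, the minimality of $\uptau$ gives $\sum_{i=1}^{\uptau-1} Z_i \leq \ln\bigl(T/(K(\uptau-1))\bigr)$; adding $Z_\uptau \leq \alpha$ and using $\ln(\uptau/(\uptau-1)) \leq \ln 2$ for $\uptau\geq 2$ yields
\[
\sum_{i=1}^{\uptau} Z_i \;\leq\; \ln\!\biggl(\frac{T}{K\uptau}\biggr) + \alpha + \ln 2.
\]
Wald's identity then applies ($\uptau$ is bounded by $T$ and $Z_1$ is integrable), giving
\[
\E[Z_1]\,\E[\uptau] \;=\; \E\!\left[\sum_{i=1}^{\uptau} Z_i\right] \;\leq\; \ln(T/K) - \E[\ln\uptau] + \alpha + \ln 2.
\]

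Finally, I would combine both ingredients. The a.s.\ bound from the first step gives $\E[\ln\uptau] \geq \ln\bigl(W(\alpha T/K)/\alpha\bigr)$, and Lambert's identity $W(y) + \ln W(y) = \ln y$ taken at $y = \alpha T/K$ rewrites $\ln(T/K) - \ln\bigl(W(\alpha T/K)/\alpha\bigr) = W(\alpha T/K)$. Substituting yields $\E[Z_1]\,\E[\uptau] \leq W(\alpha T/K) + \alpha + \ln 2$, which is the claim. The conceptual obstacle is really the first step: Jensen's inequality relates $\E[\ln\uptau]$ to $\ln\E[\uptau]$ in the wrong direction, so the $-\E[\ln\uptau]$ term in Wald's bound can only be handled via a pathwise estimate, and the boundedness $Z_i\leq\alpha$ provides exactly the pathwise estimate that matches Lambert's~$W$.
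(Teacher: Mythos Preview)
Your proof is correct and takes essentially the same route as the paper. Both combine Wald's identity with the overshoot bound $\sum_{i=1}^{\uptau} Z_i \leq \ln\bigl(T/(K\uptau)\bigr) + \alpha + \ln 2$ and with the same pointwise estimate coming from $\uptau\alpha \geq \sum_{i=1}^{\uptau} Z_i > \ln\bigl(T/(K\uptau)\bigr)$; your inequality $\uptau\alpha > W(\alpha T/K)$ is, via Lambert's identity, exactly equivalent to the paper's $\ln\bigl(T/(K\uptau)\bigr) \leq W(\alpha T/K)$, the only cosmetic difference being that the paper applies this bound before taking expectations rather than after.
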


The random variables $Z_{a,i}$ have positive expectation $\Kinf( \nu_a, \mu^\star - \delta) > 0$
and are bounded by $\alpha = \ln\bigl(1/(1-\mu^\star)\bigr)$; indeed,
since $X_{a,i} \geq 0$ and $\lambda_{a,\delta} \in [0,1]$, we have
\begin{align*}
Z_{a, i} & = \ln \! \left(1 - \lambda_{a,\delta} \frac{X_{a, i} - (\mu^\star - \delta)}{1 - (\mu^\star - \delta)} \right)
\leq \ln \! \left(1 + \lambda_{a,\delta} \frac{\mu^\star - \delta}{1 - (\mu^\star - \delta)} \right)
\\
& \leq \ln \! \left(1 + \frac{\mu^\star - \delta}{1 - (\mu^\star - \delta)} \right)
= \ln \! \left(\frac{1}{1 - (\mu^\star - \delta)} \right)
\leq \ln \! \left(\frac{1}{1 - \mu^\star} \right) \defeq \alpha\,.
\end{align*}
In addition, we imposed that  $T > K/(1-\mu^\star) = K \e^{\alpha}$.
Therefore, Lemma~\ref{lem:hittingtimeexpectation} applies and yields the bound
\begin{align*}
\E [ \uptau_a] & \leq
\frac{1}{\Kinf (\nu_a, \mu^\star - \delta)} \Biggl( W \biggl( \frac{\ln\bigl(1/(1-\mu^\star)\bigr)}{K}\,T\biggr) + \ln\bigl(2/(1-\mu^\star)\bigr) \Biggr) \\
& \leq
\frac{1}{\Kinf (\nu_a, \mu^\star)  - \delta/(1-\mu^\star)}
\Biggl( W \biggl( \frac{\ln\bigl(1/(1-\mu^\star)\bigr)}{K}\,T\biggr) + \ln\bigl(2/(1-\mu^\star)\bigr) \Biggr)\,,
\end{align*}
where the second inequality follows by the regularity inequality~\eqref{eq:regularity_kinf_up} on $\Kinf$
(and the denominator therein is still positive thanks to our assumption on $\delta$).
All in all, we obtained the first part of the bound~\eqref{eq:todo-lnln-proof}
and conclude the proof of the latter based on the decomposition~\eqref{eq:withtau} by showing
that
\begin{equation}
\label{def:betaC}
\E \Bigg[ \sum_{n = \uptau_a + 1}^{T} \bigun{\sum_{i= 1}^{n} Z_{a, i} \leq \ln(T/(Kn))} \Bigg] \leq
\beta \defeq 5 + \frac{1}{1 - \e^{-\Kinf(\nu_a, \mu^\star)^2 / (8 \gamma_\star)}}\,.
\end{equation}
To that end, note that when $\uptau_a < T$, we have by definition of $\uptau_a$,
\[
\ln \bigg( \frac{T}{K\uptau_a} \bigg) < \sum_{i= 1}^{\uptau_a} Z_{a,i}\,.
\]
The following implication thus holds
for any $n \geq \uptau_a$:
\begin{equation}
\sum_{i = 1}^{n} Z_{a, i} \leq \ln \bigg(  \frac{T}{Kn} \bigg)
\qquad \mbox{implies} \qquad
\sum_{i= 1}^{n} Z_{a,i} \leq \ln \bigg(  \frac{T}{Kn} \bigg) \leq
\ln \bigg( \frac{T}{K\uptau_a} \bigg) \leq \sum_{i= 1}^{\uptau_a} Z_{a,i}\,.
\end{equation}
Hence, in this case,
\begin{equation*}
\sum_{i = 1}^{n} Z_{a, i} \leq \ln \bigg(  \frac{T}{Kn} \bigg)
\qquad \mbox{implies} \qquad
\sum_{i = \uptau_a + 1}^{n} Z_{a,i} < 0\,.
\end{equation*}
This, together with a breakdown according to the values of $\uptau_a$
(the case $\uptau_a = T$ does not contribute to the expectation)
and the independence between $\{ \uptau_a = k \}$ and $Z_{a,k+1}, \dots, Z_{a,T}$,
yields
\begin{align}
\nonumber
\lefteqn{\E \Bigg[ \sum_{n = \uptau_a + 1}^{T} \bigun{\sum_{i= 1}^{n} Z_{a, i} \leq \ln(T/(Kn))} \Bigg]
= \E \Bigg[ \1{\uptau_a < T} \sum_{n = \uptau_a + 1}^{T} \bigun{\sum_{i= 1}^{n} Z_{a, i} \leq \ln(T/(Kn))} \Bigg]
} \\
\nonumber
& \leq \E \Bigg[  \1{\uptau_a < T} \sum_{n= \uptau_a + 1}^{T} \bigun{ \sum_{i = \uptau_a + 1}^{n}  Z_{a,i}  < 0}  \Bigg]
= \sum_{k = 1}^{T-1} \, \E \Bigg[ \1{\uptau_a = k} \sum_{n= k + 1}^{T} \bigun{ \sum_{i = k + 1}^{n}  Z_{a,i}  < 0} \Bigg]  \\
\nonumber
& =  \sum_{k = 1}^{T-1} \sum_{n= k + 1}^{T} \P[\uptau_a = k] \,\,\,  \P \Bigg[\sum_{i = k + 1}^{n}  Z_{a,i} < 0 \Bigg] \\
\label{eq:sumtauz}
& =  \sum_{k = 1}^{T-1} \P[\uptau_a = k] \left( \underbrace{\sum_{n= k + 1}^{T} \P \Bigg[\sum_{i = k + 1}^{n}  Z_{a,i}  < 0\Bigg]}_{
\text{we show below } \leq \beta, \text{ see~\eqref{eq:showbelowbeta}}} \right) \leq \beta\,,
\end{align}
where $\beta$ was defined in~\eqref{def:betaC}.

Indeed, we resort to Remark~\ref{rm:prop:kinfconcentration} of Section~\ref{sec:prop:kinfconcentration},
for the $n-k$ variables $Z_{a,k+1},\ldots,Z_{a,n}$ and $x = 0$; we legitimately do so
as $\mu^\star - \delta > \mu_a$ by the imposed condition $\delta < \Delta_a/2$.
Thus, denoting
\[
\gamma_{\star,\delta} = \frac{1}{\sqrt{1 - (\mu^\star-\delta)}}  \Biggl( 16 \e^{-2} + \ln^2 \! \bigg(\frac{1}{1 - (\mu^\star-\delta)} \bigg) \Biggr) \leq \gamma_\star\,,
\]
we have
\begin{align*}
\P\Bigg[\sum_{i = k + 1}^{n}  Z_{a,i}  \leq 0\Bigg] & \leq \max \Bigg\{ \e^{- (n- k)  / 4} , \,\, \exp \! \bigg( -  \frac{n-k}{2\gamma_{\star,\delta}} \Big( \Kinf(\nu_a, \mu^\star - \delta)  \Big)^2 \bigg) \Bigg\} \\
& \leq \e^{- (n- k)  / 4} +
\exp \bigg( -  \frac{n-k}{2\gamma_\star} \Big( \Kinf(\nu_a, \mu^\star - \delta)  \Big)^2 \bigg) \\
& \leq \e^{- (n- k)  / 4} +  \e^{-(n-k)\Kinf(\nu, \mu^\star)^2/ (8 \gamma_\star)}\,,
\end{align*}
where the third inequality follows from~\eqref{eq:regularity_kinf_up} and
the condition $\delta \leq (1-\mu^\star) \, \Kinf(\nu_a, \mu^\star) / 2$ that was imposed:
\begin{equation}
\label{eq:reg-Kinf-lnln}
\Kinf(\nu_a, \mu^\star - \delta) \geq \Kinf(\nu_a, \mu^\star) - \frac{\delta}{1 - \mu^\star} \geq \frac{\Kinf(\nu_a, \mu^\star)}{2}\,.
\end{equation}
We finally get, after summation over $n = k + 1, \ldots, T$,
\begin{equation}
\label{eq:showbelowbeta}
\sum_{n= k + 1}^{T} \P \Bigg[\sum_{i = k + 1}^{n}  Z_{a,i}  \leq 0\Bigg]
\leq \underbrace{\frac{1}{1 - \e^{-1/4}}}_{\leq 5} +
\frac{1}{1 - \e^{-\Kinf(\nu_a, \mu^\star)^2 / (8 \gamma_\star)}}\,,
\end{equation}
which is the inequality claimed in~\eqref{eq:sumtauz}.

It only remains to prove Lemma \ref{lem:hittingtimeexpectation}. \medskip

\begin{proof}{\textbf{of Lemma \ref{lem:hittingtimeexpectation}}}
This lemma was almost stated in \citet[Lemma~18]{honda_non-asymptotic_2015}: our assumptions and result are slightly different (they are tailored to our needs), which is why we provide below a complete proof, with no significant additional merit compared to the original proof.

We consider the martingale $(M_n)_{n \geq 0}$ defined by
\[
M_n = \sum_{i = 1}^n \bigl( Z_i - \E[Z_1] \bigr)\,.
\]
As $\uptau$ is a finite stopping time, Doob's optional stopping theorem entails that
$\E[M_\uptau] = \E[M_0] = 0$, that is,
\[
\E[ \uptau] \,\, \E[Z_1] =
\E \! \left[ \sum_{i = 1}^{\uptau} Z_i \right]\,.
\]
That first step of the proof was exactly similar to the one of
\citet[Lemma~18]{honda_non-asymptotic_2015}. The idea is now to upper bound
the right-hand side of the above equality, which we do by resorting to the
very definition of $\uptau$. An adaptation is needed with respect to the
original argument as the value $\ln\bigl(T/(Kn)\bigr)$ of the barrier varies with $n$.

We proceed as follows. Since $Z_1 \leq \alpha$ and $T \geq K \e^\alpha$ by assumption,
we necessarily have $\uptau \geq 2$; using again the boundedness by
$\alpha$, we have, by definition of $\uptau$, that
\[
\sum_{i = 1}^{\uptau-1} Z_i \leq \log \! \bigg( \frac{T}{K(\uptau -1)} \bigg)
\]
and thus,
\[
\sum_{i = 1}^{\uptau-1} Z_i  + Z_{\uptau} \leq \log \! \bigg( \frac{T}{K(\uptau -1)} \bigg) + \alpha = \log \! \bigg( \frac{T}{K \uptau} \bigg) + \log \! \bigg( \frac{\uptau}{\uptau - 1} \bigg) + \alpha \leq \log \! \bigg( \frac{T}{K \uptau} \bigg) + \log 2 + \alpha\,.
\]
In addition, when $\uptau < T/K$, and again by definition of $\uptau$,
\[
\log \! \bigg( \frac{T}{K \uptau} \bigg) <
\sum_{i= 1}^{\uptau} Z_i \leq \uptau \alpha
\qquad \mbox{thus} \qquad
0 < \frac{T}{K \uptau} \, \log \! \bigg( \frac{T}{K \uptau} \bigg) \leq \frac{T\alpha}{K}\,.
\]
Applying the increasing function $W$ to both sides of the latter inequality, we get,
when $\uptau < T/K$,
\[
\log \! \bigg( \frac{T}{K \uptau} \bigg) \leq W\bigg( \frac{T \alpha}{K} \bigg)\,.
\]
This inequality also holds when $\uptau \geq T/K$ as the left-hand side then is non-positive,
while the right-hand side is positive.
Putting all elements together, we successively proved
\[
\E[ \uptau] \, \E[Z_1] =
\E \! \left[ \sum_{i = 1}^{\uptau} Z_i \right]
\leq W\bigg( \frac{T \alpha}{K} \bigg) + \log 2 + \alpha\,,
\]
which concludes the proof.
\end{proof}

\section{Proof of the Variational Formula (Lemma~\ref{lm:variationnal_formula_kinf})}
\label{sec:formulevar}

The proof of \citet[Theorem~2, Lemma~6]{honda_non-asymptotic_2015}
relies on the exhibiting the formula of interest for finitely supported distributions,
via KKT conditions, and then taking limits to cover the case of all distributions.
We propose a more direct approach that does not rely on discrete approximations
of general distributions.

But before we do so, we explain why it is natural to
expect to rewrite $\Kinf$, which is an infimum, as a maximum.
Indeed, given that Kullback-Leibler divergences are given by a supremum, $\Kinf$ appears as an $\inf \sup$,
which under some conditions (this is Sion's lemma) is equal to a $\sup \inf$.

More precisely, a variational formula for the Kullback-Leibler divergence, see \citet[Chapter~4]{boucheron:hal-00794821}, has it that
\begin{equation}
\label{eq:KLmax}
\KL(\nu, \nu') = \sup \Bigl\{ \E_{\nu}[Y] - \ln \E_{\nu'} \big[\e^{Y}\big] : \ Y \ \mbox{s.t.} \ \E_{\nu'}[\e^{Y}] < + \infty \Bigr\}\,,
\end{equation}
where (only here and in the next few lines) we index the expectation with respect to the assumed distribution of the random variable $Y$.
In particular, denoting by $X$ the identity over $[0,1]$ and considering, for $\lambda \in [0,1]$, the
variables bounded from above
\[
Y_\lambda = \ln \! \Bigg( 1 - \lambda \frac{X - \mu}{1 - \mu} \Bigg) \leq \ln \! \Bigg( 1 +  \frac{\lambda\mu}{1- \mu} \Bigg)\,,
\]
we have, for any probability measure $\nu'$ such that $\Ed(\nu') > \mu$:
\[
\ln \E_{\nu'} \big[\e^{Y_\lambda}\big]  = \ln \! \Bigg(\E_{\nu'}\Bigg[ 1 - \lambda \frac{X - \mu}{1 - \mu} \Bigg] \Bigg) =
\ln \! \Bigg(1  - \lambda \frac{\Ed(\nu') - \mu}{1 - \mu} \Bigg) \leq 0\,.
\]
Hence, for these distributions $\nu'$,
\[
\KL(\nu, \nu') \geq \sup_{\lambda\in [0,1]} \Bigl\{ \E_{\nu}[Y_\lambda] - \ln \E_{\nu'} \big[\e^{Y_\lambda}\big] \Bigr\} \geq
\sup_{\lambda\in [0,1]} \E_{\nu} \Bigg[  \ln \! \Bigg( 1 - \lambda \frac{X - \mu}{1 - \mu} \Bigg) \Bigg]\,,
\]
and by taking the infimum over all distributions $\nu'$ with $\Ed(\nu') > \mu$:
\begin{equation}
\label{eq:converse}
\Kinf(\nu, \mu) \geq \sup_{\lambda\in [0,1]} \E_{\nu} \Bigg[  \ln \! \Bigg( 1 - \lambda \frac{X - \mu}{1 - \mu} \Bigg) \Bigg]\,.
\end{equation}

\paragraph{Outline.}
We now only need to prove the converse inequality
to get the rewriting~\eqref{eq:variationnal_formula_kinf} of
Lemma~\ref{lm:variationnal_formula_kinf}, which we will do
in Section~\ref{sec:FVleq}.
Before that, in Section~\ref{sec:FVstudy}, we prove the second statement of
Lemma~\ref{lm:variationnal_formula_kinf} together with several
useful facts for the proof provided in Section~\ref{sec:FVleq},
including the fact that the supremum in the right-hand side
of~\eqref{eq:converse} is achieved.
We conclude in Section~\ref{sec:FValt} with
an alternative (sketch of) proof of the inequality~\eqref{eq:converse},
not relying on the variational formula~\eqref{eq:KLmax} for the Kullback-Leibler divergences.

\subsection{A Function Study}
\label{sec:FVstudy}

Let $X$ denote a random variable with distribution $\nu \in \pset$.
We recall that $\mu \in (0,1)$.
The following function is well defined:
\[
H : \lambda \in [0,1] \longmapsto \E\Bigg[ \ln \! \bigg(1 - \lambda \frac{X - \mu}{1 - \mu}\bigg) \Bigg] \in \mathbb{R} \cup \{-\infty\}\,.
\]
Indeed, since $X \in [0,1]$, the random variable $\ln \bigl(1 - \lambda(X - \mu)/(1 - \mu)\bigr)$ is bounded from above by
$\ln \bigl(1+\lambda \mu /(1-\mu) \bigr)$. Hence, $H$ is well defined.
For $\lambda \in [0,1)$, the considered random variable is bounded from below
by $\ln(1-\lambda)$, hence $H$ takes finite values. For $\lambda = 1$, we possibly have that
$H(1)$ equals $-\infty$ (this is the case in particular when $\nu\{1\} > 0$).

We begin by a study of the function $H$. \medskip

\begin{lemma}
\label{lemma:factsH}
Assume $\mu \in (0,1)$.
The function $H$ is continuous and strictly concave on $[0,1]$,
differentiable at least on $[0,1)$,
and its derivative $H'(1)$ can be defined at~$1$, with $H'(1) \in \mathbb{R} \cup \{ - \infty \}$.
We have the closed-form expression: for all $\lambda \in [0,1]$,
\begin{equation}\label{eq:h'}
H'(\lambda) =
- \E \Bigg[ \bigg(\frac{X- \mu}{1 - \mu} \bigg)\frac{1}{1 - \lambda \frac{X-\mu}{1 - \mu}} \Bigg]
=
\frac{1}{\lambda}
\Bigg( 1 - \E \Bigg[  \frac{1}{1 - \lambda \frac{X- \mu}{1 - \mu}}\Bigg] \Bigg)\,.
\end{equation}
It reaches a unique maximum over $[0,1]$, denoted by $\lambda^\star$,
\[
\mathop{\mathrm{arg\,max}}_{0 \leq \lambda \leq 1} H(\lambda) = \{ \lambda^\star \}\,,
\]
that satisfies $\lambda^\star > 0$ and
at which $H'(\lambda^\star) = 0$ if $\lambda^\star \in (0,1)$
and $H'(\lambda^\star) \geq 0$ if $\lambda^\star = 1$. \medskip

Moreover, under the additional condition $\Ed(\nu) < \mu$,
\[
\E \Bigg[  \frac{1}{1 - \lambda^\star \frac{X- \mu}{1 - \mu}}\Bigg] = 1
\ \ \mbox{if } \lambda^\star \in (0,1)
\qquad \mbox{and} \qquad
\E \Bigg[  \frac{1}{1 - \lambda^\star \frac{X- \mu}{1 - \mu}}\Bigg] =
\E \Bigg[  \frac{1 - \mu}{1 - X}\Bigg] \leq 1
\ \ \mbox{if } \lambda^\star = 1\,.
\]
In particular, $\nu\{1\} = 0$ in the case $\lambda^\star = 1$.
\end{lemma}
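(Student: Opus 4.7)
The plan is to prove the assertions in the order they are stated, relying throughout on a pointwise analysis of the integrand of $H$ combined with standard limit-exchange theorems.

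For each fixed $x \in [0,1]$, the function $\lambda \mapsto \ln\bigl(1 - \lambda(x-\mu)/(1-\mu)\bigr)$ is continuous on $[0,1]$ (with the value $-\infty$ only occurring when $\lambda = 1$ and $x = 1$) and strictly concave on $[0,1)$ as soon as $x \neq \mu$. Integrating against $\nu$, and excluding the trivial case $\nu = \delta_\mu$, I would deduce strict concavity of $H$ at once. For continuity of $H$ on $[0,1)$, I would use dominated convergence on any compact subinterval $[0,1-\varepsilon]$, on which the integrand is uniformly bounded in $x$. Continuity at $\lambda = 1$ is more delicate: splitting the expectation according to whether $X \leq \mu$ or $X > \mu$, the integrand is monotone in $\lambda$ on each piece, and monotone convergence then applies.

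Differentiability on $[0,1)$ and the first expression for $H'(\lambda)$ follow from dominated convergence applied to the partial derivative of the integrand, which is uniformly bounded in $x$ on compact subsets of $[0,1)$. The second expression for $H'(\lambda)$ is a purely algebraic consequence of the identity $-y/(1-y) = \lambda^{-1}\bigl(1 - 1/(1-y)\bigr)$ applied with $y = \lambda(x-\mu)/(1-\mu)$. The value $H'(1)$ would then be defined as the left limit $H'(1^-)$, which exists in $\mathbb{R} \cup \{-\infty\}$ because $H'$ is decreasing on $[0,1)$ by concavity.

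For existence and uniqueness of the maximizer: $H$ is upper semi-continuous on the compact interval $[0,1]$ (with values in $\mathbb{R} \cup \{-\infty\}$), hence attains its supremum, and strict concavity ensures uniqueness of the maximizer $\lambda^\star$. The first-order conditions are then elementary calculus: at an interior maximizer $\lambda^\star \in (0,1)$ one has $H'(\lambda^\star) = 0$; at $\lambda^\star = 1$, the left derivative satisfies $H'(1) \geq 0$.

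The main obstacle will be handling the boundary case $\lambda^\star = 1$ in the ``moreover'' part under $\Ed(\nu) < \mu$. The assumption $\Ed(\nu) < \mu$ yields $H'(0) = (\mu - \Ed(\nu))/(1-\mu) > 0$, so $\lambda^\star > 0$; in the interior case $\lambda^\star \in (0,1)$ the second form of the equation $H'(\lambda^\star) = 0$ immediately gives $\E\bigl[1/(1-\lambda^\star(X-\mu)/(1-\mu))\bigr] = 1$. For $\lambda^\star = 1$, I would first observe that $H(1) \geq H(0) = 0 > -\infty$ while an atom $\nu\{1\} > 0$ would force $H(1) = -\infty$, hence $\nu\{1\} = 0$. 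Then, using $1 - (x-\mu)/(1-\mu) = (1-x)/(1-\mu)$ and taking $\lambda \uparrow 1$ in the second expression for $H'(\lambda)$ via monotone convergence on $\{X > \mu\}$ and dominated convergence on $\{X \leq \mu\}$, I would obtain $H'(1) = 1 - \E\bigl[(1-\mu)/(1-X)\bigr]$; combined with $H'(1) \geq 0$, this yields the desired inequality $\E\bigl[(1-\mu)/(1-X)\bigr] \leq 1$.
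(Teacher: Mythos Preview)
Your proof is correct and follows essentially the same route as the paper's: dominated convergence for continuity and differentiability on $[0,1)$, a split into $\{X \leq \mu\}$ and $\{X > \mu\}$ with monotone convergence at the endpoint $\lambda = 1$, strict concavity for uniqueness of $\lambda^\star$, and the first-order conditions together with the second form of $H'$ to read off the expectation identities. Your justification that $\nu\{1\} = 0$ when $\lambda^\star = 1$ (via $H(1) \geq H(0) = 0 > -\infty$) is in fact more explicit than what the paper writes out; note only a harmless slip in your algebraic identity, which should read $-c/(1-\lambda c) = \lambda^{-1}\bigl(1 - 1/(1-\lambda c)\bigr)$ with $c = (x-\mu)/(1-\mu)$.
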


Note that $\Kinf(\nu,\mu) = 0$ when $\mu \leq \Ed(\nu)$. In this case,
necessarily $\lambda^\star = 0$ (there is a unique maximum) and we still have
\[
\E \Bigg[  \frac{1}{1 - \lambda^\star \frac{X- \mu}{1 - \mu}}\Bigg] = 1\,.
\]
This concludes the proof of the statement~\eqref{eq:varform-leq1}
of Lemma~\ref{lm:variationnal_formula_kinf}. \medskip

\begin{proof}
For the continuity of $H$, we note that the discussion before the statement of the
lemma entails that the random variables $\ln \bigl(1 - \lambda(X - \mu)/(1 - \mu)\bigr)$
are uniformly bounded on ranges of the form $[0,\lambda_0]$ for $\lambda_0 < 1$.
By a standard continuity theorem under the integral sign, this proves that
$H$ is continuous on $[0,1)$. For the continuity at $1$, we separate the $H(\lambda)$
and $H(1)$ into two pieces, for which monotone convergences take place:
\begin{align*}
& \lim_{\lambda \to 1} \E \Bigg[ \log\! \bigg( 1 - \lambda \frac{X- \mu}{1 - \mu} \bigg) \1{X \in [0,\mu]} \Bigg] = \E\Bigg[ \ln \! \bigg(\frac{1- X}{1- \mu} \bigg)\1{X \in [0,\mu]} \Bigg]\,,  \\
& \lim_{\lambda \to 1}  \E \Bigg[ \log \! \bigg( 1 - \lambda \frac{X- \mu}{1 - \mu} \bigg) \1{X \in (\mu,1]} \Bigg] = \E\Bigg[ \ln \! \bigg(\frac{1- X}{1- \mu}\bigg)\1{X \in (\mu,1]} \Bigg]\,,
\end{align*}
where the first expectation is finite (but the second may equal $-\infty$).

The strict concavity of $H$ on $[0,1]$ follows from the one of $\ln$ on $(0,1]$ and from the continuity
of $H$ on $[0,1]$.

For $\lambda \in [0, 1)$, we get, by legitimately differentiating under the expectation,
\[
H'(\lambda) = - \E \Bigg[ \bigg(\frac{X- \mu}{1 - \mu} \bigg)\frac{1}{1 - \lambda \frac{X-\mu}{1 - \mu}} \Bigg]
= \frac{1}{\lambda}
\Bigg( 1 - \E \Bigg[  \frac{1}{1 - \lambda \frac{X- \mu}{1 - \mu}}\Bigg] \Bigg)\,.
\]
Indeed as long as $\lambda < 1$, the random variables in the expectations above are uniformly bounded on ranges of the form $[0,\lambda_0]$ for $\lambda_0 < 1$, so that we may invoke a standard differentiation theorem under the integral sign. A similar argument of double monotone convergences as above shows that
$H'(\lambda)$ has a limit value as $\lambda \to 1$, with
\[
\lim_{\lambda \to 1} H'(\lambda) = - \E \Bigg[  \frac{X - \mu}{1 - X}\Bigg]\,.
\]
By a standard limit theorem on derivatives,
when the above value is finite, $H$ is differentiable at $1$ and $H'(1)$ equals the limit above; otherwise, $H$ is not differentiable
at $1$ but we still denote $H'(1) = -\infty$.

Since $H$ is strictly concave on $[0, 1]$ and continuous, it reaches its maximum exactly once on $[0,1]$.
Now, given the condition $\Ed(\nu) < \mu$, we have
\[
H'(0) = - \frac{\Ed(\nu) - \mu}{1 - \mu} > 0\,.
\]
As $H$ is concave, $H'$ is decreasing: either $H'(1) \geq 0$ and $H$ reaches its maximum at $\lambda^\star = 1$, or $H'(1) < 0$ and $H$
reaches its maximum on the open interval $(0, 1)$. It may be proved (by a standard continuity theorem under the integral sign)
that $H'$ is continuous on $[0,1)$, that is, that $H$ is continuously differentiable on $[0,1)$.
In the case $H'(1) < 0$,
the derivative at the maximum therefore satisfies $H'(\lambda^\star) = 0$.

Substituting the expressions~\eqref{eq:h'} for $H'(\lambda^\star)$ provides
the final equality or inequality to~$1$ stated (depending on whether $\lambda^\star < 1$
or $\lambda^\star = 1$). In the case $\lambda^\star = 1$, we thus have
$1-\mu \in (0,1)$ and $1-X \in [0,1]$ with
\[
\E \Bigg[  \frac{1 - \mu}{1 - X}\Bigg] \leq 1\,;
\]
this prevents $X$ from taking the value $1$ with positive probability (otherwise, the
expectation would be $+\infty$). Put differently, $\nu\{1\} = 0$.
\end{proof}

\subsection{Proof of $\leq$ in Equality~\eqref{eq:variationnal_formula_kinf}}
\label{sec:FVleq}

We keep the notation introduced in the previous section.
To prove this inequality, by the rewriting of $\Kinf(\nu,\mu)$ stated in Corollary~\ref{cor:left-cont},
it is enough to show that there
exists a probability measure $\nu'$ on $[0,1]$ such that $\Ed(\nu') \geq \mu$ and $\nu \ll \nu'$ and
\begin{equation}
\KL(\nu, \nu') \leq \E \Bigg[\ln\! \Bigg( 1 - \lambda^\star \frac{X- \mu}{1 - \mu} \Bigg) \Bigg]\,.
\end{equation}
Given the definition of the $\KL$ divergence, it suffices to find a probability measure $\nu'$ on $[0,1]$
such that $\Ed(\nu') \geq \mu$ and $\nu \ll \nu'$ and
\begin{equation}
\label{eq:dens:nunup}
\frac{\dmes \nu}{\dmes \nu'}(x) = 1 - \lambda^\star \frac{x - \mu}{1 - \mu} \qquad \nu\text{--a.s.}
\end{equation}

It can be shown (proof omitted as this statement is only given to explain the
intuition behind the proof) that
\begin{equation}
\label{eq:ac}
\frac{\dmes \nu}{\dmes \nu'} > 0 \quad \nu\text{--a.s.,}
\qquad \text{with} \qquad
\frac{\dmes \nu'_{\ac}}{\dmes \nu} = \biggl( \frac{\dmes \nu}{\dmes \nu'} \biggr)^{-1}
\quad \nu\text{--a.s.,}
\end{equation}
where $\nu'_{\ac}$ denotes the absolute part of $\nu'$ with respect to $\nu$.
This is why we introduce the measure $\nu'$ on $[0,1]$ defined by
\begin{equation}
\label{eq:defnuprime}
\d\nu'(x) =
\underbrace{\frac{1}{1 - \lambda^\star \frac{x - \mu}{1 - \mu}}}_{\geq 0} \, \dmes \nu(x) +
\Bigg( 1 - \E \Bigg[\frac{1}{1 - \lambda^\star \frac{X-\mu}{1- \mu}}\Bigg] \Bigg) \, \dmes \delta_1(x)\,,
\end{equation}
where $\delta_1$ denotes the Dirac point-mass distribution at $1$ and where $X$ denotes
a random variable with distribution $\nu$.
The measure $\nu'$ is a probability measure as by Lemma~\ref{lemma:factsH},
\[
\E \Bigg[\frac{1}{1 - \lambda^\star \frac{X-\mu}{1- \mu}}\Bigg] \leq 1\,.
\]

Now, we show first that $\nu \ll \nu'$ with the density~\eqref{eq:dens:nunup}.
We do so by distinguishing two cases. If $\lambda^\star \in [0,1)$, then by the last statement of Lemma~\ref{lemma:factsH},
the probability measure $\nu'$ is actually defined by
\[
\d\nu'(x) =
\underbrace{\frac{1}{1 - \lambda^\star \frac{x - \mu}{1 - \mu}}}_{> 0} \, \dmes \nu(x)\,,
\]
and the strict positivity underlined in the equality above ensures the desired result by
a standard theorem on Radon-Nikodym derivatives. In that case, $\nu$ and $\nu'$ are actually equivalent measures:
$\nu \ll \nu'$ and $\nu' \ll \nu$.
If $\lambda^\star = 1$, then again by Lemma~\ref{lemma:factsH}, we know that $\nu$ does not put any probability mass at $1$.
The strict positivity of $f(x) = 1 - (x-\mu)/(1-\mu)$ on $[0,1)$ and
the fact that $\nu\{1\} = 0$ ensure the first equality below:
for all Borel subsets $A$ of $[0,1]$,
\[
\nu(A) = \bigintsss \mathds{1}_{A} \, f \frac{1}{f} \d\nu
= \bigintsss \mathds{1}_{A} \, f \left( \frac{1}{f} \d\nu + r \d\delta_1 \right)
= \int \mathds{1}_{A} \, f \d\nu'
\]
while the second equality follows from $f(1) = 0$ and the third equality
is by definition of~$\nu'$.
Put differently, $\nu \ll \nu'$ with the density $f$ claimed in~\eqref{eq:dens:nunup}.
In that case, $\nu \ll \nu'$ but $\nu'$ is not necessarily absolutely continuous with respect to $\nu$.

We conclude this proof by showing that $\Ed(\nu') \geq \mu$.
We recall that Lemma~\ref{lemma:factsH} ensures
\begin{align*}
\E \Bigg[ \bigg(\frac{X- \mu}{1 - \mu} \bigg)\frac{1}{1 - \lambda^\star \frac{X-\mu}{1 - \mu}} \Bigg] & = - H'(\lambda^\star) \\
\mbox{and} \qquad\qquad\qquad \E \Bigg[ \frac{1}{1 - \lambda^\star \frac{X-\mu}{1 - \mu}} \Bigg] & = 1 - \lambda^\star \, H'(\lambda^\star)\,,
\end{align*}
where $X$ denotes a random variable with distribution $\nu$ and where
both expectations are well defined (possibly with values $+\infty$ when $\lambda^\star = 1$).
Therefore,
\begin{align*}
\Ed(\nu') & = \overbrace{\E \Bigg[ \frac{X}{1 - \lambda^\star \frac{X-\mu}{1 - \mu}} \Bigg]}^{\text{``$\nu$ part of $\nu'$''}}
+ \overbrace{\Bigg( 1 - \E \Bigg[\frac{1}{1 - \lambda^\star \frac{X-\mu}{1- \mu}}\Bigg] \Bigg)}^{\text{``$\delta_1$ part of $\nu'$''}} \\
& = (1-\mu) \,\, \E \Bigg[ \bigg(\frac{X-\mu}{1 - \mu} \bigg)\frac{1}{1 - \lambda^\star \frac{X-\mu}{1 - \mu}} \Bigg]
+ \mu \,\, \E \Bigg[ \frac{1}{1 - \lambda^\star \frac{X-\mu}{1 - \mu}} \Bigg]
+ \Bigg( 1 - \E \Bigg[\frac{1}{1 - \lambda^\star \frac{X-\mu}{1- \mu}}\Bigg] \Bigg) \\
& = - (1-\mu) \, H'(\lambda^\star) + \mu \bigl( 1 - \lambda^\star \, H'(\lambda^\star) \bigr) + \lambda^\star \, H'(\lambda^\star) \\
& = \mu - \bigl( (1-\mu) \, (1-\lambda^\star) \, H'(\lambda^\star) \bigr)\,,
\end{align*}
where the first equality is justified in the case $\lambda^\star = 1$ by the same arguments of monotone
convergence as in the proof of Lemma~\ref{lemma:factsH}.
All in all, we have $\Ed(\nu') \geq \mu$ as desired if and only if $(1-\lambda^\star) \, H'(\lambda^\star) \leq 0$.
This is the case as we actually have $(1-\lambda^\star) \, H'(\lambda^\star) = 0$
in all cases, i.e., whether $\lambda^\star = 1$ or $\lambda^\star \in [0,1)$.

\subsection{Alternative Proof of $\geq$ in Equality~\eqref{eq:variationnal_formula_kinf}}
\label{sec:FValt}

We use the notation of Sections~\ref{sec:FVstudy} and~\ref{sec:FVleq}
and prove the desired inequality~\eqref{eq:converse}, that is, the $\geq$ part
of the equality~\eqref{eq:variationnal_formula_kinf}, without resorting to
the variational formula~\eqref{eq:KLmax} for the Kullback-Leibler divergences.
Actually, we only provide a sketch of proof and omit proofs of some
facts about Radon-Nikodym derivatives.

Let $\nu'' \in \pset$ be such that
$\Ed(\nu'') > \mu$ and $\nu \ll \nu''$;
with no loss of generality, we assume that $\KL(\nu,\nu'') < +\infty$.
By the definition~\eqref{eq:defnuprime} of $\nu'$ and the discussion following this definition,
the divergence $\KL(\nu,\nu')$ equals the maximum of the continuous function $H$ over $[0,1]$ and therefore also
satisfies $\KL(\nu,\nu') < +\infty$.
We denote by $\mathbb{L}_1(\nu)$ the set of $\nu$--integrable
random variables. That the divergences $\KL(\nu,\nu'')$ and $\KL(\nu,\nu')$ are finite exactly means that
\[
\left| \ln \frac{\d\nu}{\d\nu'} \right| \in \mathbb{L}_1(\nu)
\qquad \text{and} \qquad
\left| \ln \frac{\d\nu}{\d\nu''} \right| \in \mathbb{L}_1(\nu)\,.
\]
Hence,
\[
\KL(\nu,\nu'') - \KL(\nu,\nu') = - \bigintsss \left( \ln \frac{\d\nu}{\d\nu'} - \ln \frac{\d\nu}{\d\nu''} \right) \!\d\nu\,.
\]
Now, by~\eqref{eq:dens:nunup},
\[
\ln \frac{\d\nu}{\d\nu'}(x) = \ln \biggl( 1 - \lambda^\star \frac{x - \mu}{1 - \mu} \biggr) \qquad \nu\text{--a.s.,}
\]
and by~\eqref{eq:ac},
\[
- \ln \frac{\d\nu}{\d\nu''} =
\ln \frac{\d\nu''_{\ac}}{\d\nu}(x) \qquad \text{$\nu$--a.s.,}
\]
so that
\begin{align*}
\KL(\nu,\nu'') - \KL(\nu,\nu') & = - \bigintss \ln \Biggl( \biggl(
1 - \lambda^\star \frac{x - \mu}{1 - \mu} \biggr) \frac{\d\nu''_{\ac}}{\d\nu}(x) \! \Biggr) \! \d\nu(x) \\
& \geq - \ln \left(
\bigintsss \biggl( \underbrace{1 - \lambda^\star \frac{x - \mu}{1 - \mu}}_{\geq 0} \biggr) \underbrace{\frac{\d\nu''_{\ac}}{\d\nu}(x) \, \d\nu(x)}_{\d\nu''_{\ac}(x)} \right) \\
& \geq - \ln \left( \underbrace{\bigintsss \biggl( 1 - \lambda^\star \frac{x - \mu}{1 - \mu} \biggr) \d\nu''(x)}_{\leq 1 \text{ as } \Ed(\nu'') > \mu} \right)
\geq 0
\end{align*}
where Jensen's inequality provided the first inequality, while the second one followed by increasing the integral in the logarithm.
Taking the infimum over distributions $\nu'' \in \pset$ with
$\Ed(\nu'') > \mu$ and $\nu \ll \nu''$ and $\KL(\nu,\nu'') < +\infty$, we proved
\[
\Kinf(\nu,\mu) - \KL(\nu,\nu') \geq 0\,,
\]
which was the desired result.

\vskip 0.2in
\bibliography{KL-UCB-bib}

\end{document}